%% file: main.tex
\documentclass{article}
\usepackage[nonatbib, final, main]{neurips_2026}

\usepackage[utf8]{inputenc}
\usepackage[toc,page,header]{appendix}
\usepackage{minitoc}
\usepackage{amsmath,amssymb,amsthm,mathtools}
\usepackage{hyperref}
\usepackage{cleveref}
\usepackage{thmtools}
\usepackage{xcolor}
\usepackage{enumitem}
\usepackage{graphicx}
\usepackage{booktabs}
\usepackage{threeparttable}
\usepackage{multirow}
\usepackage{graphicx}
\usepackage{subcaption}
\graphicspath{{figures/}}
\usepackage[most]{tcolorbox}
\tcbset{
  emphblock/.style={
    colback=blue!3, colframe=blue!55!black, boxrule=0.6pt,
    arc=2pt, left=8pt, right=8pt, top=6pt, bottom=6pt,
    breakable, enhanced
  }
}
\newcounter{condition}[section]

\PassOptionsToPackage{numbers, compress}{natbib}

\usepackage{mysymbol}

\newtheorem{theorem}{Theorem}[section]
\newtheorem{lemma}[theorem]{Lemma}
\newtheorem{proposition}[theorem]{Proposition}

\newtheorem{assumption}{Assumption}
\newtheorem{remark}[theorem]{Remark}
\newcommand{\R}{\mathbb{R}}
\newcommand{\E}{\mathbb{E}}
\newcommand{\Prob}{\mathbb{P}}
\newcommand{\cL}{\mathcal{L}}

\newcommand{\cD}{\mathcal{D}}
\newcommand{\cN}{\mathcal{N}}
\newcommand{\cW}{\mathcal{W}}
\newcommand{\cR}{\mathcal{R}}
\newcommand{\supp}{\mathrm{supp}}
\newcommand{\bW}{\mathbf{W}}
\newcommand{\bx}{\boldsymbol{x}}
\newcommand{\bh}{\boldsymbol{h}}

\newcommand{\bv}{\boldsymbol{v}}
\newcommand{\br}{\boldsymbol{r}}
\newcommand{\be}{\boldsymbol{e}}
\newcommand{\bz}{\boldsymbol{z}}

\newcommand{\bu}{\boldsymbol{u}}

\newcommand{\norm}[1]{\left\| #1 \right\|}
\newcommand{\goodness}{G}

\DeclareMathOperator{\ELC}{ELC}
\DeclareMathOperator{\erank}{erank}

\title{The Price of Locality: Why Forward-Forward \\
Underperforms Backpropagation?}

\author{
    Zhaoxian Wu, Haichuan Liu,
    {Tianyi Chen}
     \\
    Cornell Tech and Cornell University \\
    \texttt{\{zw868, hl2622, tianyi.chen\}@cornell.edu} \\
  }

\begin{document}
\maketitle
\begingroup
\renewcommand{\thefootnote}{}
\footnotetext{The work was supported by the National Science Foundation Projects 2532349 and 2532653, the NVIDIA Academic Grant, and the Cisco Research Award.}
\endgroup

\doparttoc %
\faketableofcontents %
\input{body}
\newpage
\appendix

\begin{center}
  {\LARGE\bfseries Supplementary Material}
\end{center}
\vspace{-4em}
{
\renewcommand{\thepart}{}
\renewcommand{\partname}{}
\part{}
}
\parttoc 

\newpage

\input{appendix/A_proofs}
\input{appendix/C_assumption_compat}
\input{appendix/D_extended_experiments}
\input{appendix/F_taxonomy}
\input{appendix/G_acknowledgement}

\clearpage

\end{document}

%% file: body.tex
\vspace{-1em}
\begin{abstract}
The Forward-Forward Algorithm (FFA) replaces backpropagation (BP) with layer-wise local contrastive objectives, eliminating the backward pass and the need to retain intermediate activations, yet suffers a persistent performance gap with BP that worsens with depth. This paper diagnoses two structural failure modes: an optimization floor arising from concurrent local updates; and a geometric collapse of layer representations driven by the local update mechanism. On the optimization side, we prove that the FFA loss satisfies the Polyak--{\L}ojasiewicz inequality at each layer; however, simultaneous layer updates induce inter-layer representation-distribution drift, so each layer optimizes against a moving input distribution and incurs an error floor. On the representational side, the pairwise similarity kernel of layer representations contracts exponentially toward rank one as depth increases, collapsing the diversity of per-layer error signals. This collapse bounds FFA's \emph{effective learning capacity}, which measures the diversity of gradient information across layers, independently of depth, whereas BP's chain-rule signal preserves per-layer diversity, yielding a capacity that scales with depth.
\end{abstract}
\section{Introduction}
\label{sec:intro}

Backpropagation (BP) is the de facto standard for training deep neural networks, but it relies on globally propagated error signals: intermediate activations must be stored until the backward pass, which prevents asynchronous updates and incurs substantial memory overhead. The Forward-Forward Algorithm (FFA)~\cite{hinton2022forward} sidesteps these costs by training each layer with a \emph{local} contrastive objective---increasing a ``goodness'' measure for positive samples and decreasing it for negative ones---with no gradient flowing across layers. This locality is attractive for noisy analog hardware~\cite{wu2024distance,oguz2023optical}, pipeline-parallel training~\cite{aktemur2024distributed}, and continual learning~\cite{terres2024continual}. Intuitively, one might expect such locality to incur only a negligible accuracy cost: although FFA does not access the global task loss, it injects supervisory information directly into every layer, avoiding information loss during backpropagation~\cite{cover2006elements}. However, across the original algorithm and more recent FFA variants, empirical studies repeatedly report a persistent gap to BP on nontrivial benchmarks and deeper architectures~\cite{hinton2022forward,torres2025advancements,scodellaro2025cnn,krutsylo2025scalable}. What remains missing is a systematic understanding of how large the gap is and, more fundamentally, a mechanism-level explanation of why FFA fails when scale increases. Understanding why FFA underperforms at scale is therefore not merely an empirical curiosity but also a practical necessity that informs the design of better local learning algorithms and clarifies the price of eliminating the backward pass.

This paper studies FFA in the \emph{scaled ResNet} setting.
Consider an $L$-layer scaled residual network. Layer $\ell$ has weight $W^{(\ell)} \in \R^{d^{(\ell)} \times d^{(\ell-1)}}$ and activation is computed by the residual update
\begin{align}
    \label{eq:resnet_arch}
    \bh^{(\ell)}(\bx)
    =
    \bh^{(\ell-1)}(\bx)
    +
    \frac1L\phi(\text{LN}(W^{(\ell)} \bh^{(\ell-1)}(\bx))),
\end{align}
where $\phi$ is ReLU/GELU and 
$\text{LN}(\bz) := \frac{\bz - \mu(\bz)\mathbf{1}}{\sqrt{\frac{1}{d}\|\bz - \mu(\bz)\mathbf{1}\|_2^2 + \varepsilon}}$ with $\mu(\bz) := \frac{1}{d}\mathbf{1}^{\top}\bz$
is layer normalization \cite{ba2016layer} applied coordinate-wise to $\bz \in \R^d$ (without learnable affine parameters). Unlike backpropagation, FFA trains each layer using only \emph{local} forward-pass information, via a contrastive objective.
A \emph{goodness function} $\goodness: \R^{d} \to \R$ maps an activation vector to a scalar measuring its quality. 
The goodness of layer $\ell$ is $\goodness^{(\ell)}(\bx; W^{(\ell)}) := \goodness(\bh^{(\ell)}(\bx))$. The \emph{goodness threshold} $\theta > 0$ is a per-layer decision boundary: layer~$\ell$ classifies input~$\bx$ as positive when $\goodness^{(\ell)}(\bx) > \theta$ and negative when $\goodness^{(\ell)}(\bx) < \theta$. Given positive distribution $\cD^+$ and negative distribution $\cD^-$, FFA optimizes that layer-wise loss:
\begin{align}\label{eq:ffa_loss}
    \small
    \cL^{(\ell)}(W^{(\ell)}) := 
    \E_{\bx^+, \bx^-}\!\left[-\log \sigma\!\left(\goodness^{(\ell)}(\bx^+; W^{(\ell)}) - \theta\right)-\log \sigma\!\left(\theta - \goodness^{(\ell)}(\bx^-; W^{(\ell)})\right)\right],
\end{align}
where $\sigma(z) = 1/(1+e^{-z})$ is the sigmoid function.
The two terms penalize positive samples with low goodness and negative samples with high goodness, respectively. 
FFA trains layers concurrently: at each step, the forward pass produces both positive and negative samples $(\bx^{+}, \bx^{-})$ and their corresponding activations $(\bh^{+}, \bh^{-})$. Built on that, the following \emph{FFA iteration} is applied
\begin{align}\label{eq:ffa_update}
    W_{k+1}^{(\ell)} = W_k^{(\ell)} - \eta^{(\ell)} \nabla {\cL}^{(\ell)}(W_k^{(\ell)}), \qquad \ell\in [L].
\end{align}
Unlike BP, where the $\ell$-th layer's gradients depend on the weights from all previous layers, FFA computes gradients only for the current layer, i.e., no gradient of $\cL^{(\ell)}$ with respect to $W^{(\ell')}$ for all $\ell' \neq \ell$ is computed. Consequently, it is unnecessary to store activations $\bh^{(\ell)}(\bx)$ for backward.

\subsection{Main Results.} 
Existing empirical studies report a persistent gap between FFA and BP on nontrivial benchmarks and deeper architectures~\cite{hinton2022forward,torres2025advancements,scodellaro2025cnn,krutsylo2025scalable},
indicating that FFA's limitations become more pronounced as task complexity and scale increase. 
These findings prompt a critical question:
\begin{center}
    \textbf{Q)} {\em What is the fundamental mechanism that causes FFA to degrade compared to backpropagation? }
\end{center}

This paper answers the question by identifying two distinct but coupled failure modes of FFA: \textbf{optimization floor} and \textbf{representational collapse}.

\paragraph{R1) Convergence analysis: PL and the error plateau.}
We prove that the FFA loss $\cL^{(\ell)}$ satisfies the Polyak--{\L}ojasiewicz (PL) inequality~\cite{polyak1963gradient,karimi2016linear} inside a neighborhood of initialization (Theorem~\ref{thm:pl_single}), with the PL constant determined by the minimum eigenvalue of the goodness-gradient Gram and the sigmoid decision margin. Extending to multiple simultaneous layer updates, the aggregate sub-optimality $V_k = \frac{1}{L}\sum_{\ell\in[L]}(\cL_k^{(\ell)} - \cL^{(\ell)*})$ converges at a linear rate to an error plateau, where $k$ is the number of iterations.
(Theorem~\ref{thm:multi_layer}). 
This plateau arises because concurrent layer updates create a moving-target problem: an update in one layer changes the input distribution for the next layer, which is optimizing a different objective than it did at the previous step. Consequently, it creates objective mismatch and layer-wise sub-optimality that degrade empirical performance.
On the contrary, BP updates all layers to consistently optimize a single global objective, and the PL condition naturally guarantees global convergence \cite{bottou2018optimization}.
This plateau is supported by empirical observations that the convergence curves of FFA consist of two phases: a fast-decay phase followed by a plateau. 

\paragraph{R2) Representational analysis: kernel contraction.}
In addition to optimization, we identify another independent failure mode: representational collapse.
To do so, this paper defines the Gram kernel $\Sigma^{(\ell)} \in \R^{N\times N}$ of depth-$\ell$ representations, whose $(i,j)$-entry is the inner product of the $i$-th and $j$-th representation vectors and reflects the correlation between samples. The spectrum of $\Sigma^{(\ell)}$ encodes the pairwise discriminative representation available to any downstream readout. We show (Theorem~\ref{thm:kernel_contraction}) that $\Sigma^{(\ell)}$ contracts exponentially toward the all-ones matrix as depth increases by showing the effective rank of $\Sigma^{(\ell)}$ vanishes at a rate determined by the leading Lyapunov exponent of the layer-product map. 
In contrast, BP's chain rule prevents this collapse. Under end-to-end training, BP delivers to each layer a per-sample error signal whose diversity is preserved across depth (Theorem~\ref{thm:bp_esd}). Consequently, the effective rank of the BP error Gram at any layer is bounded below by a constant fraction of that at the output layer. 
\subsection{Related Work}
\label{sec:related}

\textbf{FFA variants.}
Original FFA literature \cite{hinton2022forward} adopts a simple squared-norm goodness and trains on small vision benchmarks. To improve performance on more complex tasks, subsequent work proposes a series of variants, including symmetric variants~\cite{lee2023symba}, layer-collaboration~\cite{lorberbom2024layer}, CNN extensions~\cite{scodellaro2025cnn}, self-contrastive formulations~\cite{chen2024selfcontrastive}, GNN extensions~\cite{park2024forwardgnn}, and SNN extensions~\cite{ghader2025snn}; Torres et al.~\cite{torres2025advancements} survey the landscape. Spyra \& Dzwinel~\cite{spyra2025energy} and Krutsylo~\cite{krutsylo2025scalable} study FFA efficiency and scalability.
Sun et al.~\cite{sun2025deeperforward} propose DeeperForward, replacing squared-norm goodness with mean goodness and layer normalization to extend FFA to 17-layer CNNs.
Unlike these works, this paper does not propose a new instance of the FFA family but rather provides a systematic analysis of the original FFA and its variants.
To study the training behavior of FFA, Tosato et al.~\cite{tosato2023emergent} study emergent sparse representational structure under local learning constraints, which is consistent with our kernel-contraction analysis (Theorem~\ref{thm:kernel_contraction}). 

\textbf{Local supervised learning (LSL).}
LSL often yields better performance than single-step goodness maximization because LSL injects richer multi-label supervision locally. FFA maximizes a goodness gap between positive and negative samples, which is essentially a binary classification objective; by contrast, LSL provides multi-label supervision at each layer, which can preserve and encourage more discriminative representations. From this angle, LSL can be considered as a generalization of FFA on multi-label tasks. Representative local objectives include per-layer cross-entropy heads~\cite{belilovsky2019greedy,nokland2019local}, Decoupled Greedy Learning \cite{belilovsky2020decoupled}, HSIC-based losses~\cite{ma2020hsic}, and Direct Feedback Alignment~\cite{nokland2016direct}.
To further improve accuracy, we also propose introducing inter-layer communication to mitigate the local/global objective mismatch \cite{ma2024auglocal, gomez2022interlocking}.
A detailed taxonomy of local learning baselines is in Appendix~\ref{app:baseline_taxonomy}.
This paper focuses on FFA rather than LSL; however, LSL, in fact, shares FFA's layer-locality and can therefore also be affected by the local/global objective mismatch. Therefore, it is possible to extend our analysis to LSL, which will be left as future work. However, we still compare LSL against FFA and BP in the empirical studies.

\textbf{Energy-based predictive coding (PC).} 
Unlike FFA and LSL, Predictive coding~\cite{rao1999predictive,whittington2017approximation} proposes a biologically inspired local learning algorithm that optimizes a predictive energy through an iterative inference loop.
In principle, PC is a competitive candidate for local learning, since its iterative loop has a fixed point that recovers the backpropagation gradient while keeping the computation local. However, the variant~\cite{millidge2022predictive} that drops this loop is adopted in the implementation to balance the computational cost. In this case, PC is an instance of the LSL family~\cite{belilovsky2019greedy,nokland2019local} and inherits the pathologies characterized by this paper. The detailed analysis will be left for future work.

\textbf{Contrastive learning.}
CPC~\cite{oord2018representation} and Greedy InfoMax~\cite{lowe2019greedy} demonstrate that layer-locality imposes a fundamental representational cost in self-supervised contrastive learning. Gloeckle et al.~\cite{gloeckle2024multi} and Salvatori et al.~\cite{salvatori2026predictive} document depth-dependent signal decay that can be explained by our kernel-contraction analysis, providing independent empirical corroboration. 

\textbf{Theoretical connections.}
Building on NTK-style convergence analysis \cite{jacot2018neural}, our analysis extends Du et al.~\cite{du2019gradient} and Allen-Zhu et al.~\cite{allenzhu2019convergence} by introducing an \emph{adaptive} Gram structure: the goodness-gradient Gram $\Gamma^{(\ell)}$ is shaped by FFA's local objective rather than the task loss, and its rank evolution, which is not captured by standard NTK theory, is the central object of our analysis. The representation gap further connects to NCE theory~\cite{arora2019theoretical,gutmann2010noise} and spectral contrastive analysis~\cite{haochen2021spectral}.
\section{FFA Convergence Fundamentals}
\label{sec:single_layer}

We now state the regularity conditions on the network, data, and goodness function.

\begin{assumption}[Bounded activations]\label{ass:bounded}
$\|\bh^{(\ell)}(\bx)\| \leq B_h$ for all $\bx$ and all $\ell$.
\end{assumption}

\Cref{ass:bounded} is weak since LN is applied.
The remaining assumptions concern the goodness function $\goodness$ itself.

\begin{assumption}[Goodness smoothness]\label{ass:G}\label{ass:G_smooth}
The goodness gradient $\nabla_{\bh}\goodness$ is globally $L_{\goodness}$-Lipschitz: $\|\nabla_{\bh}\goodness(\bh) - \nabla_{\bh}\goodness(\bh^\prime)\| \leq L_{\goodness}\|\bh - \bh^\prime\|$ for all $\bh,\bh^\prime\in\R^d$.
\end{assumption}

Our next step is to write the FFA update in a more compact form. 
For $i$-th training sample, define the flattened gradient $\boldsymbol{\psi}_i^{(\ell)}
    :=
    \mathrm{vec}\!\left(\nabla_{W^{(\ell)}}
    \goodness\!\left(\bh_i^{(\ell)}\right)\right)\in\reals^{d^{(\ell)}d^{(\ell-1)}}$, 
where $\mathrm{vec}(\cdotc)$ flattens a matrix into a vector.
Let $p^-(\bx) = 1 - \sigma(\goodness^{(\ell)}(\bx; W^{(\ell)}) - \theta)$, $p^+(\bx) = \sigma(\goodness^{(\ell)}(\bx; W^{(\ell)}) - \theta)$ be the probabilities of negative and positive classifications, respectively. 
In addition, define the \emph{goodness residual vector} $\br \in \reals^N$ with its $i$-th entry given by $[\br]_i=-\frac{1}{n^+}p^-(\bx_i)$ if $\bx_i\in \ccalD^+$ and $[\br]_i=\frac{1}{n^-}p^+(\bx_i)$ otherwise, where $n^+$ and $n^-$ are the number of positive and negative samples, respectively. With these notations, the FFA update takes a compact form, as shown in the following lemma.
\begin{lemma}\label{lem:gradient}
The FFA update can be written as $\mathrm{vec}(\nabla \cL^{(\ell)}) = \Phi\br$ for the matrix $\Phi\in\reals^{d^{(\ell)}d^{(\ell-1)}\times N}$ whose columns are the vectors $\boldsymbol{\psi}_i^{(\ell)}$.
Equivalently, $\nabla \cL^{(\ell)} = \sum_{i=1}^{N} [\br]_i \nabla_{W^{(\ell)}}\goodness\!\left(\bh_i^{(\ell)}\right)$.
\end{lemma}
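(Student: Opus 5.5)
The plan is to differentiate the empirical version of the loss \eqref{eq:ffa_loss} term by term and then collect the resulting scalar weights into $\br$. With $n^+$ positive and $n^-$ negative training samples, the expectation is the empirical average
\begin{align*}
\cL^{(\ell)}(W^{(\ell)}) = \frac{1}{n^+}\sum_{\bx_i\in\cD^+}-\log\sigma\big(\goodness^{(\ell)}(\bx_i)-\theta\big) + \frac{1}{n^-}\sum_{\bx_i\in\cD^-}-\log\sigma\big(\theta-\goodness^{(\ell)}(\bx_i)\big).
\end{align*}
The sum of the two independent expectations splits exactly this way, so the joint sampling of $(\bx^+,\bx^-)$ plays no role.

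The key scalar fact is $\frac{d}{dz}[-\log\sigma(z)] = -(1-\sigma(z))$. I will apply it to each term using the chain rule in $W^{(\ell)}$.

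For a positive sample, the derivative of $-\log\sigma(\goodness-\theta)$ is $-(1-\sigma(\goodness-\theta))\nabla_{W^{(\ell)}}\goodness = -p^-(\bx_i)\nabla_{W^{(\ell)}}\goodness(\bh_i^{(\ell)})$.

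For a negative sample, the derivative of $-\log\sigma(\theta-\goodness)$ is $(1-\sigma(\theta-\goodness))\nabla_{W^{(\ell)}}\goodness$. Since $1-\sigma(-z)=\sigma(z)$, this equals $p^+(\bx_i)\nabla_{W^{(\ell)}}\goodness(\bh_i^{(\ell)})$.

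Multiplying by the weights $1/n^\pm$ gives exactly $[\br]_i$ in both cases. This yields $\nabla\cL^{(\ell)}=\sum_i[\br]_i\nabla_{W^{(\ell)}}\goodness(\bh_i^{(\ell)})$. Applying the linear map $\mathrm{vec}$ then gives $\sum_i[\br]_i\boldsymbol{\psi}_i^{(\ell)}=\Phi\br$.

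The only subtle point, and the main place needing care, is what $\nabla_{W^{(\ell)}}\goodness(\bh_i^{(\ell)})$ means. Under FFA locality, $\bh_i^{(\ell-1)}$ is treated as a fixed input that does not depend on $W^{(\ell)}$; it depends only on earlier layers' weights. So the gradient is the partial derivative through the single residual block \eqref{eq:resnet_arch}. That partial derivative exists wherever $\phi$ is differentiable, and it exists for LN because $\varepsilon>0$. Beyond this, I need $\goodness$ to be differentiable, which Assumption~\ref{ass:G_smooth} provides.

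I will state this convention explicitly and note that no explicit form of $\nabla_{W^{(\ell)}}\goodness$ is needed. The lemma is purely a regrouping of the chain rule, so the argument goes through for any differentiable goodness and any single-block parametrization.
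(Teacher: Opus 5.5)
Your proposal is correct and follows essentially the same route as the paper's proof. Both differentiate each per-sample logistic term via the sigmoid derivative, identify the weights $-p^-/n^+$ and $p^+/n^-$ with the entries of $\br$, and apply the linear map $\mathrm{vec}$ to get $\Phi\br$; your remark that $\bh_i^{(\ell-1)}$ does not depend on $W^{(\ell)}$ is a useful clarification that the paper leaves implicit.
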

The proof of Lemma~\ref{lem:gradient} is in Appendix~\ref{app:proof-lem-grad}.
Besides that, we also need to provide regularity conditions on the geometric structure of the FFA loss landscape, which is captured by the \emph{FFA Gram matrix}, denoted by $H^{(\ell)}:= \Phi^\top\Phi\in\reals^{N\times N}$, where $N$ is the number of training samples. 
The $(i, j)$-th entry of $H^{(\ell)}$ is given by
\begin{align}
    [H^{(\ell)}(W^{(\ell)})]_{ij}
    =&\ 
    \tr{\left(\nabla_{W^{(\ell)}}\goodness\!\left(\bh_i^{(\ell)}\right) \nabla_{W^{(\ell)}}\goodness\!\left(\bh_j^{(\ell)}\right)^\top\right)}
    =\left\langle
    \boldsymbol{\psi}_i^{(\ell)},\boldsymbol{\psi}_j^{(\ell)}
    \right\rangle,
\end{align}
The goodness-gradient Gram $H^{(\ell)}$ governs the PL constant; the sigmoid residual $\br$ concentrates gradient mass on misclassified samples since $|[\br]_i|\approx 1/n$ near the boundary and $|[\br]_i|\to 0$ when correctly classified. 

\begin{assumption}[Gram well-conditioned at initialization]\label{ass:gram}
    There exists a constant $\mu_H > 0$ such that the minimum eigenvalue of the FFA Gram matrix at initialization is lower bounded by
    $\lambda_{\min}(H^{(\ell)}(W_0^{(\ell)})) \geq \mu_H$.
\end{assumption}
\Cref{ass:gram} is the FFA counterpart of the NTK minimum eigenvalue condition~\cite{du2019gradient}, which holds with sufficiently wide networks with high probability, i.e., $d^{(\ell)} \geq \Omega(n/\delta)$ when $W_0^{(\ell)}$ is randomly initialized with probability $\geq 1-\delta$; see Appendix~\ref{app:assumptions} for further details.

Under the assumptions above, we can study the convergence of FFA. 
The convergence results hold within a local good neighborhood $\cW_R := \{W^{(\ell)} : \|W^{(\ell)} - W_0^{(\ell)}\|_F \leq R\}$, a ball around initialization where the Jacobian, sigmoid margins, and FFA Gram matrix remain controlled.
These are conditional local guarantees: they apply only while the training trajectory remains in $\cW_R$. This condition is most plausible near initialization in sufficiently wide lazy-training/NTK regimes; its persistence during long feature-learning training is not established here.

\begin{theorem}[Local PL for one FFA block]\label{thm:pl_single}

Suppose Assumptions~\ref{ass:bounded}--\ref{ass:gram} hold and the following dynamic threshold schedule is used
\begin{align}\label{eq:dyn_capped_schedule}
    \theta_k = \theta_0 + c_\theta\log(1+\min(k, K_{\max})),
    \qquad
    \theta_{\max}:=\theta_0+c_\theta\log(1+K_{\max}),
\end{align}
for some $K_{\max}\in\mathbb{N}_{\geq 1}$ and $c_\theta>0$. 
Within the local good neighborhood $\cW_R$,
it holds for some constant $\gamma > 0$ that $|[\br]_i| \geq \gamma$ for all $i$, and the FFA loss is $\beta^{(\ell)}$-smooth for some constant $\beta^{(\ell)} > 0$
and satisfies the Polyak--{\L}ojasiewicz (PL) inequality, where $\cL^{(\ell)*} := \min_{W^{(\ell)} \in \cW_R}\cL^{(\ell)}(W^{(\ell)})$:
\begin{align}\label{eq:pl}
    \|\nabla\cL^{(\ell)}(W^{(\ell)})\|_F^2 \geq 2\mu(\cL^{(\ell)}(W^{(\ell)}) - \cL^{(\ell)*}), \quad \mu := \mu_H\gamma^2.
\end{align}
\end{theorem}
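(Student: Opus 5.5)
The plan has three parts: bound the sigmoid residuals uniformly, establish smoothness, and then derive the PL inequality from the compact gradient form of Lemma~\ref{lem:gradient}.

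\textbf{Residual lower bound.} I would first show that the residual entries stay bounded away from zero on $\cW_R$. By Assumption~\ref{ass:bounded} and Assumption~\ref{ass:G_smooth}, the goodness $\goodness(\bh)$ is bounded on the ball $\{\|\bh\|\le B_h\}$, say $|\goodness|\le G_{\max}$, with
\[
G_{\max}:=|\goodness(0)|+\|\nabla\goodness(0)\|B_h+\tfrac{L_{\goodness}}{2}B_h^2 .
\]
The capped schedule~\eqref{eq:dyn_capped_schedule} keeps $\theta_k\in[\theta_0,\theta_{\max}]$ for every $k$. Hence the margin $\goodness^{(\ell)}(\bx)-\theta_k$ lies in a fixed compact interval $[-M,M]$ with $M:=G_{\max}+\theta_{\max}$. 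On this interval $p^\pm(\bx)\ge \sigma(-M)$. Setting $\gamma:=\sigma(-M)/\max(n^+,n^-)$ then gives $|[\br]_i|\ge\gamma$ for all $i$. This is the only role of the cap $K_{\max}$: without it, $\theta_k\to\infty$ and $\gamma$ would degenerate.

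\textbf{Smoothness.} The loss is a composition of $-\log\sigma$, whose first two derivatives are bounded by $1$ and $1/4$, with the map $W^{(\ell)}\mapsto \goodness(\bh^{(\ell)}(\bx;W^{(\ell)}))$. The chain-rule Hessian has two contributions:
\begin{itemize}[leftmargin=*]
\item \emph{Outer-curvature term:} $\sigma'\,\boldsymbol{\psi}\boldsymbol{\psi}^\top$, bounded using $\|\boldsymbol{\psi}_i\|$.
\item \emph{Inner-curvature term:} $(\sigma-1)\,\nabla^2_W\goodness$.
\end{itemize}
Both are controlled on $\cW_R$. The layer map $W\mapsto \bh^{(\ell-1)}+\frac1L\phi(\mathrm{LN}(W\bh^{(\ell-1)}))$ has Jacobian and Hessian bounded in terms of $B_h$, $R$, $\varepsilon$ and the smoothness of $\phi$ (taking GELU, or a local bound for ReLU). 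Combined with $L_{\goodness}$, this yields a constant $\beta^{(\ell)}$. Only the current layer's weights enter, since FFA treats $\bh^{(\ell-1)}$ as fixed input.

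\textbf{PL inequality.} By Lemma~\ref{lem:gradient},
\[
\|\nabla\cL^{(\ell)}\|_F^2=\br^\top H^{(\ell)}(W)\,\br\ \ge\ \lambda_{\min}(H^{(\ell)}(W))\,\|\br\|^2 .
\]
I would then need two ingredients:
\begin{enumerate}[leftmargin=*]
\item \emph{Gram perturbation.} Show $\lambda_{\min}(H^{(\ell)}(W))\ge\mu_H$ on $\cW_R$, possibly after replacing $\mu_H$ by $\mu_H/2$ and shrinking $R$. Each $\boldsymbol{\psi}_i$ is Lipschitz in $W$ by the bounds above, so Weyl's inequality gives $\|H(W)-H(W_0)\|\le 2N\psi_{\max}L_\psi R$, which is small for suitably small $R$.
\item \emph{Loss-gap bound.} Bound the suboptimality by the residual norm. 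Each loss term $\ell_i=-\log p_i$ satisfies $\ell_i\le p_i^{\mp}/p_i^{\pm}$, where $p_i^{\mp}$ is the misclassification probability, since $-\log(1-q)\le q/(1-q)$. Therefore
\[
\cL-\cL^*\le \cL\le C\sum_i |[\br]_i| .
\]
\end{enumerate}
To close with exactly the constant $\mu=\mu_H\gamma^2$, I would use the cruder route: the loss on $\cW_R$ is bounded by some $\bar{\cL}$, and $\|\br\|^2\ge N\gamma^2$. This gives
\[
\|\nabla\cL\|^2\ge\mu_H N\gamma^2\ \ge\ 2\mu_H\gamma^2(\cL-\cL^*)
\]
whenever $\cL-\cL^*\le N/2$. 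That condition can be ensured by the loss bound, adjusting $\gamma$ if necessary.

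\textbf{Main obstacle.} I expect the hardest step to be this last one: relating the gap $\cL-\cL^*$ to $\|\br\|^2$ with the precise constant $\mu_H\gamma^2$. The uniform lower bound $\gamma$ makes the gradient norm bounded below by a positive constant, so PL holds trivially on the bounded region. The real work is bookkeeping to keep the constants consistent with the statement.
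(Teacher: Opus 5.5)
Your proof is correct, but it takes a partly different route from the paper. Your smoothness step is essentially Lemma~\ref{lem:smooth}. The residual floor is where you diverge. You get $|[\br]_i|\ge\sigma(-M)/\max(n^+,n^-)$ directly by compactness: Assumptions~\ref{ass:bounded}--\ref{ass:G_smooth} bound $\goodness$ on the activation ball, and the cap keeps $\theta_k\in[\theta_0,\theta_{\max}]$, so every margin lies in $[-M,M]$. The paper's proof instead takes the floor from an external condition, supplied by the conditional threshold-tracking Lemma~\ref{thm:dynamic_theta}. That lemma controls cumulative goodness drift along the trajectory and needs an extra initial-margin condition on the negatives. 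The paper uses compactness only for the upper saturation bound $n^\pm|[\br]_i|\le 1-\gamma$.

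What each route buys: yours is shorter and self-contained, and it shows the schedule matters only through boundedness of $\theta_k$. The paper's trajectory-based floor can be less pessimistic than a worst-case $\sigma(-M)$, but only under its additional initial conditions.

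You also add a Weyl/Lipschitz step that transfers $\lambda_{\min}(H^{(\ell)})\ge\mu_H/2$ from $W_0$ to all of $\cW_R$. The paper sidesteps this by applying Assumption~\ref{ass:gram} at the current $W$, in effect building Gram control into the definition of $\cW_R$.

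To close, the paper proves the structural bound $\cL^{(\ell)}\le\|\br\|^2/\gamma^2$, via $-\log(1-q)\le q/(1-q)$, the saturation gap, and $|[\br]_i|/\gamma\le|[\br]_i|^2/\gamma^2$. It then absorbs a leftover factor $2$ into $\mu_H$. You instead compare a uniform lower bound on $\|\nabla\cL^{(\ell)}\|_F$ with a uniform upper bound on the loss. Both rest on the same uniform floor, and, as you observe, that floor already makes PL essentially trivial on $\cW_R$.

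One bookkeeping point needs tightening. As written, the condition $\cL-\cL^*\le N/2$ does not involve $\gamma$. The available loss bound $\bar\cL=2\log(1+e^{M})$ need not lie below $N/2$, so shrinking $\gamma$ everywhere does not help. Decouple the two roles instead. Keep the true floor $\gamma_0:=\sigma(-M)/\max(n^+,n^-)$ in $\|\br\|^2\ge N\gamma_0^2$, and set the theorem's constant to $\gamma:=\gamma_0\min\{1,\sqrt{N/(4\bar\cL)}\}$. Then $|[\br]_i|\ge\gamma_0\ge\gamma$ still holds, and
\[
\|\nabla\cL^{(\ell)}\|_F^2\ \ge\ \tfrac{\mu_H}{2}N\gamma_0^2\ \ge\ 2\mu_H\gamma^2\bar\cL\ \ge\ 2\mu_H\gamma^2\bigl(\cL^{(\ell)}-\cL^{(\ell)*}\bigr),
\]
which gives exactly the stated constant $\mu=\mu_H\gamma^2$ without the paper's factor-of-two absorption.
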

The proof of Theorem~\ref{thm:pl_single} is deferred to Appendix~\ref{app:proof_pl_single}. The neighborhood radius $R$ must keep the actual goodness-gradient Gram well-conditioned and the scaled-block Jacobian bounded. 
The theorem suggests that the PL constant $\mu = \mu_H\gamma^2$ decomposes into a \emph{geometric} factor $\mu_H$ and a \emph{non-saturation} margin $\gamma^2$.
$\mu_H$ is the smallest eigenvalue of the goodness-gradient Gram matrix, encoding how well positive and negative features are separated in representation space.
$\gamma^2$ measures the distance from the degenerate all-or-nothing sigmoid regime. 
Together, if only one layer is updated and the others are frozen, they set the linear convergence rate: gradient descent at $\eta = 1/\beta^{(\ell)}$ achieves $\cL^{(\ell)}(W_k^{(\ell)}) - \cL^{(\ell)*} \leq (1 - \mu/\beta^{(\ell)})^k(\cL_0^{(\ell)} - \cL^{(\ell)*})$, with the condition number $\kappa = \beta^{(\ell)}/\mu$ setting the convergence timescale \cite{bottou2018optimization}. 

Single-layer convergence, however, is only the tractable half of the story. When all layers update simultaneously, each gradient step at layer $\ell$ shifts the input distribution seen by layer $\ell{+}1$, and these perturbations can amplify with depth. Controlling this amplification requires a bounded-sensitivity condition on the local loss's response to upstream changes.

\begin{assumption}[Bounded goodness sensitivity]\label{ass:sensitivity}
$\left\|\frac{\partial\cL^{(\ell)}}{\partial\bh^{(\ell-1)}}\right\| \leq G_{\max}$ for all $\ell$ and inputs.
\end{assumption}

\begin{theorem}[Multi-layer FFA convergence]\label{thm:multi_layer}\label{thm:resnet_ffa}
Suppose Assumptions \ref{ass:bounded}--\ref{ass:sensitivity} hold.
With per-block Lipschitz constant $\rho < \infty$, $\mu_{\min} := \min\{\mu^{(\ell)}: \ell\in[L]\}$, $\beta_{\max} := \max\{\beta^{(\ell)}: \ell\in[L]\}$, and diminishing step size $\eta_k \le c_0/(\rho^2 L)$, the trajectory falls into the local good neighborhood, i.e., $W^{(\ell)}\in\ccalW_R$ for all $\ell\in[L]$ and $k$ with radius
$R = \Theta(\sqrt{(\cL_0^{(\ell)} - \cL^{(\ell)*}) \beta^{(\ell)} / \mu^2})$.
In addition, the Lyapunov function $V_k := \frac{1}{L}\sum_{\ell\in[L]}(\cL_k^{(\ell)} - \cL^{(\ell)*})$ satisfies
\begin{align}\label{eq:multi_rate}
    V_k \;\le\; 
    \Bigl(1 - \frac{\eta\mu_{\min}}{2}\Bigr)^k V_0 + 
    O\!\left(\frac{e^{2\rho}\beta_{\max}}{\mu_{\min}^2}\right).
\end{align}
\end{theorem}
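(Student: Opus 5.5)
The plan is to treat the concurrent FFA iteration as gradient descent on each $\cL^{(\ell)}$ with a perturbation caused by upstream drift, and to close the argument with a Lyapunov recursion on $V_k$ combined with an induction that keeps every $W_k^{(\ell)}$ inside $\cW_R$. Write $\cL^{(\ell)}_k(\cdot)$ for the layer-$\ell$ loss evaluated with the upstream weights $W_k^{(1)},\dots,W_k^{(\ell-1)}$. The one-step change then splits into two pieces:
\begin{align*}
\cL^{(\ell)}_{k+1}(W^{(\ell)}_{k+1}) - \cL^{(\ell)}_{k}(W^{(\ell)}_{k})
= \bigl[\cL^{(\ell)}_{k}(W^{(\ell)}_{k+1}) - \cL^{(\ell)}_{k}(W^{(\ell)}_{k})\bigr]
+ \bigl[\cL^{(\ell)}_{k+1}(W^{(\ell)}_{k+1}) - \cL^{(\ell)}_{k}(W^{(\ell)}_{k+1})\bigr].
\end{align*}
The first bracket is a standard descent step. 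I would use the $\beta^{(\ell)}$-smoothness from Theorem~\ref{thm:pl_single}, the step size $\eta_k\le 1/\beta_{\max}$, and the PL inequality \eqref{eq:pl}. Together these give at most $-\frac{\eta_k}{2}\|\nabla\cL^{(\ell)}_k\|_F^2 \le -\eta_k\mu^{(\ell)}(\cL^{(\ell)}_k-\cL^{(\ell)*})$, provided the inductive hypothesis $W_k^{(\ell)}\in\cW_R$ holds.

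The second bracket is the drift term, and I expect it to be the main obstacle. Assumption~\ref{ass:sensitivity} bounds it by $G_{\max}\,\|\Delta\bh^{(\ell-1)}\|$, where $\Delta\bh^{(\ell-1)}$ is the change in the input to layer $\ell$. To control $\Delta\bh^{(\ell-1)}$, I will unroll the residual recursion \eqref{eq:resnet_arch}. Each block is the identity plus $\frac1L$ times a map that is $\rho$-Lipschitz in both $\bh$ and $W$. Hence $\|\Delta\bh^{(j)}\| \le (1+\rho/L)\|\Delta\bh^{(j-1)}\| + \frac{\rho}{L}\|\Delta W^{(j)}\|$, and a discrete Grönwall argument gives $\|\Delta\bh^{(\ell-1)}\|\le \frac{\rho}{L}e^{\rho}\sum_{j<\ell}\|\Delta W^{(j)}\|$. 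This is where the $e^{\rho}$ comes from, and the $1/L$ scaling is what prevents any dependence on depth.

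Next, substitute $\|\Delta W^{(j)}\|=\eta_k\|\nabla\cL^{(j)}_k\|$ and bound the resulting cross terms. By Young's inequality,
\begin{align*}
G_{\max}\frac{\rho e^{\rho}}{L}\,\eta_k\|\nabla\cL^{(j)}_k\| \le \frac{\eta_k}{4L}\|\nabla\cL^{(j)}_k\|_F^2 + \eta_k G_{\max}^2\rho^2 e^{2\rho}.
\end{align*}
Averaging over $\ell$, the first term is absorbed into the half-gradient descent term. To keep enough of the descent, I need $\eta_k\le c_0/(\rho^2L)$. What survives is a contraction of $V_k$ with factor $1-\eta_k\mu_{\min}/2$, plus an additive error of order $\eta_k e^{2\rho}G_{\max}^2$.

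To put the floor in the stated form, I would bound $\|\nabla\cL^{(j)}\|^2\le 2\beta_{\max}(\cL^{(j)}-\cL^{(j)*})$ where necessary. I would also bound $G_{\max}^2$ through the smoothness and PL constants, because in the lazy regime the sensitivity is controlled by the same Jacobian. This step is the most heuristic one, and the $\beta_{\max}/\mu_{\min}^2$ dependence should be taken as the target rather than a derived fact. Unrolling $V_{k+1}\le(1-\eta\mu_{\min}/2)V_k+\eta C$ then gives $V_k\le(1-\eta\mu_{\min}/2)^kV_0+2C/\mu_{\min}$, which is \eqref{eq:multi_rate}.

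It remains to close the induction that the iterates stay in $\cW_R$. The total displacement is $\sum_k\eta_k\|\nabla\cL^{(\ell)}_k\|\le\sum_k\eta_k\sqrt{2\beta^{(\ell)}(\cL^{(\ell)}_k-\cL^{(\ell)*})}$. Using the geometric decay of the transient part, this sum is of order $\sqrt{\beta^{(\ell)}(\cL^{(\ell)}_0-\cL^{(\ell)*})}/\mu$, which matches $R=\Theta(\sqrt{(\cL_0-\cL^*)\beta/\mu^2})$. The plateau part needs separate handling: the diminishing step size has to make its accumulated displacement summable, or alternatively it can be absorbed into the constant in $R$. Checking that this accumulation stays bounded is the delicate point of the induction.
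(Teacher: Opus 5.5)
Your plan follows the paper's proof in outline. Both split the one-step change into a frozen-environment descent term and an upstream-drift term. Both close the residual recursion with $(1+\rho/L)^L\le e^{\rho}$ (Lemma~\ref{lem:perturbation}, which uses the parameter-Lipschitz constant $C_W$ where you use $\rho$). Both use Young's inequality to trade drift for half the contraction, and both bound the displacement by a geometric series.

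The only real difference is where Young is applied. The paper first uses Cauchy--Schwarz and $\|\nabla\cL^{(j)}_k\|_F^2\le 2\beta_{\max}(\cL^{(j)}_k-\cL^{(j)*})$ to bound the averaged drift by $\alpha\eta\sqrt{2\beta_{\max}V_k}$, with $\alpha=G_{\max}C_We^{\rho}$. It then applies Young against $V_k$ with weight $\mu_{\min}$. This gives the additive term $\eta\alpha^2\beta_{\max}/\mu_{\min}$ and the floor $2\alpha^2\beta_{\max}/\mu_{\min}^2$, with $G_{\max}^2C_W^2$ absorbed into the $O(\cdot)$. So the step you call heuristic, expressing $G_{\max}^2$ through $\beta$ and $\mu$, is neither needed nor done.

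Your per-gradient split is also valid, and sharper by a factor $\mu_{\min}/\beta_{\max}$ (up to $\rho$ versus $C_W$). It gives the floor $2G_{\max}^2\rho^2e^{2\rho}/\mu_{\min}$, which implies the stated one because $\mu_{\min}\le\beta_{\max}$. You do need to fix its constant. With $c=G_{\max}\rho e^{\rho}/L$ the split is $c\,x\le x^2/(4L)+Lc^2$, so the additive term is $\eta_kG_{\max}^2\rho^2e^{2\rho}/L$. Without the $1/L$, summing over $j<\ell$ and averaging over $\ell$ gives a floor of order $L$, not a depth-free one. Also, the absorption only needs $\eta_k\le 1/\beta_{\max}$; neither you nor the paper actually uses $\eta_k\le c_0/(\rho^2L)$.

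The invariance of $\cW_R$ is a genuine gap, and the paper does not close it either. Its Step~4 iterates the frozen-environment inequality to get $\cL^{(\ell)}_k-\cL^{(\ell)*}\le(1-\eta\mu/\beta^{(\ell)})^k(\cL^{(\ell)}_0-\cL^{(\ell)*})$, which drops the drift term. It then sums the geometric series to obtain $\|W^{(\ell)}_k-W^{(\ell)}_0\|_F\le 2\sqrt{2(\cL^{(\ell)}_0-\cL^{(\ell)*})\beta^{(\ell)}}/\mu$. That is your transient computation with the plateau ignored.

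The gap cannot be repaired from \eqref{eq:multi_rate} itself. A non-vanishing floor only gives $\|\nabla\cL^{(\ell)}_k\|_F\le\sqrt{2\beta_{\max}LV_k}$, which does not decay. So the bound on $\sum_k\eta_k\|\nabla\cL^{(\ell)}_k\|_F$ diverges when $\sum_k\eta_k=\infty$. With summable steps, $\prod_k(1-\eta_k\mu_{\min}/2)$ no longer tends to zero.

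What does work is the feedforward structure rather than the average $V_k$:
\begin{enumerate}
\item Layer $1$ sees no drift, so for constant $\eta$ its gradient norms decay geometrically.
\item The drift into layer $\ell$ is $O(\eta\sum_{j<\ell}\|\nabla\cL^{(j)}_k\|_F)$.
\item By induction on $\ell$, every $\cL^{(\ell)}_k-\cL^{(\ell)*}$ therefore decays geometrically, at a rate that may deteriorate with $\ell$.
\item Every layer's total displacement is then finite, though with $\ell$-dependent constants rather than the single $R$ in the statement.
\end{enumerate}

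Finally, $\cL^{(\ell)*}$ itself depends on the upstream weights. Its per-step change is bounded by the same $G_{\max}\|\bh^{(\ell-1)}_{k+1}-\bh^{(\ell-1)}_k\|$ and only doubles the drift constant, but both your decomposition and the paper's omit it.
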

The proof is deferred to Appendix~\ref{sec:proof-multi-layer}.
The decaying term $(1-\eta\mu_{\min}/2)^k V_0$ establishes \emph{linear} convergence of the averaged sub-optimality gap toward a residual neighborhood: despite the multi-layer FFA not being the gradient of any global objective, the per-layer PL inequality still drives linear convergence. The residual bias $O(e^{2\rho}\beta_{\max}/\mu_{\min}^2)$ is a \emph{coupling error plateau}: each gradient step at layer $\ell$ shifts the block output by $O(C_W\eta_k/L)$, where $C_W$ is the LayerNorm-stable parameter sensitivity constant, and residual propagation amplifies the downstream drift by at most $e^\rho$. These perturbations accumulate into a steady-state error proportional to the coupling term. The exponential factor $e^{2\rho}$ arises from the Lipschitz composition of residual blocks, $(1+\rho/L)^L \le e^\rho$, and represents the geometric price of depth under purely local training without global error signals. 

Currently, theoretical analysis provides only an upper bound on the error. Whether the upper-bound error plateau is merely an analysis artifact remains an open theoretical question. However, in our experimental section, we present a case demonstrating two-phase convergence: an initial period of linear convergence followed by an error plateau.
This empirical evidence suggests that the error is not an analysis artifact.
We intend to investigate more rigorously in future work.

\noindent This completes our account of \emph{when} FFA converges.
We now ask the harder question: inside this regime, \emph{what} does FFA converge to, and why is that not enough?
\section{Why Local Learning Cannot Scale: Bottleneck and the Price of Locality}
\label{sec:representation}

Having established that FFA converges, we now present the central negative result: under normalization-stabilized architectures, \emph{FFA faces structural barriers to achieving parity with BP}. The negative result does not rely on a separate alignment theorem; instead, it combines representation-collapse evidence, the Price of Locality decomposition, and the residual-gap experiments to isolate what local training fails to control.
We build this conclusion in two steps. First, we show that deeper local representations can contract toward a shared rank-one matrix (Theorem~\ref{thm:kernel_contraction}). Second, the Price of Locality framework condenses this mechanism together with the residual-gap experiments into an objective-gap narrative and contrasts it with BP's positive scaling mechanism.

The residual downstream loss is the quantity that survives every negative-sampling fix; its non-vanishing must therefore come from a \emph{geometric} mismatch that persists even at perfect optimization. Our active mechanism for that mismatch is representation collapse: once deeper layers drift toward the same low-rank kernel matrix, additional local optimization no longer creates complementary task-relevant features. An earlier feature-reuse route based on a spectral-compatibility assumption has now been archived; the active paper keeps only the stronger point that feature \emph{redundancy} remains the dominant failure mode.

\paragraph{Kernel contraction and representation collapse.}
Fix $N$ training inputs $\{\bx_i : i\in[N]\}$. For depth-$\ell$ representation $\bh^{(\ell)}(\bx_i)$, define $\tilde{\bh}^{(\ell)}_i := \mathrm{LN}(\bh^{(\ell)}_i)$ so that $\|\tilde{\bh}^{(\ell)}_i\|^2 = d$. 
Define Gram kernel matrix of the depth-$\ell$ representations $\Sigma^{(\ell)} \in \R^{N \times N}$ by $[\Sigma^{(\ell)}]_{ij} = \tilde{\bh}^{(\ell)}(x_i)^\top \tilde{\bh}^{(\ell)}(x_j)/d$. Each entry $[\Sigma^{(\ell)}]_{ij}$ records how similar the layer's LN-normalized embeddings of two inputs $x_i,x_j$ are, so $\Sigma^{(\ell)}$ summarizes the full pairwise geometry that any quadratic-form readout.
We track this object because both the negative-quality term and the residual FFA task-loss term act through it: contrastive negatives must separate from positives in exactly these inner products, while downstream linear evaluation can only use the geometry retained in the same kernel matrix.

Therefore, the spectrum $\Sigma^{(\ell)}$ is a layer-wise encoding-capacity diagnostic. High rank with well-spread eigenvalues means the layer encodes diverse independent directions and can resolve various sample classes, whereas collapse toward the all-ones matrix $J_N$ means every pair of inputs receives the same similarity score, i.e., all samples project onto a single direction and the layer loses its discriminative power, which is a depth-induced degeneracy previously identified as \emph{rank collapse} in pure-attention transformers~\cite{dong2021attention} and \emph{oversmoothing} in deep graph networks~\cite{oono2020graph}, where it is counteracted by architectural components such as residual connections, MLP blocks, or normalization placement \cite{daneshmand2020batch}.

We show below that FFA's local objective cannot effectively exploit these components: once the kernel matrix drifts toward a low-rank state, later local updates keep refining nearly the same feature directions rather than restoring complementary ones. In this sense $\Sigma^{(\ell)}$ tracks the \emph{effective representational capacity} of layer $\ell$. The theorem below states the precise layer-product framework for diagnosing when this capacity is consumed geometrically.

\begin{theorem}[Kernel product rate and representation collapse]\label{thm:kernel_contraction}
Consider the scaled-ResNet architecture~\eqref{eq:resnet_arch} with an i.i.d. weight sequence $W^{(1)},W^{(2)},\ldots\sim\mathcal{P}_W$ satisfying $\E_{\mathcal{P}_W}[\log^+\|W\|_{\mathrm{op}}]<\infty$.
Let $T_W$ be the per-layer kernel transformation on unit-diagonal PSD kernels such that $T_{W^{(\ell+1)}}(\Sigma^{(\ell)}) = \Sigma^{(\ell+1)}$.
Let $A_W := \mathrm{D}T_W(J_N)$ be the linearization at the rank-one kernel $J_N$. Assume moreover that, in a neighborhood of $J_N$, the map $\Sigma \mapsto T_W(\Sigma)$ is $C^1$ and there exist deterministic constants $C_{\mathrm{lin}}, q_{\mathrm{lin}} < \infty$ such that $\|A_W\| \le C_{\mathrm{lin}}(1+\|W\|_{\mathrm{op}})^{q_{\mathrm{lin}}}$ almost surely. Then:
\begin{enumerate}[label=(\alph*),itemsep=2pt]
    \item \textbf{Stationary product rate:} the limit
    \begin{align}
        \lambda_1 :=
        \lim_{\ell\to\infty}\frac{1}{\ell}\log\bigl\|A_{W^{(\ell)}}A_{W^{(\ell-1)}}\cdots A_{W^{(1)}}\bigr\|
    \end{align}
    exists almost surely and is deterministic. Equivalently, $\bar\rho^*:= e^{\lambda_1}$ is the asymptotic per-layer rate of the common infinite extension. The physical depth $L$ only selects its finite prefix.
    \item \textbf{Finite-prefix collapse envelope:} if $\lambda_1<0$ and the finite prefix remains in the linearization neighborhood of $J_N$, then there exists an almost-sure finite random prefactor $C_{\mathrm{env}}(\omega)\ge 1$ such that
    \begin{align}
        \|\Sigma^{(\ell)}-J_N\|_F
        \le
        C_{\mathrm{env}}(\omega)e^{-|\lambda_1|\ell/2}\|\Sigma^{(0)}-J_N\|_F,
        \qquad 0\leq \ell\leq L.
    \end{align}
\end{enumerate}
\end{theorem}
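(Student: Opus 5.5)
Part (a) follows from Kingman's subadditive ergodic theorem, as in the Furstenberg--Kesten theorem. Set $S_\ell := A_{W^{(\ell)}}\cdots A_{W^{(1)}}$ and $f_{m,n} := \log\|A_{W^{(n)}}\cdots A_{W^{(m+1)}}\|$ for $0\le m<n$. Submultiplicativity of the operator norm gives $f_{0,n}\le f_{0,m}+f_{m,n}$. Because the weights are i.i.d., the process $\{f_{m,n}\}$ is stationary under the shift $m\mapsto m+1$, and the shift is ergodic. Integrability of the positive part comes from the growth hypothesis:
\begin{align*}
f_{0,1}^+ \le \log^+ C_{\mathrm{lin}} + q_{\mathrm{lin}}\log(1+\|W\|_{\mathrm{op}}) \le \log^+ C_{\mathrm{lin}} + q_{\mathrm{lin}}\bigl(\log 2+\log^+\|W\|_{\mathrm{op}}\bigr),
\end{align*}
and the right-hand side has finite expectation by $\E_{\mathcal{P}_W}[\log^+\|W\|_{\mathrm{op}}]<\infty$. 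Kingman's theorem then gives $\frac1\ell f_{0,\ell}\to\lambda_1:=\inf_\ell \frac1\ell\E f_{0,\ell}\in[-\infty,\infty)$ almost surely. Ergodicity makes the limit a constant; it is deterministic as a consequence of the 0--1 law for i.i.d.\ sequences. The identification $\bar\rho^*=e^{\lambda_1}$ is then a definition. The remark about depth only records that the finite network is the length-$L$ prefix of this infinite i.i.d.\ sequence, so $\lambda_1$ does not depend on $L$.

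For part (b), I first check that $J_N$ is a fixed point of every $T_W$. If all LN-normalized inputs coincide, then every block produces identical outputs, because the residual map \eqref{eq:resnet_arch} acts samplewise. Those outputs stay identical after LN, so $T_W(J_N)=J_N$. Write $\Delta_\ell:=\Sigma^{(\ell)}-J_N$. Using the $C^1$ hypothesis, expand
\begin{align*}
\Delta_{\ell+1}=A_{W^{(\ell+1)}}\Delta_\ell+\mathcal{R}_{\ell+1}(\Delta_\ell),
\end{align*}
where the remainder is $o(\|\Delta_\ell\|)$ inside the neighborhood. By part (a) with $\lambda_1<0$, fix $\epsilon=|\lambda_1|/4$. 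There is an almost surely finite random constant $C_1(\omega)$ such that $\|S_\ell\|\le C_1(\omega)e^{-3|\lambda_1|\ell/4}$ for all $\ell$. Here $C_1:=\sup_\ell \|S_\ell\|e^{3|\lambda_1|\ell/4}$, which is finite almost surely because $\frac1\ell\log\|S_\ell\|\to\lambda_1$.

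A bound on $S_\ell$ alone does not control the perturbed sequence, since that needs bounds on every partial product $A_{W^{(n)}}\cdots A_{W^{(m+1)}}$. I would handle this with a random tempered norm (Lyapunov norm) in the style of Oseledets and Pesin. For each $\omega$, define $\|v\|_{\omega,\ell}:=\sum_{j\ge0}e^{(|\lambda_1|-\epsilon)j}\|A_{W^{(\ell+j)}}\cdots A_{W^{(\ell+1)}}v\|$. In this norm each step contracts by $e^{-(|\lambda_1|-\epsilon)}$. The norm is equivalent to the Frobenius norm up to a factor $K_\ell(\omega)$ that grows subexponentially, i.e.\ is tempered. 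That $K_\ell$ is tempered follows from the same subadditive argument applied to shifted products, together with $\frac1\ell\log^+\|A_{W^{(\ell)}}\|\to0$ almost surely, which holds by Borel--Cantelli and the $\log^+$ moment. The remainder is then absorbed by a discrete Gronwall argument in the adapted norm. The hypothesis that the prefix stays in the linearization neighborhood, which we may take small enough that $\|\mathcal{R}(\Delta)\|\le\delta\|\Delta\|$, lets the per-step factor $e^{-(|\lambda_1|-\epsilon)}+\delta K$ stay below $e^{-|\lambda_1|/2}$. Over the finite range $0\le\ell\le L$, the tempered factors are collected into a single $C_{\mathrm{env}}(\omega):=\max\bigl(1,\sup_{\ell}K_\ell(\omega)e^{-\epsilon'\ell}\bigr)$, which is finite. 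This gives the stated envelope with exponent $|\lambda_1|/2$; the factor $1/2$ leaves slack for $\epsilon$ and for the nonlinear remainder.

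The main obstacle is this second step: turning a statement about products of the linear maps into uniform control of the nonlinear recursion. Growth of $\|A_W\|$ is only moment-controlled, so an individual step may be expansive. I would therefore try the Lyapunov-norm construction first, rather than a naive product bound. If the remainder cannot be made uniformly small relative to the tempered constants, I would shrink the neighborhood layer-by-layer with radius $r_\ell=r e^{-\epsilon\ell}$ and absorb the cost into $C_{\mathrm{env}}$.
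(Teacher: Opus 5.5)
Your proposal is correct in outline and more careful than the paper's proof, but it takes a different route in both parts.

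\textbf{Part (a).} The paper invokes Oseledec's multiplicative ergodic theorem and then reads off the operator-norm limit as the top exponent. You go straight through Kingman's subadditive theorem (Furstenberg--Kesten). That is exactly the ingredient the norm limit needs, it is more elementary, and it correctly allows $\lambda_1=-\infty$. The invariant filtration that Oseledec supplies is never used.

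\textbf{Part (b).} The paper sets $C_{\mathrm{env}}:=1+\sup_m e^{|\lambda_1|m/2}\|A_{W^{(m)}}\cdots A_{W^{(1)}}\|$, which controls only products started at layer $1$. It proves the envelope for the exact linear recursion $\Delta_m=S_m\Delta_0$. It then merely asserts that a quadratic remainder leaves the envelope intact inside the linearization neighborhood. You rightly observe that the remainders enter through every partial product $A_{W^{(n)}}\cdots A_{W^{(m+1)}}$. Your adapted (Lyapunov) norm is the standard way to control those, so your route justifies the step the paper skips.

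\textbf{What your route requires.} The hypothesis must be read explicitly as $\|\mathcal{R}_{\ell+1}(\Delta)\|\le\delta_{\ell+1}\|\Delta\|$ with $\delta_{\ell+1}K_{\ell+1}(\omega)\le e^{-|\lambda_1|/2}-e^{-(|\lambda_1|-\epsilon)}$. This is an $\omega$-dependent neighborhood, and the paper tacitly needs the same thing. Temperedness of $K_\ell$ is what makes this neighborhood shrink only subexponentially, so that, for a $C^2$ map, sufficiently small initial data stays inside automatically. Your justification of temperedness is vague, but it can be made precise as follows:
\begin{itemize}
\item group the factors into i.i.d.\ blocks of length $k$ with $\frac1k\E\log\|A_{W^{(k)}}\cdots A_{W^{(1)}}\|<\lambda_1+\epsilon/2$;
\item apply the strong law to the block random walk;
\item bound the two partial blocks using $\log^+\|A_{W^{(t)}}\|=o(t)$.
\end{itemize}

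\textbf{A simplification of the constant.} Once the per-step factor in the adapted norm is below $e^{-|\lambda_1|/2}$, you get $\|\Delta_\ell\|_F\le\|\Delta_\ell\|_{\omega,\ell}\le e^{-|\lambda_1|\ell/2}\|\Delta_0\|_{\omega,0}\le K_0(\omega)e^{-|\lambda_1|\ell/2}\|\Delta_0\|_F$. So $C_{\mathrm{env}}=K_0(\omega)\ge 1$ already works, independently of $L$, and the supremum over $\ell$ in your definition is unnecessary.
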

The proof of Theorem~\ref{thm:kernel_contraction} is deferred to Appendix~\ref{app:kernel_contraction_proof}. 
Kernel contraction ($\Sigma^{(\ell)} \to J_N$) implies that the parameter-space FFA Gram matrix $\widehat H^{(\ell)} := \Phi\Phi^\top = \sum_{i\in[N]} \boldsymbol{\psi}_i^{(\ell)}(\boldsymbol{\psi}_i^{(\ell)})^\top \in \reals^{d^{(\ell)}d^{(\ell-1)}\times d^{(\ell)}d^{(\ell-1)}}$ also degrades in rank, because the gradient directions $\bv_i \propto \nabla_{\bh}\goodness(\bh^{(\ell)}_i) \otimes \bh^{(\ell-1)}_i$ depend on the representations $\bh^{(\ell)}_i$ that are converging to a common direction.
Formally, when $\|\Sigma^{(\ell)} - J_N\|_F \leq \epsilon$, the representations $\tilde{\bh}_\ell(x_i)$ satisfy $\|\tilde{\bh}_\ell(x_i) - \tilde{\bh}_\ell(x_j)\|^2 \leq 2d\epsilon$ for all $i,j$, so $\bv_i \approx \bv_j$ and $\erank(\widehat H^{(\ell)}) \to O(L)$.

The kernel theorem above describes the representation-side loss of capacity.
To compare it with the learning signal available to each algorithm, we now place both BP and FFA in a common error-signal-kernel framework.
Define $\Gamma^{(\ell),\mathcal{A}} := \E_{\bx}[\boldsymbol{\delta}^{(\ell),\mathcal{A}}(\bx)\,(\boldsymbol{\delta}^{(\ell),\mathcal{A}}(\bx))^T]$ for algorithm $\mathcal{A}\in\{\mathrm{BP},\mathrm{FFA}\}$, where $\boldsymbol{\delta}^{(\ell),\mathcal{A}}$ is the per-sample error signal that $\mathcal{A}$ feeds into layer $\ell$.
Under FFA, writing $\goodness_x^{(\ell)}(\bh)$ for the per-sample goodness with any local labels or targets frozen, differentiation of~\eqref{eq:ffa_loss} gives an error signal that is always a scalar multiple of the corresponding goodness gradient:
\begin{align*}
    \boldsymbol{\delta}^{(\ell),\mathrm{FFA}}(\bx)
    =
    \begin{cases}
        \bigl(1-\sigma(\goodness_{\bx}^{(\ell)}(\bh^{(\ell)}(\bx))-\theta)\bigr)\,\nabla_{\bh}\goodness_{\bx}^{(\ell)}(\bh^{(\ell)}(\bx)),
        & \bx \sim \cD^+,\\[4pt]
        -\,\sigma(\goodness_{\bx}^{(\ell)}(\bh^{(\ell)}(\bx))-\theta)\,\nabla_{\bh}\goodness_{\bx}^{(\ell)}(\bh^{(\ell)}(\bx)),
        & \bx \sim \cD^-.
    \end{cases}
\end{align*}
Hence $\Gamma^{(\ell),\mathrm{FFA}}$ is governed by the span of the per-sample goodness gradients. Kernel contraction forces this span to collapse to one direction for the rank-one collapsed-gradient subclass (recovering the norm-goodness argument below), while richer label- or target-dependent goodnesses retain exactly the residual diversity present in their collapsed goodness-gradient family. For any PSD matrix $M\succeq 0$, we define \emph{effective rank} as $\erank(M) := (\tr M)^2/\tr(M^2)\in[1,\rank(M)]$, which is a continuous relaxation of $\rank(\cdotc)$~\cite{royVetterli2007effective,rudelson2007stable}.

\paragraph{The positive theory: why backpropagation scales.} FFA fits the autonomous product picture above: once layer $\ell$ receives $\bh^{(\ell-1)}$, its update is driven by a local scalar goodness and has no access to the downstream task residual. BP breaks exactly this autonomy. Under end-to-end backpropagation, any drift of $\Sigma^{(\ell)}$ toward $J_N$ enlarges the task loss $\cL_{\mathrm{task}}$, and the enlargement is fed back to the layer-$\ell$ parameters via the per-sample chain rule
\begin{align*}
    \nabla_{\bW^{(\ell)}}\cL_{\mathrm{task}} = \frac{1}{N}\sum_{i\in[N]}\boldsymbol{\delta}^{(\ell),\mathrm{BP}}(\bx_i)\,(\bh^{(\ell-1)}(\bx_i))^{\top},\qquad
    \boldsymbol{\delta}^{(\ell),\mathrm{BP}}(\bx_i):= J^{(L:\ell+1)}(\bx_i)^{\top}\,\left.\frac{\partial\cL_{\mathrm{task}}}{\partial\bh^{(L)}}\right|_{\bx_i},
\end{align*}
with $J^{(L:\ell+1)}(\bx_i):=\prod_{k=\ell+1}^{L}\left.\frac{\partial\bh^{(k)}}{\partial\bh^{(k-1)}}\right|_{\bx_i}$. If two inputs have collapsed at $\ell$-th layer (i.e., $\bh^{(\ell)}(\bx_i)\approx\bh^{(\ell)}(\bx_j)$) but still require different output corrections, then the output residuals and hence the transported errors $\boldsymbol{\delta}^{(\ell),\mathrm{BP}}(\bx_i)$ and $\boldsymbol{\delta}^{(\ell),\mathrm{BP}}(\bx_j)$ differ. The resulting gradient contains sample-specific directions rather than a shared local-goodness direction, so the update is sensitive to precisely the pairwise distinctions that $\Sigma^{(\ell)}\to J_N$ would erase.

Sensitivity to the kernel spectrum is therefore produced by the chain rule itself. In the unified error-signal-kernel notation introduced above, define
\begin{align*}
    \Gamma^{(\ell),\mathrm{BP}} := \frac{1}{N}\sum_{i\in[N]}\boldsymbol{\delta}^{(\ell),\mathrm{BP}}(\bx_i)\,\boldsymbol{\delta}^{(\ell),\mathrm{BP}}(\bx_i)^{\top}\;\in\;\R^{d^{(\ell)}\times d^{(\ell)}}.
\end{align*}
We will show that $\Gamma^{(\ell),\mathrm{BP}}$ ``almost always'' preserves the rank during BP across depth. 
We begin from a simplified setting where the Jacobian product $J^{(L:\ell+1)}(\bx)$ is independent of the input $\bx$, so that $J^{(L:\ell+1)}(\bx) \equiv \bar J^{(L:\ell+1)}$ for some constant $\bar J^{(L:\ell+1)}$ for all $\bx$.
In this setting, the chain rule yields the congruence $\Gamma^{(\ell),\mathrm{BP}} = (\bar J^{(L:\ell+1)})^{\top}\,\Gamma^{(L)}\,\bar J^{(L:\ell+1)}$; whenever $\bar J^{(L:\ell+1)}$ is non-singular, the congruence preserves $\rank(\Gamma^{(\ell),\mathrm{BP}})=\rank(\Gamma^{(L)})$ at every depth. 
Consequently, BP avoids representation collapse and can update the layer using rich downstream representations.

In the rest of this section, we will formalize this intuition. The first step is to make an assumption that $J^{(L:\ell+1)}(\bx)$ is an approximation of some constant non-singular matrix $\bar J^{(L:\ell+1)}$.
\begin{assumption}[Jacobian concentration along the data]\label{ass:bp_jac_conc}
There exist a constant reference Jacobian product $\bar J^{(L:\ell+1)}\in\R^{d^{(\ell)}\times d^{(\ell)}}$ and a deviation level $\tau_J\geq 0$ such that, for every input $\bx$,
\begin{align}\label{eq:jac_concentration}
    \bigl\|\,J^{(L:\ell+1)}(\bx)\;-\;\bar J^{(L:\ell+1)}\,\bigr\|_{\mathrm{op}} \;\leq\; \tau_J.
\end{align}
In addition, for Jacobian $\|\widetilde{J}^{(\ell)}\|_{\mathrm{op}} \leq \bar{\rho}$, the reference Jacobian satisfies
\begin{align}
    e^{-\bar\rho}\;\leq\; \sigma_{\min}\bigl(\bar J^{(L:\ell+1)}\bigr)\;\leq\; \sigma_{\max}\bigl(\bar J^{(L:\ell+1)}\bigr) \;\leq\; e^{\bar\rho}.
\end{align}
\end{assumption}

Assumption~\ref{ass:bp_jac_conc} states that the per-sample backward signal stay close to a common deterministic reference, up to a uniform deviation $\tau_J$. 
The residual-Jacobian constant $\bar\rho$ in this assumption and the kernel-product rate $\bar\rho^*$ from Theorem~\ref{thm:kernel_contraction} control different random objects. 
To some extent, it is a strong assumption that is sufficient to guarantee that the backward signal preserves diversity, but it is not necessary: the backward Jacobian can be sample-specific and still preserve rank if the per-sample Jacobians are sufficiently well-aligned.
Relaxing this assumption will be left to future work.
With it, we reach the following theorem.

\begin{theorem}[BP error signal diversity]\label{thm:bp_esd}
Under \Cref{ass:bp_jac_conc}, it holds that:
(a)~$\rank(\Gamma^{(\ell),\mathrm{BP}}) = \rank(\Gamma^{(L)})$;
(b)~$\erank(\Gamma^{(\ell),\mathrm{BP}}) \geq \exp(-8\bar{\rho})\, \erank(\Gamma^{(L)})$.
\end{theorem}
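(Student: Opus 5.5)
The plan is to reduce both claims to a congruence argument. Let $\bar J:=\bar J^{(L:\ell+1)}$ and let $\boldsymbol{g}_i:=\partial\cL_{\mathrm{task}}/\partial\bh^{(L)}|_{\bx_i}$ be the output residuals, so that $\Gamma^{(L)}=\frac1N\sum_i \boldsymbol{g}_i\boldsymbol{g}_i^\top$. In the reference setting of the preceding discussion, $J^{(L:\ell+1)}(\bx_i)\equiv\bar J$, and therefore
\begin{align*}
\Gamma^{(\ell),\mathrm{BP}}=\frac1N\sum_i \bar J^\top\boldsymbol{g}_i\boldsymbol{g}_i^\top\bar J=\bar J^\top\Gamma^{(L)}\bar J .
\end{align*}
I would first prove (a) and (b) for this exact congruence, and then treat the deviation $\tau_J$ as a perturbation.

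For (a): \Cref{ass:bp_jac_conc} gives $\sigma_{\min}(\bar J)\ge e^{-\bar\rho}>0$, so $\bar J$ is invertible. Congruence by an invertible matrix preserves the kernel dimension, because $\bar J^\top M\bar J v=0$ iff $M^{1/2}\bar J v=0$ iff $\bar J v\in\ker M$. Hence $\rank(\Gamma^{(\ell),\mathrm{BP}})=\rank(\Gamma^{(L)})$.

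For (b), I would bound the numerator and denominator of $\erank$ separately with $M:=\Gamma^{(L)}\succeq0$. For the trace, $\tr(\bar J^\top M\bar J)=\tr(M\bar J\bar J^\top)\ge\sigma_{\min}(\bar J)^2\tr M$, since $\bar J\bar J^\top\succeq\sigma_{\min}^2 I$ and $M\succeq0$. Squaring gives $(\tr\Gamma^{(\ell),\mathrm{BP}})^2\ge e^{-4\bar\rho}(\tr M)^2$. For the second moment, $\tr((\bar J^\top M\bar J)^2)=\tr(M\bar J\bar J^\top M\bar J\bar J^\top)\le\sigma_{\max}(\bar J)^2\tr(M^2\bar J\bar J^\top)\le\sigma_{\max}^4\tr(M^2)$. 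Here I would use twice that $\tr(AB)\le\lambda_{\max}(B)\tr A$ for $A\succeq0$; the relevant $A$ is $M^{1/2}\bar J\bar J^\top M^{1/2}$-type symmetric PSD, obtained by cycling. This gives $\tr((\Gamma^{(\ell),\mathrm{BP}})^2)\le e^{4\bar\rho}\tr(M^2)$. Dividing the two bounds yields $\erank(\Gamma^{(\ell),\mathrm{BP}})\ge e^{-8\bar\rho}\erank(\Gamma^{(L)})$, which is exactly the constant in the statement. This is reassuring evidence that the intended proof is this congruence computation.

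The main obstacle is the deviation $\tau_J>0$. With sample-dependent Jacobians, $\sum_i J_i^\top\boldsymbol{g}_i\boldsymbol{g}_i^\top J_i$ is no longer a congruence of $\Gamma^{(L)}$. Exact rank equality can then fail in either direction, for instance when the rank increases under generic perturbations. My first attempt would be to read the statement as concerning the reference kernel $\bar J^\top\Gamma^{(L)}\bar J$, i.e.\ the regime $\tau_J=0$ that the assumption approximates, and to state this interpretation explicitly.

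I would then add a robustness remark. Write $J_i=\bar J+E_i$ with $\|E_i\|_{\mathrm{op}}\le\tau_J$. Then
\begin{align*}
\|\Gamma^{(\ell),\mathrm{BP}}-\bar J^\top\Gamma^{(L)}\bar J\|\le(2e^{\bar\rho}\tau_J+\tau_J^2)\tr\Gamma^{(L)} .
\end{align*}
Continuity of $\tr$ and $\tr(\cdot^2)$ then shows that the $\erank$ bound holds up to an $O(\tau_J e^{\bar\rho})$ relative correction. Alternatively, if $\tau_J\le e^{-\bar\rho}(1-e^{-\bar\rho'})$, every $J_i$ is invertible with singular values in $[e^{-\bar\rho'},e^{\bar\rho'}]$ and the same trace inequalities apply termwise. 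For rank, one only obtains the lower bound $\rank\ge$ the rank of the part of $\Gamma^{(L)}$ whose eigenvalues exceed the perturbation size.
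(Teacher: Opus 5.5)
Your $\tau_J=0$ congruence argument is the paper's own leading-term computation. The paper also writes $\Gamma^{(\ell),\mathrm{BP}}=\bar J^\top\Gamma^{(L)}\bar J$ plus a remainder, and bounds the trace below by $\sigma_{\min}(\bar J)^2\tr\Gamma^{(L)}$ and the Frobenius norm above by $\sigma_{\max}(\bar J)^2\|\Gamma^{(L)}\|_F$, exactly as you do. Your reading of the statement also matches what the paper's proof delivers: exact rank equality only in the limit $\tau_J\to0$, and otherwise the same Weyl-type lower bound you describe. What does not hold up is your quantitative treatment of $\tau_J>0$.

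Set $\Delta:=2e^{\bar\rho}\tau_J+\tau_J^2$. The remainder is bounded by $\Delta\tr\Gamma^{(L)}$. In the denominator of $\erank$, however, it is compared with $\|\Gamma^{(L)}\|_F=\tr\Gamma^{(L)}/\sqrt{\erank(\Gamma^{(L)})}$. What you actually get is therefore
\[
\erank(\Gamma^{(\ell),\mathrm{BP}})\ge\frac{(e^{-2\bar\rho}-\Delta)^2}{\bigl(e^{2\bar\rho}+\Delta\sqrt{\erank(\Gamma^{(L)})}\bigr)^2}\,\erank(\Gamma^{(L)}).
\]
This is a relative correction of order $\tau_Je^{\bar\rho}\sqrt{\erank(\Gamma^{(L)})}$, not $O(\tau_Je^{\bar\rho})$. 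That is why the paper needs $\tau_J\sqrt{\erank(\Gamma^{(L)})}=o(1)$ to recover $e^{-8\bar\rho}$.

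Your ``termwise'' alternative fails for the same reason. Per-sample conditioning of the $J_i$ controls the trace term by term. But $\tr\bigl((\Gamma^{(\ell),\mathrm{BP}})^2\bigr)=N^{-2}\sum_{i,j}(\boldsymbol g_i^\top J_iJ_j^\top\boldsymbol g_j)^2$ contains cross terms that are not comparable with $(\boldsymbol g_i^\top\boldsymbol g_j)^2$.

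Here is a concrete counterexample. In $\R^{N+1}$ let $\boldsymbol g_i=\be_i$ for $i\in[N]$, $\bar J=I$, and $J_i=I+t\,\be_i\be_{N+1}^\top$ with $t\in(0,1)$. Then $\tau_J=t$, and every $J_i$ has singular values in $[1-t,1+t]$. We get $J_i^\top\boldsymbol g_i=\be_i+t\,\be_{N+1}$ and
\[
\erank(\Gamma^{(\ell),\mathrm{BP}})=\frac{(1+t^2)^2}{t^4+(1+2t^2)/N}\;\longrightarrow\;\frac{(1+t^2)^2}{t^4},
\qquad
\erank(\Gamma^{(L)})=N.
\]
At fixed $t$ the ratio tends to zero as $N\to\infty$. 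This contradicts both an $O(\tau_J)$ relative correction and your termwise prediction $\erank(\Gamma^{(\ell),\mathrm{BP}})\ge(1-t)^8N$.

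The $\sqrt{\erank(\Gamma^{(L)})}$ factor is therefore essential, not an artifact of the analysis. With it restored, your robustness remark becomes the paper's Steps~4--5.
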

The proof of Theorem~\ref{thm:bp_esd} is deferred to Appendix~\ref{sec:proof_bp_esd}.
Intuitively, Assumption~\ref{ass:bp_jac_conc} says that along the data support, the samplewise backward Jacobians remain close to a common well-conditioned transport map. As a result, BP propagates the output-layer supervision backward mainly by a shared change of coordinates: error directions can rotate and rescale, but they do not disappear. 
Part~(a) formalizes this exact rank preservation, while part~(b) shows that even the softer diversity measure given by effective rank can deteriorate by at most the depth-independent constant factor $\exp(-8\bar{\rho})$. This is the structural advantage that the next theorem aggregates across layers.
Together, Theorems \ref{thm:bp_esd} and \ref{thm:elc_scaling} let us compare the representation capabilities of BP and FFA.

To this end, we define the \emph{effective learning capacity (ELC)} of an algorithm $\mathcal{A}$ as the sum of the effective ranks of the per-layer error-signal Grams above the rank-one floor:
$\ELC(\mathcal{A}):= \sum_{\ell\in [L]} \bigl(\erank(\Gamma^{(\ell),\mathcal{A}})-1\bigr)$, which represents the total number of independent directions of error signal that the algorithm can provide across all layers.
The subtracted $1$ in the definition accounts for the fact that even a rank-one Gram contributes one direction of error signal, so we only count the excess diversity above this trivial contribution.

\begin{theorem}[Scaling via effective learning capacity]\label{thm:elc_scaling}
    It holds that:
    (a)~$\ELC({\mathrm{BP}}) \geq L\max\bigl\{0,\,\exp(-8\bar{\rho})\,\erank(\Gamma^{(L)})-1\bigr\} = \Omega(L)$.
    (b)~$\ELC({\mathrm{FFA}}) \le O\lp\frac{\sqrt{\bar\rho^*}}{1-\sqrt{\bar\rho^*}}\rp = O(1)$.
\end{theorem}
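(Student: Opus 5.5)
Part~(a) follows directly from Theorem~\ref{thm:bp_esd}(b) applied layer by layer. For each $\ell\in[L]$, Assumption~\ref{ass:bp_jac_conc} holds with the same constant $\bar\rho$. The reason is that $\bar\rho$ bounds the residual Jacobian product through $(1+\rho/L)^{L-\ell}\le e^{\rho}$, which is uniform in $\ell$. Theorem~\ref{thm:bp_esd}(b) therefore gives $\erank(\Gamma^{(\ell),\mathrm{BP}})-1\ge \exp(-8\bar\rho)\erank(\Gamma^{(L)})-1$ for every $\ell$. Since $\erank\ge 1$ always, each summand is also $\ge 0$, so it is bounded below by $\max\{0,\exp(-8\bar\rho)\erank(\Gamma^{(L)})-1\}$. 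Summing over $\ell$ gives the stated bound. Whenever the output Gram is diverse enough that $\erank(\Gamma^{(L)})>e^{8\bar\rho}$, which holds since it is a depth-independent constant, this bound is $\Omega(L)$.

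For part~(b), I would show that each layer's excess effective rank $\erank(\Gamma^{(\ell),\mathrm{FFA}})-1$ is controlled by the distance of the kernel from the rank-one matrix, $\|\Sigma^{(\ell)}-J_N\|_F$. I would then feed in the envelope of Theorem~\ref{thm:kernel_contraction}(b), $\|\Sigma^{(\ell)}-J_N\|_F\le C_{\mathrm{env}}(\bar\rho^*)^{\ell/2}\|\Sigma^{(0)}-J_N\|_F$, using $e^{-|\lambda_1|/2}=\sqrt{\bar\rho^*}$. Summing the geometric series $\sum_{\ell\ge1}(\sqrt{\bar\rho^*})^\ell=\sqrt{\bar\rho^*}/(1-\sqrt{\bar\rho^*})$ then yields the claimed $O(1)$ bound, with the prefactor absorbing $C_{\mathrm{env}}$, $\|\Sigma^{(0)}-J_N\|_F$, and the perturbation constants below.

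The per-layer step is a perturbation estimate for effective rank. By the FFA error-signal formula, $\boldsymbol\delta^{(\ell),\mathrm{FFA}}(\bx_i)=s_i\nabla_{\bh}\goodness(\bh_i^{(\ell)})$ with scalars $|s_i|\le1$. I would restrict to the collapsed-gradient (norm/mean goodness) subclass flagged in the text, where $\nabla_{\bh}\goodness$ depends on $\bh$ through its LN-normalized direction.

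\begin{enumerate}[label=(\roman*),itemsep=2pt]
\item By Assumption~\ref{ass:G_smooth}, $\|\nabla\goodness(\bh_i)-\nabla\goodness(\bh_j)\|\le L_{\goodness}\|\bh_i-\bh_j\|$. Combined with $\|\tilde\bh_i-\tilde\bh_j\|^2\le 2d\|\Sigma^{(\ell)}-J_N\|_F$, this makes every $\boldsymbol\delta_i$ equal to $s_i\bar g+\be_i$ for a common vector $\bar g$, with $\|\be_i\|\lesssim \epsilon_\ell^{1/2}$, where $\epsilon_\ell:=\|\Sigma^{(\ell)}-J_N\|_F$.
\item Write $\Gamma=a\,\bar g\bar g^\top+E$ with $\|E\|_F\lesssim\epsilon_\ell^{1/2}$.
\item The map $M\mapsto (\tr M)^2/\tr(M^2)$ is Lipschitz near a rank-one matrix with trace bounded away from zero, and equals $1$ on rank-one matrices. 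Hence $\erank(\Gamma^{(\ell),\mathrm{FFA}})-1\lesssim\|E\|_F/\tr\Gamma$.
\end{enumerate}

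The main obstacle is the square root in step~(i). A Lipschitz comparison only gives a bound of order $\epsilon_\ell^{1/2}$, which would produce $(\bar\rho^*)^{\ell/4}$ rather than $(\bar\rho^*)^{\ell/2}$. To fix this I would first use the second-order structure of $\erank$ near rank one. Writing $\Gamma=\lambda_1uu^\top+\Gamma_\perp$, one has $\erank-1\approx 2\tr\Gamma_\perp/\lambda_1$. The trace $\tr\Gamma_\perp$ is a sum of squared orthogonal components $\|\be_i\|^2\lesssim\epsilon_\ell$, so the bound becomes linear in $\epsilon_\ell$. The envelope then gives exactly the $(\sqrt{\bar\rho^*})^\ell$ decay, and the sum reproduces the stated form.

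A secondary point is the lower bound on $\tr\Gamma$, which needs $|s_i|$ and $\|\nabla\goodness\|$ bounded away from zero. I would take this from the margin $\gamma$ of Theorem~\ref{thm:pl_single} within $\cW_R$. Finally, the requirement that the trajectory stay in the linearization neighborhood, needed to invoke Theorem~\ref{thm:kernel_contraction}(b), is inherited from its hypothesis.
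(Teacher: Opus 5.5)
Your proposal is correct modulo the small points at the end. Its skeleton matches the paper's. Part~(a) is the same layerwise summation of Theorem~\ref{thm:bp_esd}(b) together with $\erank\ge 1$. Part~(b) runs the same chain: kernel envelope, then a near-rank-one $\Gamma^{(\ell),\mathrm{FFA}}$, then a geometric series.

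The real difference is the per-layer estimate. The paper writes $\tilde\bh_\ell(\bx)=\sqrt d\,(\bar{\bv}+\boldsymbol\epsilon(\bx))$ with $\|\boldsymbol\epsilon(\bx)\|=O((\sqrt{\bar\rho^*})^\ell)$. It then splits $\Gamma^{(\ell),\mathrm{FFA}}=c_\ell\bar{\bu}_\ell\bar{\bu}_\ell^\top+R_\ell$ with $\|R_\ell\|_F$ of that order, and reads off $\erank-1=O(\|R_\ell\|_F)$ to first order. But $\|\tilde\bh_i-\tilde\bh_j\|^2=2d\,(1-[\Sigma^{(\ell)}]_{ij})$. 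Hence Theorem~\ref{thm:kernel_contraction}(b) only gives $\|\boldsymbol\epsilon\|=O((\bar\rho^*)^{\ell/4})$, which is exactly the square-root loss you flag. With the rate the envelope actually supports, the paper's first-order argument yields $\sum_\ell(\bar\rho^*)^{\ell/4}$. That is still $O(1)$, but it is not the stated $\sqrt{\bar\rho^*}/(1-\sqrt{\bar\rho^*})$.

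Your second-order observation supplies the missing cancellation. The parts of $R_\ell$ that are linear in the deviations have the form $\bar g\be^\top+\be\bar g^\top$, and they vanish after projecting off $\bar g$. So $\erank-1$ is governed by a quadratic quantity, hence is linear in $\|\Sigma^{(\ell)}-J_N\|_F$. Your route costs one extra step but genuinely justifies the constant in the statement.

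To make this step nonasymptotic, let $\lambda=\lambda_{\max}(\Gamma)$ and $t=\mathrm{tr}\,\Gamma-\lambda$.
\begin{itemize}
\item You have $\erank(\Gamma)\le(\lambda+t)^2/\lambda^2$, i.e.\ $\erank(\Gamma)-1\le 2t/\lambda+(t/\lambda)^2$.
\item For $w=\bar g/\|\bar g\|$, you have $t\le\mathrm{tr}\bigl((I-ww^\top)\Gamma\bigr)=\frac1N\sum_i\|(I-ww^\top)\be_i\|^2$.
\item You then need $\lambda\ge w^\top\Gamma w$ to be bounded below. This requires a lower bound on $\|\bar g\|$ in addition to the margin on $|s_i|$; the paper simply posits that $c_\ell$ is bounded below.
\end{itemize}

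Two smaller points.
\begin{itemize}
\item In (a), the $\Omega(L)$ conclusion needs $\erank(\Gamma^{(L)})>e^{8\bar\rho}$ as a hypothesis. Depth-independence of $\erank(\Gamma^{(L)})$ does not imply it; the paper leaves this implicit as well.
\item In (b), your restriction to goodnesses whose gradient direction depends on $\bh$ only through $\mathrm{LN}(\bh)$ is actually necessary. $\Sigma^{(\ell)}$ controls neither the mean nor the scale of the raw $\bh^{(\ell)}$. The plain norm goodness $\goodness(\bh)=\|\bh\|^2$, with gradient $2\bh$, falls outside this class when activations have nonzero mean. So either exclude it or control the $\mathbf{1}$-component separately. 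The paper's proof glosses over the same point by evaluating $\nabla_{\bh}\goodness$ at $\sqrt d\,\bar{\bv}$.
\end{itemize}
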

The proof of Theorem~\ref{thm:elc_scaling} is deferred to Appendix~\ref{app:proof-elc_scaling}.
Part~(a) follows by summing the per-layer bound from Theorem~\ref{thm:bp_esd} over all $L$ layers, because BP's chain rule propagates the output-layer error signal backward without rank loss (up to the constant factor $e^{-8\bar\rho}$).
If $\bar\rho$ is sufficiently small, every layer retains $\Omega(\Gamma^{(L)})$ independent gradient directions, and the network as a whole accumulates $\Omega(L)$ independent learning directions.
In contrast, Part~(b) reflects the consequence of kernel contraction: once $\Sigma^{(\ell)}\to J_N$, the FFA error signal at layer~$\ell$ is determined by goodness gradients whose directions collapse toward a single vector, giving $\erank(\Gamma^{(\ell),\mathrm{FFA}})=1+O((\sqrt{\bar\rho^*})^\ell)$ and summing up to a $L$-independent upper bound $O(\sqrt{\bar\rho^*}/(1-\sqrt{\bar\rho^*}))$.
Consequently, BP's learning diversity grows linearly with depth, while FFA's total learning diversity is bounded by a constant independent of depth.
From this perspective, BP's advantage is not just an optimization issue but also a representational one.

\begin{figure}[t]
\centering
\caption{
    \textbf{Left:} Multi-layer FFA convergence in the toy model. The vertical dotted line marks $k=10^3$.
    \textbf{Right:} Test accuracy curve of ResNet-18 @ CIFAR-10 training.}
\includegraphics[width=0.42\textwidth]{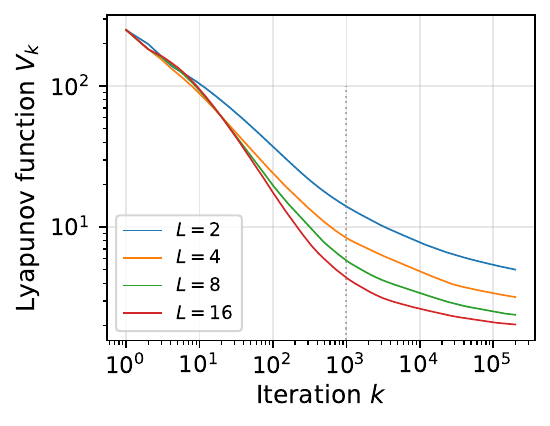}
\includegraphics[width=0.43\textwidth]{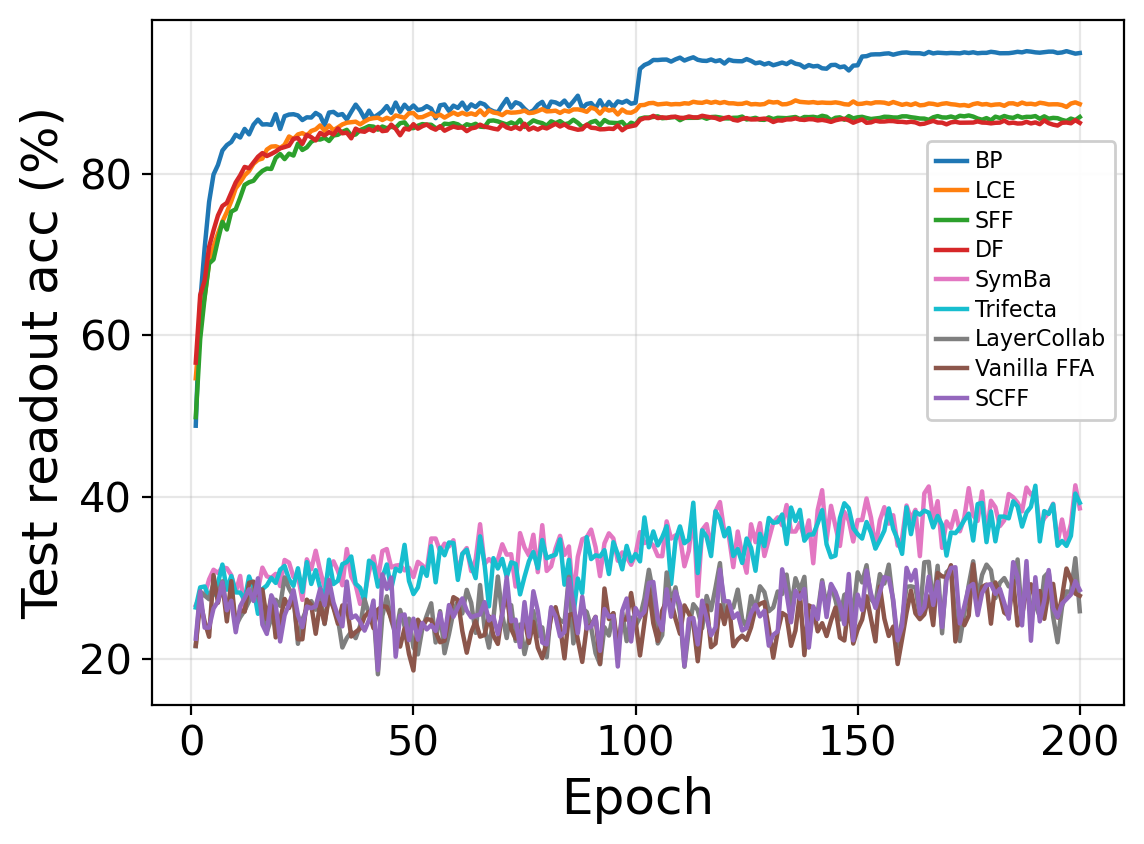}
\vspace{-5pt}
\label{fig:exp2}
\vspace{-1em}
\end{figure}
\section{Empirical Results}
\label{sec:experiments}
This section compares three groups of training algorithms: BP, FFA variants, and other local learning methods. Our BP baseline is standard end-to-end backpropagation. For the FFA family, we present representative state-of-the-art variants including Vanilla FFA~\cite{hinton2022forward}, SymBa~\cite{lee2023symba}, Self-Contrastive Forward-Forward (SCFF)~\cite{chen2024selfcontrastive}, Trifecta~\cite{dooms2023trifecta}, and Layer Collaboration~\cite{lorberbom2024layer}. We also consider other local supervised learning methods, including Local Cross-Entropy (LCE)~\cite{nokland2019local}, Scalable Forward-Forward (SFF)~\cite{krutsylo2025scalable}, and Distance-Forward (DF)~\cite{wu2024distance}. Within each group, we report only a representative subset rather than an exhaustive list of variants; see Appendix~\ref{app:cifar10_cnn_bench} for a broader comparison. Extended results and detailed experimental settings are in Appendix~\ref{app:extended_experiments}.
The code is available at \url{https://github.com/Zhaoxian-Wu/ffa-theory}

\paragraph{Toy multi-layer convergence.}
The left panel of Figure~\ref{fig:exp2} plots the convergence of the Lyapunov function $V_k$ on a log--log scale for $L \in \{2,4,8,16\}$ in the multi-layer toy model. Two phases are visible: an initial linear descent for $k < 10^3$, followed by a slowing down at a depth-dependent error floor. This empirically confirms the claim of Theorem~\ref{thm:multi_layer} that FFA converges only to a non-vanishing error plateau. See Appendix~\ref{app:exp2_toy_setup} for more details on the experimental setup.

\paragraph{Image classification.}
Figure~\ref{fig:cifar10_cnn_benchmark} summarizes CIFAR-10 classification accuracy across ResNet depths (18/24/56) using a detached readout on the learned representations. BP remains the strongest method at every depth ($0.95$--$0.96$). Local-learning-based FFA methods, including LCE, SFF, and DF, are consistently below BP by roughly $5$--$8$ percentage points (pp), while goodness-based FFA variants remain much lower at $0.31$--$0.42$. Thus, even on a moderate vision task, both classes of FFA methods underperform BP, with the gap being especially severe for goodness-based FFA rules. The right panel of Figure~\ref{fig:exp2} presents the convergence curves of various methods. The curves fall into three groups: BP and local-based methods increase steadily, while a persistent gap remains between them. In contrast, the accuracy of goodness-based methods increases slowly, which verifies the gap between FFA and BP.

\begin{figure}[t]
\centering
\caption{CIFAR-10 classification accuracy across ResNet depths (ResNet18/24/56), measured by a detached readout. BP remains best at every depth; supervised local-learning methods trail by a moderate margin, while goodness-based FFA methods trail substantially.}
\includegraphics[width=0.8\textwidth]{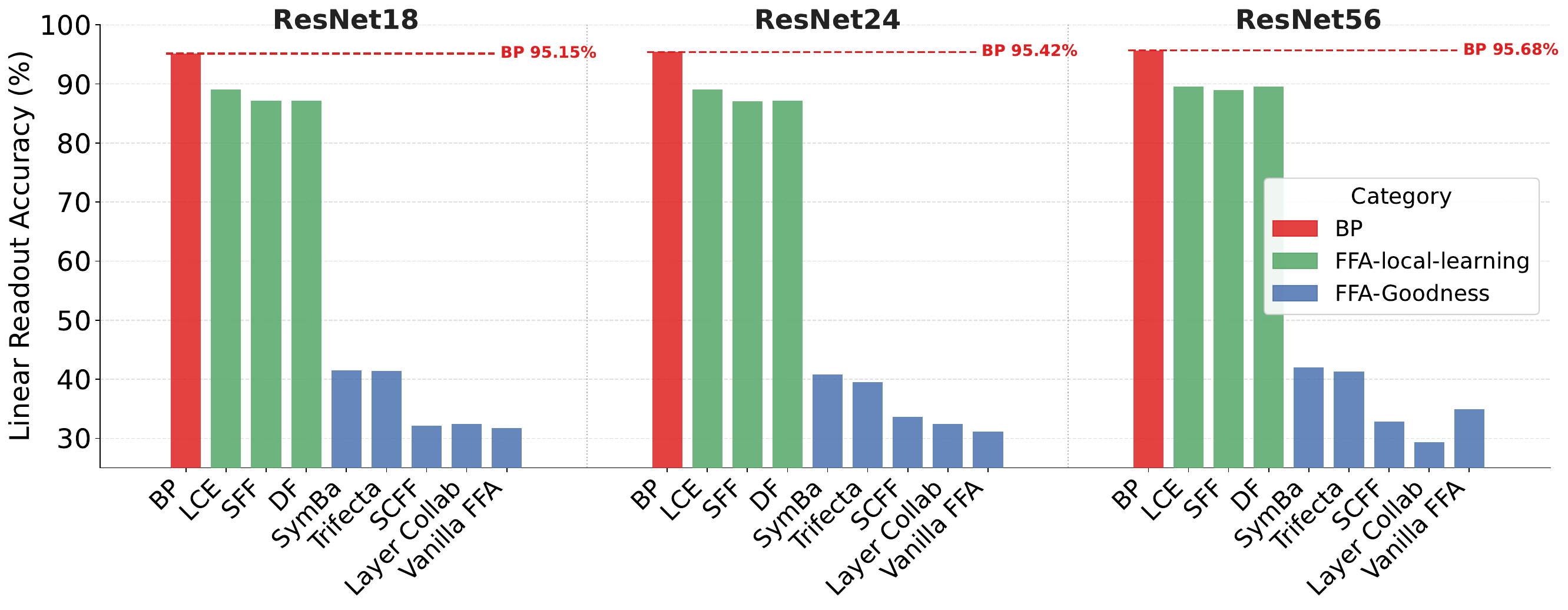}
\vspace{-9pt}
\label{fig:cifar10_cnn_benchmark}
\vspace{-1em}
\end{figure}

\paragraph{Controlled locality intervention.}
We directly study the price of locality by controlling the partitioning of a fixed 12-layer CNN backbone into contiguous blocks.
The gradient signal can flow freely within each block but is blocked across blocks, so the $12\mathbin{\times}1$ partition is strictly local, and the $1\mathbin{\times}12$ partition is fully global.
Table~\ref{tab:locality_intervention} reports the resulting controlled grouped-FFA sweep; see Appendix~\ref{app:cnn12_locality_intervention} for details. Every non-local grouped-FFA treatment improves best accuracy, reduces its final FFA loss, and increases mean layerwise error-signal effective rank relative to the strictly local control. The results therefore support the narrower causal conclusion that relaxing strict layer locality within this fixed grouped-FFA design can mitigate low-diversity learning signals and recover substantial accuracy. 

\begin{table}[t]
\centering
\caption{Controlled locality intervention on CIFAR-10 with CNN12. Entries are mean $\pm$ sample standard deviation over three seeds.}
\label{tab:locality_intervention}
\small
\setlength{\tabcolsep}{3.2pt}
\begin{tabular}{lccc}
\toprule
Blocks $\mathbin{\times}$ layers/block & Best acc. & Final loss & Mean $\erank(\Gamma)$ \\
\midrule
$12\mathbin{\times}1$ & $53.02\pm0.48\%$ & $1.1014\pm0.0034$ & $2.198\pm0.042$ \\
$6\mathbin{\times}2$  & $58.09\pm1.85\%$ & $1.0455\pm0.0068$ & $5.146\pm0.592$ \\
$4\mathbin{\times}3$  & $67.57\pm4.35\%$ & $1.0064\pm0.0158$ & $4.856\pm0.552$ \\
$3\mathbin{\times}4$  & $72.22\pm1.07\%$ & $0.9795\pm0.0018$ & $5.135\pm2.164$ \\
$2\mathbin{\times}6$  & $76.21\pm0.48\%$ & $0.9657\pm0.0027$ & $7.613\pm3.828$ \\
$1\mathbin{\times}12$ & $78.74\pm4.06\%$ & $0.9550\pm0.0199$ & $7.975\pm1.214$ \\
\midrule
BP & $85.67\pm0.09\%$ & $5.87\times10^{-5}\pm7.13\times10^{-5}$ & $31.654\pm3.793$ \\
\bottomrule
\end{tabular}
\vspace{-.8em}
\end{table}

\paragraph{Transformer pre-training.}
To compare BP and FFA on more challenging tasks, we conducted Transformer pre-training experiments under Chinchilla-style compute budgets~\cite{hoffmann2022training} on OpenWebText. 
We select FFA and LCE as the representative goodness-based and local-learning-based methods, respectively. 
Table~\ref{tab:chinchilla} shows that BP scales smoothly from $138$ to $28$ perplexity as the model grows.
While absolute perplexities decrease across all methods, the multiplicative gap between LCE/NCE and BP \emph{widens} with scale (NCE/BP ranges from $3.73\times$ to $9.19\times$, LCE/BP from $1.20\times$ to $1.51\times$), indicating that larger models amplify the relative performance gap. The language-model setting therefore shows the largest and most persistent gap: increasing scale helps BP much more than it helps local objectives.

\begin{table}[ht!]
\centering
\vspace{-1em}
\caption{Perplexity of Transformer pre-training on OpenWebText under Chinchilla-style budgets.
}
\label{tab:chinchilla}
\small
\begin{threeparttable}
\begin{tabular}{lrrrrrrrr}
\toprule
Scale & Params & Budget & BP & FFA & LCE & FFA/BP & LCE/BP \\
\midrule
tiny   ($L{=}2$) &  7M & 0.14B & 138 & 516 & 166 & 3.73$\times$ & 1.20$\times$ \\
small  ($L{=}4$) & 16M & 0.32B &  68 & 313 &  84 & 4.59$\times$ & 1.22$\times$ \\
medium ($L{=}6$) & 30M & 0.60B &  48 & 275 &  64 & 5.74$\times$ & 1.33$\times$ \\
large  ($L{=}8$) & 51M & 1.02B &  39 & 256 &  54 & 6.55$\times$ & 1.38$\times$ \\
xlarge ($L{=}12$) & 124M & 2.48B &  28 & 258 &  42 & 9.19$\times$ & 1.51$\times$ \\
\bottomrule
\end{tabular}
\end{threeparttable}
\vspace{-1em}
\end{table}

\paragraph{Controlled spectral intervention.}
To test whether low-rank update geometry is mechanistically linked to the accuracy gap, we intervened directly on every convolutional update $\Delta W=U\operatorname{diag}(s_1,\ldots,s_q)V^\top$ in a CNN3 CIFAR-10 benchmark, while holding the backbone, data, and 200-epoch budget fixed and retaining each algorithm's original objective and optimizer.
For $p\in[0, 1]$, we replaced its singular values by $\lambda_i(p)=\sqrt{(1-p)s_i^2+p\bar\lambda}$, where $\bar\lambda=q^{-1}\sum_j s_j^2$, and used $U\operatorname{diag}(\lambda_1(p),\ldots,\lambda_q(p))V^\top$ as the update.
This interpolation preserves the Frobenius norm, recovers the original algorithm at $p=0$, and produces a full-rank flat spectrum, a Muon-like update direction, at $p=1$.
Moreover, the participation-ratio effective rank of each transformed update is non-decreasing in $p$.
Table~\ref{tab:spectral_intervention} presents the complete BP/FFA interpolation sweep with the Jacobian-concentration diagnostic.
This result provides intervention evidence that low-rank update geometry is a contributing bottleneck, while the remaining $21.23$-point BP--FFA gap at $p=1$ shows that spectral flattening alone does not address the locality issues.

\begin{table}[t]
\centering
\caption{
    \textbf{(Left)} Controlled spectral interpolation on CNN3/CIFAR-10.
    Entries report best test accuracy and terminal mean layerwise participation-ratio $\erank(\Gamma)$.
    \textbf{(Right)} Finite-sample Jacobian-concentration diagnostics of the residual model \eqref{eq:resnet_arch}.
    Each entry reports mean $\pm$ sample standard deviation over three seeds.}
\begin{subtable}[t]{0.68\textwidth}
\centering
\small
\begin{tabular}{rcccc}
\toprule
& \multicolumn{2}{c}{Best accuracy} & \multicolumn{2}{c}{$\erank(\Gamma)$} \\
\cmidrule(lr){2-3}\cmidrule(lr){4-5}
$p$ & BP & FFA & BP & FFA \\
\midrule
$0.0$  & $77.56{\pm}0.75\%$ & $47.40{\pm}0.98\%$ & $42.78{\pm}2.27$ & $3.43{\pm}0.49$ \\
$0.2$  & $78.80{\pm}0.39\%$ & $53.41{\pm}1.25\%$ & $50.75{\pm}2.04$ & $3.85{\pm}0.24$ \\
$0.4$  & $79.08{\pm}0.17\%$ & $55.30{\pm}1.21\%$ & $53.01{\pm}0.75$ & $4.27{\pm}0.43$ \\
$0.6$  & $79.11{\pm}0.27\%$ & $55.87{\pm}1.65\%$ & $53.85{\pm}3.61$ & $3.95{\pm}0.83$ \\
$0.8$  & $79.13{\pm}0.43\%$ & $57.27{\pm}1.15\%$ & $50.84{\pm}1.38$ & $4.49{\pm}0.64$ \\
$1.0$  & $78.97{\pm}0.29\%$ & $57.74{\pm}0.50\%$ & $44.48{\pm}1.96$ & $4.42{\pm}0.61$ \\
\bottomrule
\end{tabular}
\end{subtable}
\begin{subtable}[t]{0.29\textwidth}
\centering
\begin{tabular}{rc}
\toprule
$L$ & $\tau_J$ \\
\midrule
$4$  & $1.144\pm0.070$ \\
$8$  & $0.742\pm0.080$ \\
$16$ & $0.461\pm0.049$ \\
$32$ & $0.318\pm0.026$ \\
$64$ & $0.206\pm0.014$ \\
\bottomrule
\end{tabular}
\end{subtable}
\label{tab:spectral_intervention}
\label{tab:jacobian_concentration}
\end{table}

\paragraph{Empirical Jacobian concentration.}
To examine Assumption~\ref{ass:bp_jac_conc} directly, we measured Jacobian concentration with standard Kaiming initialization in the residual model \eqref{eq:resnet_arch}.
For each of $N=512$ inputs $\bx_i$ independently sampled from the standard Gaussian distribution and each layer index $\ell$, we set
$\bar J^{(L:\ell+1)} := \frac{1}{N}\sum_{i=1}^{N} J^{(L:\ell+1)}(\bx_i)$, and estimate $\tau_J$ as the maximum deviation of any single Jacobian from the mean: $D_i^{(\ell)} := \left\|J^{(L:\ell+1)}(\bx_i)-\bar J^{(L:\ell+1)}\right\|_{\mathrm{op}}$.
For each depth, we report the conservative maximum over the measured layer indices of the sample maximum; Table~\ref{tab:jacobian_concentration} gives the mean $\pm$ sample standard deviation over three seeds. The results show that the Jacobian products concentrate well, especially in deep models.
\section{Conclusions and Limitations}
\label{sec:conclusion}

This paper provides the first rigorous convergence theory for the Forward-Forward Algorithm (FFA) and identifies two coupled structural mechanisms behind its persistent gap to backpropagation: an \textbf{optimization floor} from concurrent layer updates, and \textbf{geometric representational collapse} from kernel contraction.
We first prove that FFA satisfies the PL inequality at each layer and converges at a linear rate to a depth-dependent error floor (Theorems~\ref{thm:pl_single},~\ref{thm:multi_layer})---a moving-target residual that concurrent layer updates cannot eliminate.
Under the i.i.d. kernel-product hypotheses of Theorem~\ref{thm:kernel_contraction}, we further prove that the representation kernel $\Sigma^{(\ell)}$ contracts exponentially toward rank one at rate $\bar\rho^*$, leaving FFA with only a depth-summable excess learning capacity, while BP preserves capacity proportional to depth (Theorem~\ref{thm:elc_scaling}).
Empirical results verify that the FFA underperforms BP on a wide range of benchmarks, and that the gap widens with depth and scale.

\textbf{Limitation.} 
Our diagnosis isolates two provable, structural mechanisms that explain the observed FFA--BP gap, while it may not be an exhaustive explanation for all failures of FFA.
We leave exploration of these additional factors, as well as translating the diagnosis into a new local learning algorithm with provable improvements, to future work. 
\bibliographystyle{unsrt}
\bibliography{references}

%% file: appendix/A_proofs.tex
\section{Useful Lemmas and their Proof}
\label{app:useful-lemmas}

\subsection{Proof of Lemma~\ref{lem:gradient} (Gradient Expression)}
\label{app:proof-lem-grad}
\begin{proof}
For a positive sample, direct computation using the chain rule and the sigmoid derivative $\sigma'(z) = \sigma(z)(1-\sigma(z))$ yields
\begin{align}
    &\ \nabla_{W^{(\ell)}} \left[-\log\sigma(\goodness^{(\ell)} - \theta)\right] \\
    =&\ -\frac{\sigma'(\goodness^{(\ell)} - \theta)}{\sigma(\goodness^{(\ell)} - \theta)} \nabla_{W^{(\ell)}} \goodness^{(\ell)}
    = -(1-\sigma(\goodness^{(\ell)} - \theta)) \nabla_{W^{(\ell)}} \goodness^{(\ell)}
    = -p^-(\bx) \nabla_{W^{(\ell)}} \goodness^{(\ell)}.
    \nonumber
\end{align}
For a negative sample, the sign from differentiating
$\theta-\goodness^{(\ell)}$ cancels the sign from the logistic loss:
\begin{align}
    \nabla_{W^{(\ell)}}
    \left[-\log\sigma\bigl(\theta-\goodness^{(\ell)}\bigr)\right] 
    =&\ \left(1-\sigma\bigl(\theta-\goodness^{(\ell)}\bigr)\right)
    \nabla_{W^{(\ell)}}\goodness^{(\ell)}
    = p^+(\bx)\nabla_{W^{(\ell)}}\goodness^{(\ell)}.
\end{align}
Denote the positive and negative samples sets as $\cD^+=\{\bx_i^+:i\in[n^+]\}$ and $\cD^-=\{\bx_j^-:i\in[n^-]\}$, respectively. The two empirical expectations therefore give
\begin{align}
    &\ \nabla \cL^{(\ell)}(W^{(\ell)}) \\
    =&\ -\frac{1}{n^+}\sum_{i=1}^{n^+}
    p^-(\bx_i^+)\nabla_{W^{(\ell)}}\goodness^{(\ell)}(\bx_i^+;W^{(\ell)})
    +\frac{1}{n^-}\sum_{i=1}^{n^-}
    p^+(\bx_i^-)\nabla_{W^{(\ell)}}\goodness^{(\ell)}(\bx_i^-;W^{(\ell)}) 
    \nonumber\\
    =&\ \sum_{i=1}^{n^+}[\br]_i
    \nabla_{W^{(\ell)}}\goodness^{(\ell)}(\bx_i^+;W^{(\ell)})
    +\sum_{i=1}^{n^-}[\br]_{i}
    \nabla_{W^{(\ell)}}\goodness^{(\ell)}(\bx_i^-;W^{(\ell)})
    = \sum_{i=1}^{N}[\br]_i
    \nabla_{W^{(\ell)}}\goodness\!\left(\bh_i^{(\ell)}\right).
    \nonumber
\end{align}
Here the second equality is precisely the definition $[\br]_i=-p^-(\bx_i^+)/n^+$ for positive samples and
$[\br]_{n^++i}=p^+(\bx_i^-)/n^-$ for negative samples.
Finally, applying $\mathrm{vec}(\cdot)$ and using that the $s$th column of $\Phi$ is $\boldsymbol{\psi}_s^{(\ell)}$ gives $\mathrm{vec}\!\left(\nabla\cL^{(\ell)}(W^{(\ell)})\right)
    = \sum_{i=1}^{N}[\br]_i\boldsymbol{\psi}_i^{(\ell)}
    = \Phi\br$.
\end{proof}
\subsection{Lemma \ref{thm:dynamic_theta} and its proof (Conditional dynamic threshold tracking)}

\begin{lemma}[Conditional dynamic threshold tracking]\label{thm:dynamic_theta}
Under Assumptions~\ref{ass:bounded}--\ref{ass:gram}, consider gradient descent with step size $\eta=1/\beta^{(\ell)}$ on the dynamic threshold schedule \eqref{eq:dyn_capped_schedule}.
For each positive sample define $\Delta_{i,k}^+=\goodness_i(W_k^{(\ell)})-\theta_k$, and
analogously $\Delta_{j,k}^-=\goodness_j(W_k^{(\ell)})-\theta_k$ for negative
samples. Define the explicit constants
\begin{align}\label{eq:dyn_constants}
    L_0       &:= \cL^{(\ell)}(W_0;\theta_0)+c_\theta\log(1+K_{\max}),\\
    M_g^*     &:= M_\goodness B_J^{(\ell)}\sqrt{\frac{2K_{\max} L_0}{\beta^{(\ell)}}},\\
    M_+       &:= \max_{i\in\cD^+}\Delta_{i,0}^+ + M_g^*,
\end{align}
where $M_\goodness$ is the derived goodness-gradient bound in~\ref{lem:derived_M_goodness} and $B_J^{(\ell)}$ is the parameter-Jacobian
norm of the local block on $\cW_R$. Assume the iterates stay in $\cW_R$,
and the following \emph{single, single-pass verifiable} initial-state
condition holds: there exists a target negative non-degeneracy level
$\gamma_-\in(0,1/2)$ such that for every negative sample $j$,
\begin{align}\label{eq:dyn_init_combined}
    \Delta_{j,0}^-
    \;\geq\;
    \log\frac{\gamma_-}{1-\gamma_-}
    + M_g^* + (\theta_{\max}-\theta_0).
\end{align}
Then it holds that $|[\br]_i| \geq \gamma$and misclassification probabilities satisfies $n^+|[\br]_i| \leq 1-\gamma$ and $n^-|[\br]_i| \leq 1-\gamma$.
\end{lemma}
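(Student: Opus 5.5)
The plan is to control every sigmoid margin $\Delta^\pm_{i,k}$ uniformly over $k$. I would bound the goodness drift $|\goodness_i(W_k^{(\ell)})-\goodness_i(W_0^{(\ell)})|$ by $M_g^*$, and the threshold drift $|\theta_k-\theta_0|$ by $\theta_{\max}-\theta_0$. The residual entries are $[\br]_i=-p^-(\bx_i)/n^+$ for positives and $p^+(\bx_j)/n^-$ for negatives, so it suffices to show that each margin $\Delta_{i,k}^\pm$ stays in a fixed compact interval on which $\sigma$ is bounded away from both $0$ and $1$. The lower bound $|[\br]_i|\ge\gamma$ and the upper bounds $n^\pm|[\br]_i|\le 1-\gamma$ then both follow by taking $\gamma$ to be the minimum of the resulting sigmoid values.

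\textbf{Step 1 (goodness drift).} On $\cW_R$ the map $W\mapsto \goodness_i(W)$ has gradient of norm at most $M_\goodness B_J^{(\ell)}$. This comes from chaining the derived bound of Lemma~\ref{lem:derived_M_goodness} with the parameter Jacobian of the block. Hence $|\goodness_i(W_k)-\goodness_i(W_0)|\le M_\goodness B_J^{(\ell)}\|W_k-W_0\|_F$.

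To bound the path length, I would use the descent lemma with $\eta=1/\beta^{(\ell)}$ at each fixed threshold, which gives $\cL(W_{k+1};\theta_k)\le\cL(W_k;\theta_k)-\frac{1}{2\beta^{(\ell)}}\|\nabla\cL\|_F^2$. When the threshold moves from $\theta_k$ to $\theta_{k+1}$, the loss changes by at most $|\theta_{k+1}-\theta_k|$, since $-\log\sigma$ is $1$-Lipschitz. Telescoping these increments gives a total increase of at most $c_\theta\log(1+K_{\max})$. Summing therefore yields $\sum_k\|\nabla\cL_k\|^2\le 2\beta^{(\ell)}L_0$.

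By Cauchy--Schwarz, over the first $K_{\max}$ steps $\|W_k-W_0\|_F\le \eta\sum\|\nabla\cL_k\|\le\sqrt{2K_{\max}L_0/\beta^{(\ell)}}$. This gives the drift bound $M_g^*$. After $K_{\max}$ the threshold is frozen, and I would appeal to the PL linear convergence (which makes the remaining path length geometrically summable) or simply absorb this into the $\cW_R$ assumption.

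\textbf{Step 2 (positive samples).} Since $\theta_k\ge\theta_0$, we get $\Delta_{i,k}^+\le\Delta_{i,0}^++M_g^*\le M_+$. Hence $p^-(\bx_i)=1-\sigma(\Delta_{i,k}^+)\ge 1-\sigma(M_+)>0$, which gives the lower bound for positives. For the upper bound, the loss is decreasing up to the threshold increment, so $\Delta_{i,k}^+$ is bounded below by $-\sigma^{-1}$-type constants via $-\log\sigma(\Delta)\le L_0 n^+$. This keeps $p^-\le 1-\gamma$.

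\textbf{Step 3 (negative samples).} We have $\Delta_{j,k}^-\ge \Delta_{j,0}^- - M_g^* - (\theta_{\max}-\theta_0)\ge\log\frac{\gamma_-}{1-\gamma_-}$ by \eqref{eq:dyn_init_combined}. Hence $p^+=\sigma(\Delta_{j,k}^-)\ge\gamma_-$. The upper side $p^+\le 1-\gamma$ comes from the same loss-level bound $-\log\sigma(-\Delta^-)\le n^-L_0$. I would then set $\gamma$ as the minimum over these four constants, rescaled by $1/n^\pm$.

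\textbf{Main obstacle.} I expect the path-length bound in Step 1 to be the hardest part. The objective is nonstationary because $\theta_k$ changes, so plain monotone descent fails. Uniform-in-$k$ control beyond $K_{\max}$ also needs either the PL rate or the standing $\cW_R$ hypothesis. I would first try the telescoping argument with the threshold-increment correction, and fall back on the assumed containment in $\cW_R$ (via $\|W_k-W_0\|\le R$) if the $\sqrt{K_{\max}}$ factor cannot be made uniform.
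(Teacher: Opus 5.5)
Your proposal follows the paper's proof essentially step for step: the descent lemma at $\eta=1/\beta^{(\ell)}$ plus a threshold-Lipschitz bound, telescoping to $\sum_{k<K_{\max}}\|\nabla\cL_k\|_F^2\le 2\beta^{(\ell)}L_0$, Cauchy--Schwarz for the drift $M_g^*$, and then the same positive, saturation and negative margin bounds. One small fix: per-term $1$-Lipschitzness of $-\log\sigma$ only gives constant $2$ for the sum of the two averages, and the constant $1$ needed to match $L_0$ comes from their opposite signs in $\theta$, as in \eqref{eq:dyn_dthetaL}.

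The post-$K_{\max}$ drift you flag as the main obstacle is handled no more rigorously in the paper, which simply asserts that $M_g^*$ ``remains the operative bound''. Your fallbacks would change the drift constant: to $M_\goodness B_J^{(\ell)}R$, or to a PL path length that is circular unless bootstrapped, since $\mu=\mu_H\gamma^2$ needs this very $\gamma$. If only some $\gamma>0$ is required, you can bypass the issue entirely: Assumption~\ref{ass:bounded} and Lemma~\ref{lem:derived_M_goodness} give $|\goodness(\bh)|\le B_\goodness:=|\goodness(\mathbf{0})|+M_\goodness B_h$, so every margin lies in $[-B_\goodness-\theta_{\max},\,B_\goodness+\theta_{\max}]$ uniformly in $k$.
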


\begin{proof}[Proof of Lemma~\ref{thm:dynamic_theta}]

\textbf{Step 1: bounded loss and bounded total drift on $[0,K_{\max}]$.}
Direct computation of $\partial_\theta\cL^{(\ell)}$ gives
\begin{align}\label{eq:dyn_dthetaL}
    \partial_\theta\cL^{(\ell)}(W;\theta)
    = \frac{1}{n^+}\sum_i p^-(\bx_i^+;W,\theta)
      -\frac{1}{n^-}\sum_j p^+(\bx_j^-;W,\theta)
    \in[-1,1],
\end{align}
so $|\cL^{(\ell)}(W;\theta')-\cL^{(\ell)}(W;\theta)|\leq|\theta'-\theta|$
uniformly in $W\in\cW_R$. The envelope theorem
($\cL^{(\ell)*}(\theta)=\inf_{W\in\cW_R}\cL^{(\ell)}(W;\theta)$ is a
pointwise infimum of $1$-Lipschitz functions of $\theta$, hence
$1$-Lipschitz) extends the same bound to $\cL^{(\ell)*}(\theta)$.

By $\beta^{(\ell)}$-smoothness of $\cL^{(\ell)}(\cdot;\theta_k)$ at fixed
$\theta_k$ and step size $\eta=1/\beta^{(\ell)}$,
\begin{align}
    \cL^{(\ell)}(W_{k+1};\theta_k)
    \leq
    \cL^{(\ell)}(W_k;\theta_k)
    -\frac{1}{2\beta^{(\ell)}}\|\nabla\cL_k\|_F^2.
\end{align}
Combining with the $1$-Lipschitz $\theta$-dependence,
\begin{align}
    \cL^{(\ell)}(W_{k+1};\theta_{k+1})
    \leq
    \cL^{(\ell)}(W_k;\theta_k)
    -\frac{1}{2\beta^{(\ell)}}\|\nabla\cL_k\|_F^2
    + c_\theta\log\frac{k+2}{k+1}\cdot\mathbb{I}[k<K_{\max}].
\end{align}
Telescoping over $k=0,\dots,K_{\max}-1$ and using $\cL^{(\ell)}\geq 0$,
\begin{align}\label{eq:dyn_grad_sumsq}
    \frac{1}{2\beta^{(\ell)}}\sum_{k=0}^{K_{\max}-1}\|\nabla\cL_k\|_F^2
    \leq
    \cL^{(\ell)}(W_0;\theta_0)+c_\theta\log(1+K_{\max})
    = L_0,
\end{align}
and along the way
\begin{align}\label{eq:dyn_loss_bound}
    \cL^{(\ell)}(W_k;\theta_k)\leq L_0
    \qquad\text{for all } k\leq K_{\max}.
\end{align}
For $k>K_{\max}$, $\theta_k=\theta_{\max}$ is frozen, so the
fixed-threshold descent inequality alone keeps $\cL^{(\ell)}(W_k;\theta_{\max})$
non-increasing, and~\eqref{eq:dyn_loss_bound} continues to hold.

By Cauchy--Schwarz applied to~\eqref{eq:dyn_grad_sumsq},
\begin{align}
    \sum_{k=0}^{K_{\max}-1}\|\nabla\cL_k\|_F
    \leq \sqrt{K_{\max}}\Bigl(\sum_{k=0}^{K_{\max}-1}\|\nabla\cL_k\|_F^2\Bigr)^{1/2}
    \leq \sqrt{2\beta^{(\ell)} K_{\max} L_0}.
\end{align}
On $\cW_R$, the goodness map $W^{(\ell)}\mapsto\goodness_i(W^{(\ell)})$ is
$M_\goodness B_J^{(\ell)}$-Lipschitz (the derived gradient bound~\ref{lem:derived_M_goodness}
composed with a parameter Jacobian of operator-norm
$B_J^{(\ell)}$), so for every sample $i$,
\begin{align}\label{eq:dyn_total_drift}
    \sum_{k=0}^{K_{\max}-1}|\goodness_i(W_{k+1})-\goodness_i(W_k)|
    \leq M_\goodness B_J^{(\ell)}\eta\sum_{k}\|\nabla\cL_k\|_F
    \leq M_\goodness B_J^{(\ell)}\sqrt{\frac{2K_{\max} L_0}{\beta^{(\ell)}}}
    = M_g^*.
\end{align}
For $k>K_{\max}$, descent at frozen $\theta_{\max}$ only contributes further
bounded drift, but the relevant comparison below is to $\theta_{\max}$, so
$M_g^*$ as defined remains the operative bound.

\textbf{Step 2: per-sample margin bounds.}
By~\eqref{eq:dyn_total_drift} and the monotone threshold
$\theta_k\geq\theta_0$,
\begin{align}
    \Delta_{i,k}^+
    = \Delta_{i,0}^+ + (\goodness_i(W_k)-\goodness_i(W_0)) - (\theta_k-\theta_0)
    \leq \Delta_{i,0}^+ + M_g^*
    \leq M_+
    \qquad\forall i\in\cD^+,\;k\geq 0.
\end{align}
Hence $p^-(\bx_i^+;W_k,\theta_k)=\sigma(-\Delta_{i,k}^+)\geq\sigma(-M_+)$.

\emph{Saturation gaps from bounded loss (former bounded-loss part of (D2)).}
Each non-negative per-sample term is at most $\cL^{(\ell)}(W_k;\theta_k)\leq L_0$,
so for positives,
\begin{align}
    \tfrac{1}{n^+}\log(1+e^{-\Delta_{i,k}^+})\leq L_0
    \;\Longrightarrow\;
    \Delta_{i,k}^+\geq -\log(e^{n^+L_0}-1),
\end{align}
hence $1-p^-(\bx_i^+;W_k,\theta_k)=\sigma(\Delta_{i,k}^+)\geq e^{-n^+L_0}$.
Symmetrically $1-p^+(\bx_j^-;W_k,\theta_k)\geq e^{-n^-L_0}$ for negatives.

\emph{Negative lower bound (former~(D2) non-degeneracy).}
By~\eqref{eq:dyn_total_drift}, $\goodness_j(W_k)\geq\goodness_j(W_0)-M_g^*$,
hence
\begin{align}
    \Delta_{j,k}^-
    \geq \Delta_{j,0}^- - M_g^* - (\theta_k-\theta_0)
    \geq \Delta_{j,0}^- - M_g^* - (\theta_{\max}-\theta_0)
    \stackrel{\eqref{eq:dyn_init_combined}}{\geq}
    \log\frac{\gamma_-}{1-\gamma_-},
\end{align}
hence $p^+(\bx_j^-;W_k,\theta_k)=\sigma(\Delta_{j,k}^-)\geq\gamma_-$ for all
$k\geq 0$.
Putting together the positive and negative bounds, it holds that
\begin{align}
    \frac{\sigma(-M_+)}{n^+}\leq |[\br]_i^+|\leq \frac{1-e^{-n^+L_0}}{n^+},
    \qquad
    \frac{\gamma_-}{n^-}\leq |[\br]_j^-|\leq \frac{1-e^{-n^-L_0}}{n^-},
\end{align}
\end{proof}

\subsection{Lemma~\ref{lem:smooth} and its proof (Loss smoothness for a local block)}
\begin{lemma}[Loss smoothness for a local block]\label{lem:smooth}
Under Assumptions~\ref{ass:bounded} and~\ref{ass:G_smooth}, in a local good region where the block map
$W^{(\ell)}\mapsto T^{(\ell)}_{W^{(\ell)}}(\bh^{(\ell-1)})$ has parameter
Jacobian norm at most $B_J^{(\ell)}$ and parameter-Jacobian Lipschitz constant
$L_J^{(\ell)}$, each $\cL^{(\ell)}$ is $\beta^{(\ell)}$-smooth with
\begin{align}\label{eq:smooth_const}
    \beta^{(\ell)}
    \leq
    \tfrac14 M_{\goodness}^2 (B_J^{(\ell)})^2
    + L_{\goodness}(B_J^{(\ell)})^2
    + M_{\goodness}L_J^{(\ell)}.
\end{align}
\end{lemma}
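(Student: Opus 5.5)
The plan is to compute the Hessian of $\cL^{(\ell)}$ with respect to $W^{(\ell)}$ (equivalently, to bound the Lipschitz constant of $W^{(\ell)}\mapsto\nabla\cL^{(\ell)}(W^{(\ell)})$) sample by sample. Write each per-sample term as $f(\goodness(T_{W}(\bh^{(\ell-1)}))\mp\theta)$, where $f(z)=-\log\sigma(\pm z)$ is a scalar logistic loss. The three summands in \eqref{eq:smooth_const} match the three terms produced by differentiating this composition twice:
\begin{enumerate}[label=(\roman*),itemsep=1pt]
\item the curvature of the logistic loss, times the outer product of the goodness gradient in $W$;
\item the curvature of $\goodness$ in $\bh$, pulled back through the parameter Jacobian;
\item the curvature of the block map itself, contracted against $\nabla_{\bh}\goodness$.
\end{enumerate}
The expectation over $\cD^+$ and $\cD^-$ is an average with weights $1/n^\pm$ summing to one per class. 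Bounding each per-sample term uniformly and averaging therefore gives the claim, after we account for how the two class-averages combine.

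First step: fix a sample and let $g(W)=\goodness(T_W(\bh^{(\ell-1)}))$. By the chain rule, $\nabla_W g=J_W^\top\nabla_{\bh}\goodness$, where $J_W$ is the parameter Jacobian, so $\|\nabla_W g\|\le B_J^{(\ell)}M_\goodness$. Here $M_\goodness$ bounds $\|\nabla_{\bh}\goodness\|$ on the ball of radius $B_h$ (Assumption~\ref{ass:bounded}). That bound is the derived lemma referenced earlier; it follows from Assumption~\ref{ass:G_smooth} together with boundedness of the activations.

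Second step: bound the Lipschitz constant of $\nabla_W g$ by adding and subtracting,
\[
J_W^\top\nabla_{\bh}\goodness(T_W)-J_{W'}^\top\nabla_{\bh}\goodness(T_{W'})
=J_W^\top\bigl(\nabla_{\bh}\goodness(T_W)-\nabla_{\bh}\goodness(T_{W'})\bigr)+(J_W-J_{W'})^\top\nabla_{\bh}\goodness(T_{W'}).
\]
The first piece is at most $B_J^{(\ell)}L_\goodness\|T_W-T_{W'}\|\le L_\goodness(B_J^{(\ell)})^2\|W-W'\|_F$, using the mean value theorem inside the convex ball $\cW_R$. The second piece is at most $M_\goodness L_J^{(\ell)}\|W-W'\|_F$.

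Third step: handle the outer logistic factor. The scalar function $f$ has $|f'|\le1$ and $0\le f''=\sigma(1-\sigma)\le 1/4$. For the composite $f\circ g$,
\[
\nabla(f\circ g)(W)-\nabla(f\circ g)(W')=f'(g(W))\bigl(\nabla g(W)-\nabla g(W')\bigr)+\bigl(f'(g(W))-f'(g(W'))\bigr)\nabla g(W').
\]
The first piece is controlled by step two. The second is at most $\tfrac14|g(W)-g(W')|\,\|\nabla g\|\le\tfrac14 M_\goodness^2(B_J^{(\ell)})^2\|W-W'\|_F$. Summing the three contributions and averaging over samples yields \eqref{eq:smooth_const}.

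The main obstacle is bookkeeping rather than any deep estimate, and it comes in two parts.
\begin{enumerate}[label=(\alph*),itemsep=1pt]
\item \textbf{Consistent Jacobian norms.} The norms of $J_W$ (operator norm from $\R^{d^{(\ell)}\times d^{(\ell-1)}}$ with the Frobenius norm into $\R^{d}$) must be used consistently, and $T_W$ must stay inside the region where $M_\goodness$ applies. This requires Assumption~\ref{ass:bounded} for both $W$ and $W'$; I would state it as a uniform-in-$\cW_R$ hypothesis.
\item \textbf{The constant from the two class-averages.} $\cL^{(\ell)}$ is a sum of two expectations, so naive summation doubles the constant. To match the stated bound exactly I would either absorb the factor into the normalization of $\cL^{(\ell)}$ or note that the inequality holds up to this constant; I would try the first option first.
\end{enumerate}
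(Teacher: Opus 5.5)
Your proposal is correct and follows the paper's proof essentially step for step. The paper also splits the per-sample gradient difference into two pieces: a sigmoid-weight difference times $\nabla_W\goodness$, bounded by $\tfrac14 M_{\goodness}^2(B_J^{(\ell)})^2$ via $\sigma(1-\sigma)\le\tfrac14$, and a weight of modulus at most $1$ times the goodness-gradient difference, which the chain rule bounds by $L_{\goodness}(B_J^{(\ell)})^2+M_{\goodness}L_J^{(\ell)}$. Your bookkeeping point (b) is well founded, and the paper does not address it; it simply says ``summing and taking expectations''. Because the positive and negative class-averages each carry total weight one, this argument only yields twice the stated constant unless $\cL^{(\ell)}$ is renormalized.
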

\begin{proof}
For a single sample, the per-sample gradient is
$g(W) = -\alpha(W)\nabla_{W}\goodness$ with
$\alpha = 1 - \sigma(\goodness - \theta)$ (positive) or
$\alpha = \sigma(\goodness - \theta)$ (negative). Splitting the difference:
\begin{align*}
    g(W) - g(W') = \underbrace{-(\alpha(W){-}\alpha(W'))\nabla_W\goodness(W)}_{\text{(I)}} - \underbrace{\alpha(W')(\nabla_W\goodness(W) {-} \nabla_{W'}\goodness(W'))}_{\text{(II)}}.
\end{align*}
\textbf{Term (I):} $|\alpha(W)-\alpha(W')| \leq \frac{1}{4}|\goodness(W)-\goodness(W')| \leq \frac{1}{4}M_{\goodness}B_J^{(\ell)}\|W-W'\|_F$. Combined with $\|\nabla_W\goodness\|_F \leq M_{\goodness}B_J^{(\ell)}$, this gives the first term in~\eqref{eq:smooth_const}.

\textbf{Term (II):} $|\alpha|\leq1$ and the chain rule gives
\begin{align}
    \|\nabla_W\goodness(W)-\nabla_{W'}\goodness(W')\|_F
    \le
    L_{\goodness}(B_J^{(\ell)})^2\|W-W'\|_F
    + M_{\goodness}L_J^{(\ell)}\|W-W'\|_F .
\end{align}
Summing and taking expectations yields~\eqref{eq:smooth_const}.
\end{proof}

\subsection{Lemma~\ref{lem:perturbation} and its proof (Recursive Perturbation Bound)}

\begin{lemma}[Recursive perturbation bound]\label{lem:perturbation}
If Assumption~\ref{ass:bounded} (bounded activations) holds and the LayerNorm
denominator stays uniformly away from zero on the local good region, then for
any layer $\ell$ and gradient step size $\eta$:
\begin{align}
    \|\bh_{k+1}^{(\ell)} - \bh_k^{(\ell)}\|
    \;\leq\; \frac{C_W \eta\,e^{\rho}}{L}\sum_{j=1}^{\ell}\|\nabla_{W^{(j)}}\cL_k^{(j)}\|,
\end{align}
where $C_W = C_{\phi\mathrm{LN}}B_{h^{(\ell-1)}}$, and
$C_{\phi\mathrm{LN}}$ is a uniform local Lipschitz constant of
$z\mapsto\phi(\mathrm{LN}(z))$. 
\end{lemma}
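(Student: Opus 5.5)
We plan to bound the hidden-state drift by unrolling the residual recursion \eqref{eq:resnet_arch} one layer at a time. We compare the forward pass at iterate $k$ with that at $k+1$, where all weights $W^{(1)},\dots,W^{(\ell)}$ move simultaneously. Write $F_j(W,\bh):=\frac1L\phi(\mathrm{LN}(W\bh))$ and $\Delta_j:=\|\bh_{k+1}^{(j)}-\bh_k^{(j)}\|$, with $\Delta_0=0$ since the input $\bx$ is fixed. Then
\begin{align*}
\bh_{k+1}^{(j)}-\bh_k^{(j)}
=\bigl(\bh_{k+1}^{(j-1)}-\bh_k^{(j-1)}\bigr)
+\bigl[F_j(W_{k+1}^{(j)},\bh_{k+1}^{(j-1)})-F_j(W_{k+1}^{(j)},\bh_{k}^{(j-1)})\bigr]
+\bigl[F_j(W_{k+1}^{(j)},\bh_{k}^{(j-1)})-F_j(W_{k}^{(j)},\bh_{k}^{(j-1)})\bigr].
\end{align*}

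The second bracket is the upstream-propagation term. By the uniform local Lipschitz constant $C_{\phi\mathrm{LN}}$ of $z\mapsto\phi(\mathrm{LN}(z))$, which is valid because the LN denominator stays away from zero on the good region, and by $\|W\|_{\mathrm{op}}$ being bounded on $\cW_R$, this term is at most $\frac{\rho}{L}\Delta_{j-1}$. Here $\rho$ is the per-block Lipschitz constant of Theorem~\ref{thm:multi_layer}, which we take to absorb $C_{\phi\mathrm{LN}}\sup_{\cW_R}\|W\|_{\mathrm{op}}$.

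The third bracket is the parameter-perturbation term. It is at most $\frac{1}{L}C_{\phi\mathrm{LN}}\|(W_{k+1}^{(j)}-W_k^{(j)})\bh_k^{(j-1)}\|\le \frac{C_{\phi\mathrm{LN}}B_{h}}{L}\|W_{k+1}^{(j)}-W_k^{(j)}\|_{\mathrm{op}}$, where $B_h$ comes from Assumption~\ref{ass:bounded}. Using the FFA update \eqref{eq:ffa_update}, $\|W_{k+1}^{(j)}-W_k^{(j)}\|\le\eta\|\nabla_{W^{(j)}}\cL_k^{(j)}\|$, so this term is at most $\frac{C_W\eta}{L}\|\nabla_{W^{(j)}}\cL_k^{(j)}\|$.

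Combining the two bounds gives the linear recursion
\begin{align*}
\Delta_j\le\Bigl(1+\frac{\rho}{L}\Bigr)\Delta_{j-1}+\frac{C_W\eta}{L}\|\nabla_{W^{(j)}}\cL_k^{(j)}\|.
\end{align*}
A discrete Gr\"onwall unrolling from $\Delta_0=0$ yields
\begin{align*}
\Delta_\ell\le\frac{C_W\eta}{L}\sum_{j=1}^{\ell}\Bigl(1+\frac{\rho}{L}\Bigr)^{\ell-j}\|\nabla_{W^{(j)}}\cL_k^{(j)}\|.
\end{align*}
Since $\ell-j\le L$, we have $(1+\rho/L)^{\ell-j}\le(1+\rho/L)^L\le e^{\rho}$, which gives the claimed bound.

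The main obstacle is justifying the uniform Lipschitz constant $C_{\phi\mathrm{LN}}$ along the segment between the two iterates. LayerNorm's Lipschitz constant scales like $1/\sqrt{\mathrm{Var}+\varepsilon}$, so it is uniform only if the pre-activations $W\bh$ keep their centered variance bounded below throughout the step. Our plan is to invoke the lemma's hypothesis that the LN denominator is uniformly bounded away from zero on the good region, together with convexity of $\cW_R$ so that intermediate points stay in it. We then apply the mean value inequality along straight-line interpolations in $W$ and in $\bh$ separately. A secondary point is the constant: $B_{h^{(\ell-1)}}$ should really be $B_h$, applied uniformly in $j$, and the operator-norm bound on $W$ is hidden inside $\rho$. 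We will state both conventions explicitly.
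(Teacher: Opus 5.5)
Your proof is correct and is essentially the paper's argument. You use the same split into skip, input-change and parameter-change terms, derive the same recursion $\Delta_j\le(1+\rho/L)\Delta_{j-1}+\frac{C_W\eta}{L}\|\nabla_{W^{(j)}}\cL_k^{(j)}\|$, and unroll it with the same bound $(1+\rho/L)^{L}\le e^{\rho}$. The one cosmetic difference is the cross term: the paper inserts $\frac1L\phi(\mathrm{LN}(W_k^{(\ell)}\bh_{k+1}^{(\ell-1)}))$, so its $\rho$-Lipschitz step is taken at the current iterate $W_k$ rather than at $W_{k+1}$, which fits slightly more naturally with the bootstrap in Theorem~\ref{thm:multi_layer}. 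Your remark that $B_{h^{(\ell-1)}}$ should be read as the uniform $B_h$ is also correct.
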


\begin{proof}
Write $\delta\bh^{(\ell)} := \bh_{k+1}^{(\ell)} - \bh_k^{(\ell)}$. Note that the input $\bh_k^{(0)} = \bx$ is fixed across all iteration $k$ so the base case $\delta\bh^{(0)}=0$. We expand the proof in several steps.

\textbf{Step 1 (Per-layer decomposition).} The residual update~\eqref{eq:resnet_arch} gives, for any $\ell\ge 1$,
\begin{align}
    \bh_{k+1}^{(\ell)}
    &= \bh_{k+1}^{(\ell-1)} + \tfrac{1}{L}\,\phi\bigl(\mathrm{LN}(W_{k+1}^{(\ell)}\bh_{k+1}^{(\ell-1)})\bigr),\\
    \bh_{k}^{(\ell)}
    &= \bh_{k}^{(\ell-1)} + \tfrac{1}{L}\,\phi\bigl(\mathrm{LN}(W_{k}^{(\ell)}\bh_{k}^{(\ell-1)})\bigr).
\end{align}
Subtracting and inserting the cross term $\tfrac{1}{L}\phi(\mathrm{LN}(W_{k}^{(\ell)}\bh_{k+1}^{(\ell-1)}))$ yields the additive split
\begin{align}\label{eq:perturb_split}
    \delta\bh^{(\ell)}
    \;=&\; \underbrace{\delta\bh^{(\ell-1)}}_{\text{(skip)}}
    \;+\; \underbrace{\tfrac{1}{L}\bigl[\phi(\mathrm{LN}(W_{k+1}^{(\ell)}\bh_{k+1}^{(\ell-1)})) - \phi(\mathrm{LN}(W_{k}^{(\ell)}\bh_{k+1}^{(\ell-1)}))\bigr]}_{\text{(A) parameter change}}
    \bkeqwn
    \;+\; \underbrace{\tfrac{1}{L}\bigl[\phi(\mathrm{LN}(W_{k}^{(\ell)}\bh_{k+1}^{(\ell-1)})) - \phi(\mathrm{LN}(W_{k}^{(\ell)}\bh_{k}^{(\ell-1)}))\bigr]}_{\text{(B) input change}}.
    \nonumber
\end{align}

\textbf{Step 2 (Bounding the parameter-change term~(A)).}
By Assumption~\ref{ass:bounded}, $\|\bh_{k+1}^{(\ell-1)}\|\le
B_{h^{(\ell-1)}}$. The local LayerNorm regularity gives
\begin{align}
    \|\phi(\mathrm{LN}(W\bh))-\phi(\mathrm{LN}(W'\bh))\|
    \leq C_{\phi\mathrm{LN}}\|(W-W')\bh\|
    \leq C_{\phi\mathrm{LN}}B_{h^{(\ell-1)}}\|W-W'\|_{\mathrm{op}}.
\end{align}
Thus the map $W \mapsto \phi(\mathrm{LN}(W\bh))$ is $C_W$-Lipschitz at
$\bh = \bh_{k+1}^{(\ell-1)}$ in the operator norm, with
$C_W=C_{\phi\mathrm{LN}}B_{h^{(\ell-1)}}$ independent of
$\|W^{(\ell)}\|_{\mathrm{op}}$. Hence,
\begin{align}
    \norm{\text{(A)}} \;\le\; \frac{C_W}{L}\,\norm{W_{k+1}^{(\ell)} - W_{k}^{(\ell)}}_{\mathrm{op}}.
\end{align}
The gradient step $W_{k+1}^{(\ell)} = W_{k}^{(\ell)} - \eta \nabla_{W^{(\ell)}}\cL_k^{(\ell)}$ gives $\|W_{k+1}^{(\ell)} - W_{k}^{(\ell)}\|_{\mathrm{op}} \le \eta\|\nabla_{W^{(\ell)}}\cL_k^{(\ell)}\|$, so
\begin{align}\label{eq:perturb_A}
    \norm{\text{(A)}} \;\le\; \frac{C_W\,\eta}{L}\,\norm{\nabla_{W^{(\ell)}}\cL_k^{(\ell)}}.
\end{align}
The factor $1/L$ comes from the architectural residual scale and is what prevents the per-step weight perturbation from amplifying with depth.

\textbf{Step 3 (Bounding the input-change term~(B)).}
By definition, the un-scaled branch map $\bh \mapsto \phi(\mathrm{LN}(W_k^{(\ell)}\bh))$ is $\rho$-Lipschitz, hence the $1/L$-scaled branch in (B) is $(\rho/L)$-Lipschitz in its input. Therefore,
\begin{align}\label{eq:perturb_B}
    \norm{\text{(B)}} \;\le\; \frac{\rho}{L}\,\norm{\delta\bh^{(\ell-1)}}.
\end{align}

\textbf{Step 4 (Recursive inequality).} Triangle inequality on~\eqref{eq:perturb_split} and substitution of \eqref{eq:perturb_A}--\eqref{eq:perturb_B} give
\begin{align}\label{eq:perturb_recursion}
    \norm{\delta\bh^{(\ell)}}
    \;\le\; \Bigl(1+\tfrac{\rho}{L}\Bigr)\,\norm{\delta\bh^{(\ell-1)}}
    \;+\; \frac{C_W\,\eta}{L}\,\norm{\nabla_{W^{(\ell)}}\cL_k^{(\ell)}}.
\end{align}
The factor $1+\rho/L$ is the Lipschitz constant of one residual block (skip path contributes $1$, branch contributes $\rho/L$). Unrolling \eqref{eq:perturb_recursion} yields that
\begin{align}\label{eq:perturb_unrolled}
    \norm{\delta\bh^{(\ell)}}
    \;\le\; \frac{C_W\,\eta}{L}\,\sum_{j=1}^{\ell}\Bigl(1+\tfrac{\rho}{L}\Bigr)^{\ell-j}\,\norm{\nabla_{W^{(j)}}\cL_k^{(j)}}.
\end{align}

\textbf{Step 5 (Geometric closure).}
Since $0 \le \ell-j \le L$ for all $j\in[\ell]$, the elementary inequality $(1+x/L)^L \le e^x$ for $x\ge 0$ gives
\begin{align}
    \Bigl(1+\tfrac{\rho}{L}\Bigr)^{\ell-j} \;\le\; \Bigl(1+\tfrac{\rho}{L}\Bigr)^{L} \;\le\; e^{\rho}.
\end{align}
Substituting this termwise bound into~\eqref{eq:perturb_unrolled} yields the announced perturbation bound
\begin{align}
    \norm{\bh_{k+1}^{(\ell)} - \bh_k^{(\ell)}} \;\le\; \frac{C_W\,\eta\,e^{\rho}}{L}\sum_{j=1}^{\ell}\norm{\nabla_{W^{(j)}}\cL_k^{(j)}}
\end{align}
which completes the proof.
\end{proof}

\subsection{Lemma~\ref{lem:derived_M_goodness} and its proof (Goodness-Gradient Bound)}
\begin{lemma}[Goodness-gradient bound]\label{lem:derived_M_goodness}
Under Assumptions~\ref{ass:bounded} and~\ref{ass:G_smooth}, define
\begin{align}\label{eq:derived_M_goodness}
M_{\goodness}
:= \|\nabla_{\bh}\goodness(\mathbf{0})\| + L_{\goodness}B_h.
\end{align}
Then the goodness gradient is uniformly bounded over all represented activations:
\begin{align}
\|\nabla_{\bh}\goodness(\bh)\| \leq M_{\goodness}
\qquad\text{for every }\|\bh\|\leq B_h.
\end{align}
\end{lemma}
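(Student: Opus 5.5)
This is a one-line consequence of Assumption~\ref{ass:G_smooth}: evaluate the Lipschitz bound against the reference point $\bh'=\mathbf{0}$ and combine it with the triangle inequality.

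Fix any $\bh$ with $\|\bh\|\le B_h$. Since $\nabla_{\bh}\goodness$ is globally $L_{\goodness}$-Lipschitz on $\R^d$, it is Lipschitz in particular between $\bh$ and $\mathbf{0}$:
\begin{align*}
\|\nabla_{\bh}\goodness(\bh)\| \le \|\nabla_{\bh}\goodness(\mathbf{0})\| + \|\nabla_{\bh}\goodness(\bh)-\nabla_{\bh}\goodness(\mathbf{0})\| \le \|\nabla_{\bh}\goodness(\mathbf{0})\| + L_{\goodness}\|\bh\|.
\end{align*}
Bounding $\|\bh\|\le B_h$ gives exactly $M_{\goodness}$ as defined in \eqref{eq:derived_M_goodness}.

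To justify the phrase ``over all represented activations,'' I will note that Assumption~\ref{ass:bounded} places every $\bh^{(\ell)}(\bx)$, for every layer and input, in the ball of radius $B_h$. The bound therefore applies to all per-sample goodness gradients $\nabla_{\bh}\goodness(\bh^{(\ell)}_i)$ used in Lemmas~\ref{thm:dynamic_theta} and~\ref{lem:smooth}.

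The only point needing care is to confirm that $\|\nabla_{\bh}\goodness(\mathbf{0})\|$ is finite. This holds because $\goodness$ is differentiable everywhere: its gradient exists and is Lipschitz on all of $\R^d$. I expect no real obstacle beyond this.
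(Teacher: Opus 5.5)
Your proposal is correct and follows the paper's proof exactly: apply the triangle inequality around the reference point $\mathbf{0}$, use the global Lipschitz property from Assumption~\ref{ass:G_smooth}, and then bound $\|\bh\|\le B_h$ using Assumption~\ref{ass:bounded}. Your remark that $\|\nabla_{\bh}\goodness(\mathbf{0})\|$ is finite, since the gradient exists everywhere, is a harmless addition that the paper leaves implicit.
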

\begin{proof}
Let $\bh$ be any represented activation. Assumption~\ref{ass:bounded} gives $\|\bh\|\leq B_h$, and the global Lipschitz property in Assumption~\ref{ass:G_smooth} yields
\begin{align*}
\|\nabla_{\bh}\goodness(\bh)\|
&\leq \|\nabla_{\bh}\goodness(\mathbf{0})\|
   + \|\nabla_{\bh}\goodness(\bh)-\nabla_{\bh}\goodness(\mathbf{0})\| \\&\leq \|\nabla_{\bh}\goodness(\mathbf{0})\|
   + L_{\goodness}\|\bh\|
 \leq \|\nabla_{\bh}\goodness(\mathbf{0})\| + L_{\goodness}B_h
 = M_{\goodness}.
\end{align*}
Thus the stated uniform bound holds for every represented activation.
\end{proof}

\section{Full Proofs}
\label{app:proofs}

\subsection{Proof of Theorem~\ref{thm:pl_single} (Local PL for Single-Layer FFA)}
\label{app:proof_pl_single}

\begin{proof}

The $\beta^{(\ell)}$-smoothness of $\cL^{(\ell)}$ is established in Lemma~\ref{lem:smooth}.
The rest of the proof establishes the PL inequality.
For notational simplicity, we omit the variable $W^{(\ell)}$ in the following proof, e.g., writing $\cL^{(\ell)}$ instead of $\cL^{(\ell)}(W^{(\ell)})$ and $H^{(\ell)}$ instead of $H^{(\ell)}(W^{(\ell)})$.
By \Cref{ass:gram}, $\lambda_{\min}(H^{(\ell)}) \geq \mu_H$.
\begin{align}
    \norm{\nabla \cL^{(\ell)}}_F^2
    = \br^\top \Phi^\top \Phi\br
    = \br^\top H^{(\ell)} \br
    \geq \lambda_{\min}(H^{(\ell)})\norm{\br}^2.
\end{align}

We will show $\cL^{(\ell)} - \cL^{(\ell)*} \leq \frac{1}{\gamma^2}\norm{\br}^2$ in two steps. Define a set of notation
\begin{align}
    p_i^- := p^-(\bx_i^+), \quad
    p_i^+ := p^+(\bx_i^+), \quad
    r_i^+ := -p_i^- / n^+, \quad
    r_j^- := p_j^+ / n^-,
\end{align}
With which the loss contribution of positive sample $i$ is $-\frac{1}{n^+}\log(1-p_i^-)$ where $p_i^- = n^+|r_i^+|$ is its misclassification probability; analogously $-\frac{1}{n^-}\log(1-p_j^+)$ with $p_j^+ = n^-|r_j^-|$ for negative sample $j$.

\textbf{Bounded loss on $\cW_R$ implies a uniform saturation gap.}
By Assumption~\ref{ass:bounded} and the derived gradient bound~\ref{lem:derived_M_goodness}, the fundamental theorem of calculus on the segment from $\mathbf{0}$ to $\bh$ gives $|\goodness(\bh)|\leq B_\goodness:=|\goodness(\mathbf{0})|+M_\goodness B_h$ for every $\|\bh\|\leq B_h$. Hence this bound holds for every sample and $W^{(\ell)}\in\cW_R$. Each margin therefore satisfies $|\Delta_i| = |\goodness_i - \theta| \leq B_\goodness + |\theta|$, and the total loss is uniformly bounded:
\begin{align}\label{eq:pl_loss_ub}
    \cL^{(\ell)}(W^{(\ell)})
    \leq L_0
    := \log\!\left(1+e^{B_\goodness+|\theta|}\right)
    \quad\text{for all } W^{(\ell)} \in \cW_R.
\end{align}
Since $\cL^{(\ell)}$ is a sum of non-negative per-sample terms, each individual term is at most $\cL^{(\ell)} \leq L_0$. For a positive sample,
\begin{align}
    \tfrac{1}{n^+}\log(1+e^{-\Delta_i^+})
    \leq L_0
    \;\Longrightarrow\;
    \Delta_i^+ \geq -\log(e^{n^+L_0}-1),
\end{align}
hence $1 - p_i^- = \sigma(\Delta_i^+) \geq e^{-n^+L_0}$, i.e., $n^+|r_i^+| \leq 1 - e^{-n^+L_0}$. Symmetrically, $n^-|r_j^-| \leq 1 - e^{-n^-L_0}$ for every negative sample. Defining the loss-derived saturation gap
\begin{align}\label{eq:pl_gamma_L}
    \gamma_L := e^{-\max(n^+,n^-)L_0} > 0,
\end{align}
we obtain $n^\pm|[\br]_i| \leq 1-\gamma_L$ uniformly on $\cW_R$. After redefining $\gamma \leftarrow \min(\gamma, \gamma_L)$ if necessary, both this upper-saturation bound and the lower bound $|[\br]_i| \geq \gamma$ from condition~(b) hold with the same constant $\gamma$.

\textbf{Composing the two saturation bounds.}
Using the two bounds above, we chain two elementary inequalities:
\begin{enumerate}
    \item \textbf{Upper bound (log step).} For a positive sample $i$, since $p_i^- = n^+|r_i^+| \leq 1-\gamma$,
    \begin{align}
        -\frac{1}{n^+}\log(1-p_i^-)
        \leq \frac{p_i^-}{n^+(1-p_i^-)}
        \leq \frac{p_i^-}{n^+\gamma}
        = \frac{|r_i^+|}{\gamma}.
    \end{align}
    The same argument gives $-\frac{1}{n^-}\log(1-p_j^+) \leq \frac{|r_j^-|}{\gamma}$ for every negative sample $j$.
    \item \textbf{Lower bound (quadratic step).} Since $|r_i^+| \geq \gamma$ and $|r_j^-| \geq \gamma$, we have $\frac{|r_i^+|}{\gamma} \leq \frac{|r_i^+|^2}{\gamma^2}$ and $\frac{|r_j^-|}{\gamma} \leq \frac{|r_j^-|^2}{\gamma^2}$.
\end{enumerate}
Composing the two inequalities,
\begin{align}
    \cL^{(\ell)}
    &= \sum_{i\in[n^+]} -\frac{1}{n^+}\log(1-p_i^-)
      + \sum_{j\in[n^-]} -\frac{1}{n^-}\log(1-p_j^+) \\
    &\leq \frac{1}{\gamma^2}\left(
        \sum_{i\in[n^+]} |r_i^+|^2
        + \sum_{j\in[n^-]} |r_j^-|^2
    \right)
    = \frac{1}{\gamma^2}\norm{\br}^2.
\end{align}
Using $\cL^{(\ell)*} \geq 0$, we conclude
\begin{align}
    \cL^{(\ell)} - \cL^{(\ell)*} \leq \cL^{(\ell)} \leq \frac{1}{\gamma^2}\norm{\br}^2.
\end{align}

Combining, after absorbing the harmless normalization convention of
$H^{(\ell)}$ into $\mu_H$:
\begin{align}
    \norm{\nabla \cL^{(\ell)}}_F^2
    \geq
    \mu_H \gamma^2(\cL^{(\ell)}-\cL^{(\ell)*}),
\end{align}
establishing the PL inequality with $\mu = \mu_H \gamma^2$.

\end{proof}

\subsection{Proof of Theorem~\ref{thm:multi_layer} (Multi-layer FFA Convergence under Scaled Residuals)}
\label{sec:proof-multi-layer}

\begin{proof}
Let the iterate is in the \emph{invariant good region} at time $k$, i.e., $W_k^{(\ell)}\in\cW_R^{(\ell)}, \forall \ell\in[L]$, which means that the following layerwise properties hold for every $\ell\in[L]$:
\begin{enumerate}[label=(GR\arabic*)]
    \item \emph{Local block regularity.} The current block and its current input stay in the local neighborhood used by Theorem~\ref{thm:pl_single}: $W_k^{(\ell)}\in\cW_R^{(\ell)}$, the scaled block has bounded and Lipschitz parameter Jacobian on the segment from $W_k^{(\ell)}$ to $W_{k+1}^{(\ell)}$, and, for ReLU blocks, no training sample crosses a kink, i.e. $D_k^{(\ell)}(\bx)=D_0^{(\ell)}(\bx)$ on the training set.
    \item \emph{Sigmoid margin.} The sigmoid weights in the FFA Gram are uniformly non-degenerate:
    \begin{align}
        \min_i \alpha_{i,k}^{(\ell)} \geq \gamma_{\min},
    \end{align}
    after reducing $\gamma_{\min}$ by a universal constant factor if necessary.
    \item \emph{Gram conditioning.} The actual goodness-gradient Gram matrix remains well-conditioned:
    \begin{align}
        \lambda_{\min}\!\left(H^{(\ell)}(W_k^{(\ell)})\right) \geq \frac{\mu_H}{2}.
    \end{align}
    \item \emph{Uniform smoothness and PL.} For the lower-layer environment $W_{<\ell,k}$, the map
    $W^{(\ell)}\mapsto \cL^{(\ell)}(W^{(\ell)};W_{<\ell,k})$ is $\beta_{\max}$-smooth and satisfies the uniform PL inequality
    \begin{align}
        \left\|\nabla_{W^{(\ell)}}\cL^{(\ell)}(W_k^{(\ell)};W_{<\ell,k})\right\|_F^2
        \geq 2\mu_{\min}\bigl(\cL_k^{(\ell)}-\cL^{(\ell)*}\bigr).
    \end{align}
\end{enumerate}
Property (GR4) is not an additional assumption: by Theorem~\ref{thm:pl_single}, it follows from (GR1)--(GR3) together with the goodness regularity assumptions, with constants absorbed into $\mu_{\min}$ and $\beta_{\max}$. 
The proof is therefore a bootstrap. Conditional on (GR1)--(GR4) holding up to time $k$, Steps~1--3 prove the Lyapunov descent recursion. Step~4 then proves that the update also satisfies (GR1)--(GR4) at time $k+1$, using the scaled-ResNet perturbation bound, dynamic-threshold margin maintenance, and NTK/Gram stability. This closes the induction and avoids any circular use of the PL hypothesis.

\textbf{Step 1: Per-layer one-step decomposition.}
For each layer $\ell$, write
\begin{align}
    \cL_{k+1}^{(\ell)} - \cL_k^{(\ell)} 
    =&\ \underbrace{\cL^{(\ell)}(W_{k+1}^{(\ell)}; W_{<\ell,k}) - \cL^{(\ell)}(W_k^{(\ell)}; W_{<\ell,k})}_{\text{(A): descent}} \\
    &\ + \underbrace{\cL^{(\ell)}(W_{k+1}^{(\ell)}; W_{<\ell,k+1}) - \cL^{(\ell)}(W_{k+1}^{(\ell)}; W_{<\ell,k})}_{\text{(B): perturbation}}.
    \nonumber
\end{align}
The descent lemma plus PL gives $\text{(A)} \le -\eta \mu^{(\ell)}(\cL_k^{(\ell)} - \cL^{(\ell)*})$.

In term (B), the layer-$\ell$ weight is held fixed at $W_{k+1}^{(\ell)}$ and only the input $\bh^{(\ell-1)}$ varies between $W_{<\ell,k}$ and $W_{<\ell,k+1}$. For $\ell=1$ the input is fixed and $\delta^{(0)}=0$, so $(B)=0$. For $\ell\ge 2$, by Assumption~\ref{ass:sensitivity} and Lemma~\ref{lem:perturbation} applied at layer $\ell-1$ (whose drift only depends on lower-layer updates $j\le\ell-1$),
\begin{align}
    &\ |\text{(B)}|
    \;\le\; G_{\max}\,\|\bh_{k+1}^{(\ell-1)} - \bh_k^{(\ell-1)}\|
    \;\le\; \frac{G_{\max}C_W\eta\,e^\rho}{L}\sum_{j=1}^{\ell-1}\|\nabla_{W^{(j)}}\cL_k^{(j)}\| \\
    \le&\ G_{\max}C_W\eta\,e^\rho\sqrt{\frac{1}{L}\sum_{j=1}^{L}\|\nabla_{W^{(j)}}\cL_k^{(j)}\|^2}
    \le G_{\max}C_W\eta\,e^\rho\sqrt{2\beta_{\max}V_k}
    \nonumber\\
    =&\ G_{\max}C_W\eta\,e^\rho\sqrt{2\beta_{\max}}\sqrt{V_k} 
    \nonumber
\end{align}
where the second inequality is Cauchy--Schwarz and the third uses
$\|\nabla\cL_k^{(j)}\|^2\le 2\beta_{\max}
(\cL_k^{(j)}-\cL^{(j)*})$ for lower-bounded smooth losses.
Consequently, 
\begin{align}
    \cL_{k+1}^{(\ell)} - \cL^{(\ell)*} \le (1-\eta \mu^{(\ell)})(\cL_k^{(\ell)} - \cL^{(\ell)*}) + G_{\max}C_W\eta\,e^\rho\sqrt{2\beta_{\max}}\sqrt{V_k} 
\end{align}

\textbf{Step 3: Lyapunov aggregation.}
Write $\alpha := G_{\max}C_W e^\rho$. 
\emph{Young's inequality} ($ab \le a^2/(2\epsilon) + \epsilon b^2/2$ for $\epsilon>0$) with $a = \alpha\sqrt{2\beta_{\max}}$, $b=\sqrt{V_k}$, and $\epsilon = \mu_{\min}$:
\begin{align}
    \alpha\sqrt{2\beta_{\max}V_k}
    \;\le\; \frac{\alpha^2 \beta_{\max}}{\mu_{\min}}
        + \frac{\mu_{\min}}{2}V_k
    \;=\; \frac{\alpha^2\beta_{\max}}{\mu_{\min}} + \frac{\mu_{\min}}{2}V_k.
\end{align}

Substituting back yields the Lyapunov descent recursion
\begin{align}
    V_{k+1} - V_k
    \;\le\; -\frac{\eta\mu_{\min}}{2}V_k + \frac{\eta\alpha^2\beta_{\max}}{\mu_{\min}}.
\end{align}
Substituting $\alpha^2 = G_{\max}^2 C_W^2 e^{2\rho}$ yields
\begin{align}
    V_{k+1} \;\le\; \Bigl(1 - \frac{\eta\mu_{\min}}{2}\Bigr)V_k + \eta \cdot \frac{G_{\max}^2 C_W^2 e^{2\rho} \beta_{\max}}{\mu_{\min}}.
\end{align}
Telescoping the recursion gives
\begin{align}
    V_k \;\le\; 
    \Bigl(1 - \frac{\eta\mu_{\min}}{2}\Bigr)^k V_0 + 
    O\left(\frac{e^{2\rho}\beta_{\max}}{\mu_{\min}^2}\right) 
\end{align}

\textbf{Step 4: Trajectory stays in $\cW_R$.}

With PL constant $\mu$ and $\beta^{(\ell)}$-smoothness, the descent lemma gives:
\begin{align}
    \cL^{(\ell)}(W_{k+1}^{(\ell)}) 
    \leq&\ \cL^{(\ell)}(W_k^{(\ell)}) - \frac{\eta}{2\beta^{(\ell)}}\norm{\nabla \cL^{(\ell)}(W_k^{(\ell)})}_F^2 \\
    \leq&\ \cL^{(\ell)}(W_k^{(\ell)}) - \frac{\eta\mu}{\beta^{(\ell)}}(\cL^{(\ell)}(W_k^{(\ell)}) - \cL^{(\ell)*}).
    \nonumber
\end{align}
Rearranging the inequality yields that
\begin{align}
    \cL^{(\ell)}(W_{k+1}^{(\ell)}) - \cL^{(\ell)*} 
    \leq&\ (1 - \eta\mu/\beta^{(\ell)})(\cL^{(\ell)}(W_k^{(\ell)}) - \cL^{(\ell)*}) \\
    \leq&\ (1 - \eta\mu/\beta^{(\ell)})^{k+1}(\cL^{(\ell)}(W_0^{(\ell)}) - \cL^{(\ell)*})
    \nonumber
\end{align}
By linear convergence, the total parameter movement is bounded:
\begin{align}
    \norm{W_k^{(\ell)} - W_0^{(\ell)}}_F \leq&\ \sum_{s=0}^{k-1} \eta\norm{\nabla \cL^{(\ell)}(W_s^{(\ell)})}_F \leq \frac{\eta}{\beta^{(\ell)}}\sum_{s=0}^{\infty} \sqrt{2\beta^{(\ell)} (\cL^{(\ell)}(W_s^{(\ell)}) - \cL^{(\ell)*})} \\
    \leq&\ \frac{\eta\sqrt{2(\cL_0^{(\ell)} - \cL^{(\ell)*})}}{\sqrt{\beta^{(\ell)}}(1-\sqrt{1-\eta\mu/\beta^{(\ell)}})}
    = \frac{\sqrt{2(\cL_0^{(\ell)} - \cL^{(\ell)*})}}{\sqrt{\beta^{(\ell)}}}\cdot \frac{1+\sqrt{1-\eta\mu/\beta^{(\ell)}}}{\mu/\beta^{(\ell)}} \\
    \leq&\ \frac{2\sqrt{2(\cL_0^{(\ell)} - \cL^{(\ell)*})\beta^{(\ell)}}}{\mu}
    \leq R,
    \nonumber
\end{align}
where we used $1+\sqrt{1-\eta\mu/\beta^{(\ell)}} \leq 2$, and the last inequality holds by taking
\begin{align}
    R = 2\sqrt{2} \sqrt{\frac{(\cL_0^{(\ell)} - \cL^{(\ell)*})\beta^{(\ell)}}{\mu^2}}
    = \Theta(\sqrt{(\cL_0^{(\ell)} - \cL^{(\ell)*}) \beta^{(\ell)} / \mu^2}).
\end{align}

\end{proof}

\subsection{Proof of Theorem~\ref{thm:kernel_contraction} (Kernel Contraction)}
\label{app:kernel_contraction_proof}

Every finite network is the prefix $0\leq \ell\leq L$ of one common infinite i.i.d.\ block sequence. The proof uses Oseledec's multiplicative ergodic theorem~\cite{oseledec1968multiplicative}, applied to the linearized random kernel transformation operator. The argument has three parts: (i) write the kernel transition as a random map, (ii) invoke Oseledec to obtain a deterministic top Lyapunov exponent $\lambda_1$ for the linearized cocycle, and (iii) convert the sign of this exponent into a finite-prefix envelope for $\|\Sigma^{(\ell)}-J_N\|_F$.

\paragraph{Step 1: Random kernel transformation.}
Conditional on $W$, $T_W:\mathcal S\to\mathcal S$ is a measurable random map on the cone of PSD unit-diagonal kernels, where $\mathcal{S} = \{\Sigma \in \R^{N\times N}:\, \Sigma \succeq 0,\, \mathrm{diag}(\Sigma) = \mathbf{1}\}$. Its i.i.d.\ iterates satisfy
\begin{align}\label{eq:T_W_def}
    \Sigma^{(\ell)}
    = T_{W^{(\ell)}}\circ\cdots\circ T_{W^{(1)}}\bigl(\Sigma^{(0)}\bigr).
\end{align}
The all-ones kernel $J_N$ is a fixed point: if all normalized sample representations coincide, applying the same block map to every sample preserves equality. The theorem hypotheses posit this i.i.d. kernel-map recursion; they do not derive a single map shared across different physical depths from the $1/L$ factor in~\eqref{eq:resnet_arch}.

\paragraph{Step 2: Oseledec applied to the linearized cocycle.}
Let $A_W := \mathrm{D}T_W(J_N)$ act on the tangent space of off-diagonal kernel perturbations $\Sigma-J_N$. The random operators $A_{W^{(1)}},A_{W^{(2)}},\ldots$ are i.i.d.
We use the same local bounded-Jacobian regime as in Lemma~\ref{lem:smooth} and Appendix~\ref{app:assumptions}: on a no-kink region for ReLU (or after smoothing ReLU/LN), the map $\Sigma \mapsto T_W(\Sigma)$ is $C^1$ near $J_N$, the LayerNorm denominator stays uniformly away from its singular set, and the linearization norm grows at most polynomially in $\|W\|_{\mathrm{op}}$. Equivalently, there exist deterministic constants $C_{\mathrm{lin}}, q_{\mathrm{lin}} < \infty$ such that
\begin{align}
    \|A_W\|
    \le
    C_{\mathrm{lin}}(1+\|W\|_{\mathrm{op}})^{q_{\mathrm{lin}}}.
\end{align}
Hence
\begin{align}
    \log^+\|A_W\|
    \le
    \log C_{\mathrm{lin}} + q_{\mathrm{lin}}\log(1+\|W\|_{\mathrm{op}})
    \le
    \log C_{\mathrm{lin}} + q_{\mathrm{lin}}\log 2 + q_{\mathrm{lin}}\log^+\|W\|_{\mathrm{op}},
\end{align}
so the theorem assumption $\E_{\mathcal{P}_W}[\log^+\|W\|_{\mathrm{op}}]<\infty$ implies $\E_{\mathcal{P}_W}[\log^+\|A_W\|]<\infty$.

\begin{proposition}[Oseledec's multiplicative ergodic theorem \cite{oseledec1968multiplicative}]
Let $(A_{W^{(m)}})_{m\ge 1}$ be the finite-dimensional i.i.d.\ linear cocycle above, and assume $\E_{\mathcal{P}_W}[\log^+\|A_W\|]<\infty$. Then there exists an almost-sure measurable invariant filtration of the tangent space such that every nonzero vector $v$ in each stratum has a deterministic asymptotic exponential growth rate:
\begin{align}
    \lim_{m\to\infty}\frac{1}{m}\log\bigl\|A_{W^{(m)}}A_{W^{(m-1)}}\cdots A_{W^{(1)}}v\bigr\|
\end{align}
exists almost surely and equals one of the Lyapunov exponents of the cocycle.
\end{proposition}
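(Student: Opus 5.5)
The plan is to reduce the statement to subadditive ergodic theory. First the top exponent, then the full filtration by passing to exterior powers, in the spirit of Raghunathan's proof of Oseledec's theorem.

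\textbf{Ergodic setting.} Let $\Omega=\mathcal{P}_W^{\otimes\mathbb{N}}$ be the path space of the weight sequence, with the left shift $\vartheta$. Because the $W^{(m)}$ are i.i.d., $\vartheta$ is measure preserving and ergodic (Kolmogorov's $0$--$1$ law). Define the cocycle $\mathcal{A}^{(m)}(\omega):=A_{W^{(m)}}\cdots A_{W^{(1)}}$ acting on the finite-dimensional tangent space $E$ of off-diagonal kernel perturbations. The integrability $\E[\log^+\|A_W\|]<\infty$ was already derived from the polynomial bound on $\|A_W\|$ in Step~2.

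\textbf{Top exponent (Furstenberg--Kesten).} Set $f_m(\omega):=\log\|\mathcal{A}^{(m)}(\omega)\|$. Submultiplicativity of the operator norm gives the subadditivity
\begin{align*}
f_{m+n}(\omega)\le f_n(\omega)+f_m(\vartheta^n\omega),
\end{align*}
and $f_1^+\in L^1$. Kingman's subadditive ergodic theorem then shows that $f_m/m$ converges almost surely to a shift-invariant limit in $[-\infty,\infty)$. Ergodicity makes this limit the constant $\lambda_1=\inf_m \E f_m/m$. This is already part~(a) of Theorem~\ref{thm:kernel_contraction}.

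\textbf{Remaining exponents via exterior powers.} Applying the same argument to $\wedge^k\mathcal{A}^{(m)}$ on $\wedge^k E$, whose norm is submultiplicative and satisfies $\log^+\|\wedge^k A_W\|\le k\log^+\|A_W\|$, gives almost-sure deterministic limits $\Lambda_k$ of $\frac1m\log(s_1\cdots s_k)(\mathcal{A}^{(m)})$, where $s_i$ are the singular values. Setting $\lambda_k:=\Lambda_k-\Lambda_{k-1}$ yields the Lyapunov spectrum $\lambda_1\ge\lambda_2\ge\cdots$, with each exponent equal to $-\infty$ wherever products degenerate. Next take the polar decomposition of $\mathcal{A}^{(m)}$ and let $U_m^{(i)}(\omega)$ be the span of the right singular vectors whose singular values are at most $e^{m(\lambda_i+\epsilon)}$. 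Define $V_i:=\lim_m U_m^{(i)}$, which is measurable because it is a limit of measurable maps.

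\textbf{Main obstacle.} The hard step is proving that these subspaces converge and that every nonzero $v\in V_i\setminus V_{i+1}$ has exact growth rate $\lambda_i$. I would follow Raghunathan's estimate. Subexponential growth of $\|A_{W^{(m+1)}}\|$, which holds by Borel--Cantelli since $\E\log^+\|A_W\|<\infty$, implies that the angle between consecutive singular subspaces $U_m^{(i)}$ and $U_{m+1}^{(i)}$ is at most $e^{-m(\lambda_i-\lambda_{i+1}-2\epsilon)}$. These angle increments are summable, so the Grassmannian sequence is Cauchy and converges. The same bound gives
\begin{align*}
\|\mathcal{A}^{(m)}v\|\le e^{m(\lambda_i+3\epsilon)}\|v\| \quad\text{for } v\in V_i,
\end{align*}
together with a matching lower bound off $V_{i+1}$. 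Invariance $A_{W^{(1)}}V_i(\omega)=V_i(\vartheta\omega)$ follows by comparing the limits for $\omega$ and $\vartheta\omega$. Measurability of the filtration and determinism of the exponents then both follow from ergodicity, which completes the proof of the proposition.
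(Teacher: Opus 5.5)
The paper gives no proof of this proposition. It quotes Oseledec's theorem and then uses only the top-exponent consequence: the almost-sure, deterministic limit of $\frac1m\log\|A_{W^{(m)}}\cdots A_{W^{(1)}}\|$. Your Furstenberg--Kesten step (subadditivity, $f_1^+\in L^1$, Kingman, ergodicity of the i.i.d.\ shift) proves exactly that and is correct. The exterior-power construction of the $\Lambda_k$ and the lower bound off $V_{i+1}$ are also correct.

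The gap is in the step you flag as the main obstacle, namely the claim that ``the same bound gives'' $\|\mathcal{A}^{(m)}v\|\le e^{m(\lambda_i+3\epsilon)}\|v\|$ for $v\in V_i$. Your one-step angle estimate only controls the distance between consecutive \emph{cumulative} low subspaces. Its rate is the gap to the adjacent exponent. With your definition of $U_m^{(i)}$ the excluded singular values are $\ge e^{m(\lambda_{i-1}-\epsilon)}$, so that rate is $\lambda_{i-1}-\lambda_i$, not $\lambda_i-\lambda_{i+1}$.

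This suffices when only one distinct exponent lies above $\lambda_i$, but not otherwise. Write $v=P_{U_m^{(i)}}v+w$. You know $\|w\|\lesssim e^{-m(\lambda_{i-1}-\lambda_i-C\epsilon)}$, but nothing prevents $w$ from lying in the top singular block, which $\mathcal{A}^{(m)}$ stretches by about $e^{m\lambda_1}$. You therefore only get the exponent bound $\lambda_1-\lambda_{i-1}+\lambda_i$. Combining the bounds for all $V_j\supseteq V_i$ does not rescue it: with distinct exponents $3>2>1$ and $v\in V_3$, it still only gives growth rate at most $2$, not $1$.

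What is missing is the sharp, block-by-block estimate that drives Raghunathan's proof. Let $E_m^{[r]}$ be the span of right singular vectors of $\mathcal{A}^{(m)}$ with singular values about $e^{m\lambda_r}$. One needs $\|P_{E_m^{[r]}}v\|\le Ce^{-m(\lambda_r-\lambda_i-C\epsilon)}\|v\|$ for every $r$ with $\lambda_r>\lambda_i$ and every $v\in V_i$. The upper bound then follows from $\|\mathcal{A}^{(m)}v\|^2=\sum_r\|\mathcal{A}^{(m)}P_{E_m^{[r]}}v\|^2$. Obtaining this estimate takes three ingredients you do not have:
\begin{enumerate}
\item Your one-step argument applied to \emph{every} pair of blocks $r<s$, not just across the adjacent gap. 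This bounds the component of time-$m$ block $s$ in time-$(m+1)$ block $r$ by $e^{-m(\lambda_r-\lambda_s-C\epsilon)}$.
\item A transfer of these bounds to the opposite time-ordering, using orthonormality of the change of basis between consecutive singular frames. This ordering is the one actually needed, because $V_i$ is a limit of \emph{later} subspaces.
\item An induction over $k$ that makes the bounds on $\|P_{E^{[r]}_m}P_{E^{[s]}_{m+k}}\|$ uniform in $k$.
\end{enumerate}

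Two smaller corrections. First, you rightly allow singular $A_W$ and exponents equal to $-\infty$, so invariance can only be $A_{W^{(1)}}V_i(\omega)\subseteq V_i(\vartheta\omega)$, equivalently $V_i(\omega)=A_{W^{(1)}}^{-1}V_i(\vartheta\omega)$. This is best read off from the growth-rate characterization once that is proved. Second, measurability of $V_i$ comes from its being a pointwise limit of measurable spectral projections; ergodicity only makes the exponents constant.
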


In particular, the top Lyapunov exponent governs the maximal exponential growth rate of perturbations, so the operator norm satisfies the same top-exponent asymptotics. Thus the following limit exists almost surely:
\begin{align}\label{eq:lyapunov_def}
    \lambda_1 := \lim_{m\to\infty}\frac{1}{m}\log\bigl\|A_{W^{(m)}}A_{W^{(m-1)}}\cdots A_{W^{(1)}}\bigr\|
    \quad\text{a.s.,}
\end{align}
and the limit is deterministic. 
Setting $\bar\rho^*:= e^{\lambda_1}$ proves Theorem~\ref{thm:kernel_contraction}(a). If $\lambda_1<0$, fix $\eta = \frac{|\lambda_1|}{2}$
Then, on the almost-sure event where~\eqref{eq:lyapunov_def} holds, we have
\begin{align}
    &\ \frac{1}{m}\log\!\left(
        e^{|\lambda_1|m/2}
        \bigl\|A_{W^{(m)}}A_{W^{(m-1)}}\cdots A_{W^{(1)}}\bigr\|
    \right) \\
    =&\ \frac{1}{m}\log\bigl\|A_{W^{(m)}}A_{W^{(m-1)}}\cdots A_{W^{(1)}}\bigr\|
    +
    \frac{|\lambda_1|}{2}
    \;\to\;
    -\frac{|\lambda_1|}{2}
    <0.
    \nonumber
\end{align}
Hence
\begin{align}
    e^{|\lambda_1|m/2}
    \bigl\|A_{W^{(m)}}A_{W^{(m-1)}}\cdots A_{W^{(1)}}\bigr\|
    \to 0.
\end{align}
so this sequence is almost surely bounded. Let $\omega$ denote the sample point of the underlying probability space carrying the realized i.i.d.\ block sequence, and define
\begin{align}
    C_{\mathrm{env}}(\omega)
    :=
    1+
    \sup_{m\ge 1}
    e^{|\lambda_1|m/2}
    \bigl\|A_{W^{(m)}}A_{W^{(m-1)}}\cdots A_{W^{(1)}}\bigr\|.
\end{align}
Then $C_{\mathrm{env}}(\omega)<\infty$ almost surely, and therefore
\begin{align}
    \bigl\|A_{W^{(m)}}A_{W^{(m-1)}}\cdots A_{W^{(1)}}\bigr\|
    \le
    C_{\mathrm{env}}(\omega)e^{-|\lambda_1|m/2}
    \qquad\text{for all }m\ge 1.
\end{align}
Therefore the linearized dynamics satisfy
\begin{align}
    \|\Sigma^{(m)}-J_N\|_F
    \le
    C_{\mathrm{env}}(\omega)e^{-|\lambda_1|m/2}\|\Sigma^{(0)}-J_N\|_F.
\end{align}
Since $T_W(J_N)=J_N$ and $T_W$ is twice differentiable in a neighborhood of $J_N$ for smooth LN and fixed activation masks (or by standard ReLU smoothing), the nonlinear remainder is quadratic in $\|\Sigma^{(m)}-J_N\|_F$. Therefore the same envelope holds for the nonlinear kernel dynamics while the finite prefix remains in the linearization neighborhood, yielding Theorem~\ref{thm:kernel_contraction}(b).

\subsection{Proof of Theorem~\ref{thm:bp_esd} (BP Error Signal Diversity)}
\label{sec:proof_bp_esd}

\begin{proof}
Consider the $L$-layer ResNet with per-sample layer Jacobian $J^{(\ell)}(\bx) = I + \frac{1}{L}\widetilde{J}^{(\ell)}(\bx)$, $\|\widetilde{J}^{(\ell)}(\bx)\|_{\mathrm{op}} \leq \bar{\rho}$ uniformly in $\bx$.
Recall from the body that $\boldsymbol{\delta}^{(\ell),\mathrm{BP}}(\bx)=J^{(L:\ell+1)}(\bx)^{\top}\boldsymbol{\delta}^{(L),\mathrm{BP}}(\bx)$ with $\boldsymbol{\delta}^{(L),\mathrm{BP}}(\bx) := \frac{\partial \cL_{\mathrm{task}}}{\partial\bh^{(L)}}|_{\bx}$. 
For simplicity, write $\bbdelta(\bx):= \boldsymbol{\delta}^{(L),\mathrm{BP}}(\bx)$ and
\begin{align}
    s_-^{*} := \sigma_{\min}\bigl(\bar J^{(L:\ell+1)}\bigr), \quad s_+^{*} := \sigma_{\max}\bigl(\bar J^{(L:\ell+1)}\bigr).
\end{align}
With this notation, the BP Gram matrix at layer $\ell$ is
\begin{align}\label{eq:bp_gram_expectation}
    \Gamma^{(\ell),\mathrm{BP}} \;=\; \E_{\bx}\!\bigl[(J^{(L:\ell+1)}(\bx))^{\top}\,\bbdelta(\bx)\bbdelta(\bx)^{\top}\,J^{(L:\ell+1)}(\bx)\bigr].
\end{align}
The Jacobian product $J^{(L:\ell+1)}(\bx)$ depends on $\bx$ through every nonlinearity (ReLU, LayerNorm, attention softmax, \emph{etc.}) along the forward pass.

\textbf{Preliminary: per-sample Jacobian product bounds.}
For any input $\bx$, let $J^{(a:b)}(\bx) = \prod_{j=b}^{a}(I + \frac{1}{L}\widetilde{J}^{(j)}(\bx))$. Its singular values satisfy
\begin{align}
    \left(1 - \frac{\bar{\rho}}{L}\right)^{a-b+1} \leq \sigma_{\min}\bigl(J^{(a:b)}(\bx)\bigr) \leq \sigma_{\max}\bigl(J^{(a:b)}(\bx)\bigr) \leq \left(1 + \frac{\bar{\rho}}{L}\right)^{a-b+1},
\end{align}
uniformly in $\bx$. Therefore, it holds for the full product $J^{(L:\ell+1)}(\bx)$ that $e^{-\bar{\rho}} \leq \sigma_{\min} \leq \sigma_{\max} \leq e^{\bar{\rho}}$. The same singular-value range therefore applies to any deterministic average $\bar J^{(L:\ell+1)}$ taken over the data support, giving $s_-^*, s_+^*\in[e^{-\bar\rho}, e^{\bar\rho}]$ in Assumption~\ref{ass:bp_jac_conc}.

\textbf{Reference / deviation decomposition.}
Under Assumption~\ref{ass:bp_jac_conc}, write
\begin{align}\label{eq:J_decomp}
    J^{(L:\ell+1)}(\bx) \;=\; \bar J^{(L:\ell+1)} \;+\; E(\bx),\qquad \sup_{\bx}\bigl\|E(\bx)\bigr\|_{\mathrm{op}}\;\leq\;\tau_J,
\end{align}
and abbreviate $\bar J:=\bar J^{(L:\ell+1)}$. Substituting~\eqref{eq:J_decomp} into the Gram identity~\eqref{eq:bp_gram_expectation},
\begin{align}\label{eq:gamma_decomp}
    \Gamma^{(\ell),\mathrm{BP}} \;=\; \underbrace{\bar J^{\top}\,\Gamma^{(L)}\,\bar J}_{=:\,\widehat\Gamma} \;+\; \cR,
\end{align}
where the remainder is
\begin{align}
    \cR := \E_{\bx}\!\bigl[\bar J^{\top}\bbdelta(\bx)\bbdelta(\bx)^{\top} E(\bx)\bigr] + \E_{\bx}\!\bigl[E(\bx)^{\top}\bbdelta(\bx)\bbdelta(\bx)^{\top}\bar J\bigr] + \E_{\bx}\!\bigl[E(\bx)^{\top}\bbdelta(\bx)\bbdelta(\bx)^{\top} E(\bx)\bigr].
\end{align}

\textbf{Step 1: bounds on the remainder $\cR$.}
Each summand inside the expectation is a rank-one outer product, so $\|\bu\bv^{\top}\|_F=\|\bu\|\,\|\bv\|$ together with $\|E(\bx)^{\top}\bbdelta(\bx)\|\le\tau_J\|\bbdelta(\bx)\|$ and $\|\bar J^{\top}\bbdelta(\bx)\|\le s_+^{*}\|\bbdelta(\bx)\|$ give
\begin{align}
    \bigl\|\bar J^{\top}\bbdelta(\bx)\bbdelta(\bx)^{\top} E(\bx)\bigr\|_F = \|\bar J^{\top}\bbdelta(\bx)\|\,\|E(\bx)^{\top}\bbdelta(\bx)\| &\leq s_+^{*}\tau_J\,\|\bbdelta(\bx)\|^{2},\\
    \bigl\|E(\bx)^{\top}\bbdelta(\bx)\bbdelta(\bx)^{\top} E(\bx)\bigr\|_F = \|E(\bx)^{\top}\bbdelta(\bx)\|^{2} &\leq \tau_J^{2}\,\|\bbdelta(\bx)\|^{2}.
\end{align}
Jensen's inequality $\|\E[\cdot]\|_F\leq\E\|\cdot\|_F$ and $\E_{\bx}\|\bbdelta(\bx)\|^{2}=\tr(\Gamma^{(L)})$ then yield
\begin{align}\label{eq:R_F_bound}
    \|\cR\|_F \;\leq\; \bigl(2 s_+^{*}\tau_J + \tau_J^{2}\bigr)\,\tr(\Gamma^{(L)}) \;\eqqcolon\; \Delta\,\tr(\Gamma^{(L)}).
\end{align}
For the trace, expand $\tr(\cR)$ summand-wise. The two cross terms are mutual transposes:
\begin{align}
    \bigl(E(\bx)^{\top}\bbdelta(\bx)\bbdelta(\bx)^{\top}\bar J\bigr)^{\top} \;=\; \bar J^{\top}\bbdelta(\bx)\bbdelta(\bx)^{\top}E(\bx),
\end{align}
and trace is invariant under transposition, so $\tr(E^{\top}\bbdelta\bbdelta^{\top}\bar J)=\tr(\bar J^{\top}\bbdelta\bbdelta^{\top}E)$ and the two cross terms add to $2\,\tr\bigl(\bar J^{\top}\bbdelta\bbdelta^{\top}E\bigr)$. By the cyclic property of trace,
\begin{align}
    \tr\bigl(\bar J^{\top}\bbdelta(\bx)\bbdelta(\bx)^{\top}E(\bx)\bigr) \;=\; \tr\bigl(\bbdelta(\bx)^{\top}E(\bx)\bar J^{\top}\bbdelta(\bx)\bigr) \;=\; \bbdelta(\bx)^{\top}E(\bx)\bar J^{\top}\bbdelta(\bx),
\end{align}
where the last equality drops the trace because the argument is a $1\times 1$ scalar. Cauchy--Schwarz then bounds this scalar by
\begin{align}
    \bigl|\bbdelta(\bx)^{\top}E(\bx)\bar J^{\top}\bbdelta(\bx)\bigr|
    \;\leq\; \|E(\bx)^{\top}\bbdelta(\bx)\|\,\|\bar J^{\top}\bbdelta(\bx)\|
    \;\leq\; \tau_J\cdot s_+^{*}\,\|\bbdelta(\bx)\|^{2}.
\end{align}
For the third term, cyclicity gives $\tr(E^{\top}\bbdelta\bbdelta^{\top}E)=\bbdelta^{\top}EE^{\top}\bbdelta=\|E^{\top}\bbdelta\|^{2}\leq\tau_J^{2}\|\bbdelta\|^{2}$. Taking $\E_{\bx}$ and using $\E_{\bx}\|\bbdelta(\bx)\|^{2}=\tr(\Gamma^{(L)})$,
\begin{align}\label{eq:R_tr_bound}
    \bigl|\tr(\cR)\bigr|
    \;\leq\; 2 s_+^{*}\tau_J\,\tr(\Gamma^{(L)}) + \tau_J^{2}\,\tr(\Gamma^{(L)})
    \;=\; \Delta\,\tr(\Gamma^{(L)}).
\end{align}

\textbf{Step 2: bounds on the leading term $\widehat\Gamma=\bar J^{\top}\Gamma^{(L)}\bar J$.}
For PSD $\Gamma^{(L)}$ with eigendecomposition $\Gamma^{(L)}=\sum_i\lambda_i\bv_i\bv_i^{\top}$,
\begin{align}
    \tr(\widehat\Gamma) = \tr(\Gamma^{(L)}\bar J\bar J^{\top}) = \sum_i\lambda_i\|\bar J^{\top}\bv_i\|^{2}\;\in\;\bigl[s_-^{*2}\tr(\Gamma^{(L)}),\,s_+^{*2}\tr(\Gamma^{(L)})\bigr].
\end{align}
For the Frobenius norm, since $\bar J$ is invertible, $\|\bar J^{\top}M\bar J\|_F\leq s_+^{*2}\|M\|_F$ follows from the Cauchy--Schwarz inequality for the Frobenius inner product combined with $\|\bar J Y\bar J^{\top}\|_F\leq s_+^{*2}\|Y\|_F$; the matching lower bound $\|\widehat\Gamma\|_F\geq s_-^{*2}\|\Gamma^{(L)}\|_F$ follows by applying the same inequality to $\Gamma^{(L)}=\bar J^{-\top}\widehat\Gamma\bar J^{-1}$ with $\sigma_{\max}(\bar J^{-1})=1/s_-^{*}$. Hence
\begin{align}
    s_-^{*4}\,\tr({\Gamma^{(L)}}^{2}) \;\leq\; \|\widehat\Gamma\|_F^{2} \;\leq\; s_+^{*4}\,\tr({\Gamma^{(L)}}^{2}).
\end{align}

\textbf{Step 3: Part (a) -- approximate rank preservation.}
Combining~\eqref{eq:gamma_decomp},~\eqref{eq:R_F_bound}, and Weyl's inequality on the spectra of $\widehat\Gamma$ and $\widehat\Gamma+\cR$,
\begin{align}\label{eq:bp_weyl}
    \lambda_i\bigl(\Gamma^{(\ell),\mathrm{BP}}\bigr) \;\geq\; \lambda_i(\widehat\Gamma) - \|\cR\|_{\mathrm{op}} \;\geq\; s_-^{*2}\,\lambda_i(\Gamma^{(L)}) - \Delta\,\tr(\Gamma^{(L)}),
\end{align}
where the second step uses $\lambda_i(\widehat\Gamma)\geq s_-^{*2}\lambda_i(\Gamma^{(L)})$ (which follows component-wise from the SVD of $\bar J$) and $\|\cR\|_{\mathrm{op}}\leq\|\cR\|_F$. Therefore
\begin{align}\label{eq:bp_rank_final}
    \rank\bigl(\Gamma^{(\ell),\mathrm{BP}}\bigr) \;\geq\; \#\!\Bigl\{\,i\;:\;\lambda_i(\Gamma^{(L)})\;>\;\tfrac{\Delta}{s_-^{*2}}\tr(\Gamma^{(L)})\,\Bigr\}.
\end{align}
In the limit $\tau_J\to 0$ the right-hand side equals $\rank(\Gamma^{(L)})$, recovering the data-independent congruence rank-equality.

\textbf{Step 4: Part (b) -- effective rank preservation.}
Combining~\eqref{eq:gamma_decomp} with the trace bound on $\widehat\Gamma$ and~\eqref{eq:R_tr_bound},
\begin{align}
    \tr\bigl(\Gamma^{(\ell),\mathrm{BP}}\bigr) \;\geq\; (s_-^{*2}-\Delta)\,\tr(\Gamma^{(L)}).
\end{align}
For the Frobenius norm, the triangle inequality with~\eqref{eq:R_F_bound} gives
\begin{align}
    \bigl\|\Gamma^{(\ell),\mathrm{BP}}\bigr\|_F \;\leq\; s_+^{*2}\,\|\Gamma^{(L)}\|_F + \Delta\,\tr(\Gamma^{(L)}).
\end{align}
Using the identity $\tr(\Gamma^{(L)})/\|\Gamma^{(L)}\|_F=\sqrt{\erank(\Gamma^{(L)})}$ and assuming $\Delta<s_-^{*2}$,
\begin{align}\label{eq:bp_erank_final}
    \erank\bigl(\Gamma^{(\ell),\mathrm{BP}}\bigr)
        = \frac{\tr(\Gamma^{(\ell),\mathrm{BP}})^{2}}{\|\Gamma^{(\ell),\mathrm{BP}}\|_F^{2}}
        \;\geq\; \frac{(s_-^{*2}-\Delta)^{2}}{\bigl(s_+^{*2}+\Delta\sqrt{\erank(\Gamma^{(L)})}\bigr)^{2}}\,\erank(\Gamma^{(L)}).
\end{align}

\textbf{Step 5: asymptotic form recovering $e^{-8\bar\rho}$.}
By definition $\Delta=2 s_+^{*}\tau_J+\tau_J^{2}=O(\tau_J)$. Whenever $\tau_J\sqrt{\erank(\Gamma^{(L)})}\ll s_+^{*2}$, the prefactor in~\eqref{eq:bp_erank_final} simplifies to
\begin{align}
    \frac{(s_-^{*2}-\Delta)^{2}}{\bigl(s_+^{*2}+\Delta\sqrt{\erank(\Gamma^{(L)})}\bigr)^{2}}
    \;=\; \frac{s_-^{*4}}{s_+^{*4}}\,\Bigl(1-O\bigl(\tau_J\sqrt{\erank(\Gamma^{(L)})}/s_+^{*2}\bigr)\Bigr)
    \;\geq\; e^{-8\bar\rho}\,\bigl(1 - o(1)\bigr).
\end{align}
This recovers the data-independent bound $\erank(\Gamma^{(\ell),\mathrm{BP}})\geq e^{-8\bar\rho}\erank(\Gamma^{(L)})$ stated in Theorem~\ref{thm:bp_esd}(b), valid under Assumption~\ref{ass:bp_jac_conc} with $\tau_J\sqrt{\erank(\Gamma^{(L)})}=o(s_+^{*2})$.
\end{proof}

\subsection{Proof of Theorem~\ref{thm:elc_scaling} (Scaling via Effective Learning Capacity)}
\label{app:proof-elc_scaling}

\begin{proof}
\textbf{Part (a):} Direct summation of Theorem~\ref{thm:bp_esd}(b):
\begin{align}
\ELC^{\mathrm{BP}}
&= \sum_{\ell=1}^L \bigl(\erank(\Gamma^{(\ell),\mathrm{BP}})-1\bigr)
\geq L\max\bigl\{0,\,\exp(-8\bar{\rho})\,\erank(\Gamma^{(L)})-1\bigr\},
\end{align}
where we use both Theorem~\ref{thm:bp_esd}(b) and the universal bound $\erank(\Gamma^{(\ell),\mathrm{BP}})\geq 1$.

For language models: the output error is $\boldsymbol{\delta}^{(L)} = W_{\mathrm{out}}^T(\hat{p} - e_Y) \in \R^d$ where $W_{\mathrm{out}} \in \R^{V \times d}$. The ESK is $\Gamma^{(L)} = W_{\mathrm{out}}^T \E[(\hat{p}-e_Y)(\hat{p}-e_Y)^T] W_{\mathrm{out}}$. Since $V \gg d$ and $W_{\mathrm{out}}$ has rank~$d$, we have $\rank(\Gamma^{(L)}) = d$. Under random or trained initialization, $\Gamma^{(L)}$ is non-degenerate with $\erank(\Gamma^{(L)}) = \Theta(d)$. Hence $\ELC({\mathrm{BP}}) = \Theta(Ld)$.

\textbf{Part (b):} Differentiating~\eqref{eq:ffa_loss} with respect to $\bh^{(\ell)}(\bx)$ gives the per-sample FFA descent direction
\begin{align}
    \boldsymbol{\delta}^{(\ell),\mathrm{FFA}}(\bx)
    =
    \begin{cases}
        \bigl(1-\sigma(\goodness^{(\ell)}(\bh^{(\ell)}(\bx))-\theta)\bigr)\,\nabla_{\bh}\goodness^{(\ell)}(\bh^{(\ell)}(\bx)),
        & \bx \sim \cD^+,\\[4pt]
        -\,\sigma(\goodness^{(\ell)}(\bh^{(\ell)}(\bx))-\theta)\,\nabla_{\bh}\goodness^{(\ell)}(\bh^{(\ell)}(\bx)),
        & \bx \sim \cD^-.
    \end{cases}
\end{align}
Hence $\Gamma^{(\ell),\mathrm{FFA}}$ is the second moment of weighted goodness gradients. By Theorem~\ref{thm:kernel_contraction}, in a collapsing regime with $\lambda_1<0$ and along a prefix that remains in the linearization neighborhood, the normalized representations satisfy $\tilde{\bh}_\ell(\bx) = \sqrt{d}(\bar{\bv} + \boldsymbol{\epsilon}(\bx))$ with constant $\bar{\bv}$ and $\|\boldsymbol{\epsilon}(\bx)\| = O((\sqrt{\bar\rho^*})^\ell)$. Assumption~\ref{ass:G_smooth} then gives
\begin{align}
    \nabla_{\bh}\goodness^{(\ell)}(\bh^{(\ell)}(\bx))
    =
    \nabla_{\bh}\goodness^{(\ell)}(\sqrt{d}\,\bar{\bv}) + O((\sqrt{\bar\rho^*})^\ell),
\end{align}
uniformly over samples. Therefore, the deep-layer ESK is controlled by the span of the collapsed goodness-gradient family $\{\nabla_{\bh}\goodness_x^{(\ell)}(\sqrt{d}\,\bar{\bv})\}_x$. In the rank-one subclass used in the theorem statement---for example $\goodness(\bh)=\|\bh\|^2$, for which $\nabla_{\bh}\goodness=2\bh$, or any fixed quadratic goodness with $\nabla_{\bh}\goodness(\bh)=B\bh$ for a fixed matrix $B$---these collapsed gradients are all collinear. Hence, there exists a unit vector $\bar{\bu}_\ell$, a scalar coefficient $c_\ell$ bounded above and below by positive constants, and a symmetric remainder $R_\ell$ with $\|R_\ell\|_F = O((\sqrt{\bar\rho^*})^\ell)$ such that
\begin{align}
    \Gamma^{(\ell),\mathrm{FFA}}
    =
    c_\ell\,\bar{\bu}_\ell\bar{\bu}_\ell^\top + R_\ell.
\end{align}
Using $\tr(\bar{\bu}_\ell\bar{\bu}_\ell^\top)=\tr((\bar{\bu}_\ell\bar{\bu}_\ell^\top)^2)=1$ and the bounds
\begin{align}
    |\tr(R_\ell)| \leq \sqrt{d}\,\|R_\ell\|_F = O((\sqrt{\bar\rho^*})^\ell),
    \qquad
    |\tr(\bar{\bu}_\ell\bar{\bu}_\ell^\top R_\ell)| \leq \|R_\ell\|_{\mathrm{op}} \leq \|R_\ell\|_F = O((\sqrt{\bar\rho^*})^\ell),
\end{align}
we obtain
\begin{align}
    \tr(\Gamma^{(\ell),\mathrm{FFA}})
    &= c_\ell + \tr(R_\ell)
    = c_\ell + O((\sqrt{\bar\rho^*})^\ell),\\
    \tr((\Gamma^{(\ell),\mathrm{FFA}})^2)
    &= c_\ell^2 + 2c_\ell\,\tr(\bar{\bu}_\ell\bar{\bu}_\ell^\top R_\ell) + \tr(R_\ell^2)
    = c_\ell^2 + O((\sqrt{\bar\rho^*})^\ell),
\end{align}
where $\tr(R_\ell^2) \leq \|R_\ell\|_F^2 = O((\sqrt{\bar\rho^*})^{2\ell}) = O((\bar\rho^*)^\ell)$. Therefore, by the definition of effective rank,
\begin{align}
    \erank(\Gamma^{(\ell),\mathrm{FFA}})
    &=
    \frac{(\tr(\Gamma^{(\ell),\mathrm{FFA}}))^2}{\tr((\Gamma^{(\ell),\mathrm{FFA}})^2)} 
    =
    \frac{(c_\ell + O((\sqrt{\bar\rho^*})^\ell))^2}{c_\ell^2 + O((\sqrt{\bar\rho^*})^\ell)}
    =
    1 + O((\sqrt{\bar\rho^*})^\ell),
    \nonumber
\end{align}
where the last step uses the uniform positive lower bound on $c_\ell$. Subtracting the rank-one floor and summing gives
\begin{align}
    \ELC({\mathrm{FFA}})
    &= \sum_{\ell=1}^L \bigl(\erank(\Gamma^{(\ell),\mathrm{FFA}})-1\bigr)
    = O\lp\sum_{\ell=1}^L (\sqrt{\bar\rho^*})^\ell\rp
    \leq O\lp\frac{\sqrt{\bar\rho^*}}{1-\sqrt{\bar\rho^*}}\rp,
\end{align}
which completes the proof of Theorem~\ref{thm:elc_scaling}(b).
\end{proof}

%% file: appendix/C_assumption_compat.tex
\section{Verification of Assumptions}
\label{app:assumptions}

\subsection{Verification of Assumption~\ref{ass:gram}}
\label{app:verification-gram}
The following proposition provides an example that the FFA Gram matrix satisfies Assumption~\ref{ass:gram} with high probability. 
\begin{proposition}[Gram Conditioning at Initialization]
Suppose \Cref{ass:bounded} holds. Let $K_{\mathrm{in}}^{(\ell-1)}\in\R^{N\times N}$ be the previous-layer representation Gram matrix,
\begin{align}
    [K_{\mathrm{in}}^{(\ell-1)}]_{ij}
    :=
    \left\langle \bh_i^{(\ell-1)},\bh_j^{(\ell-1)}\right\rangle .
\end{align}
Assume the previous-layer representations satisfy the quadratic-feature diversity condition
\begin{align}
    \lambda_{\min}\!\left(
        \frac{8}{d^{(\ell-1)}}
        \bigl(K_{\mathrm{in}}^{(\ell-1)}\odot K_{\mathrm{in}}^{(\ell-1)}\bigr)
    \right)
    \geq 2\mu_H
    \quad\text{for some }\mu_H>0,
    \label{eq:quadratic_feature_diversity}
\end{align}
where $\odot$ denotes the entrywise product. This condition holds with high probability, for example, when the previous-layer representations $\bh_i^{(\ell-1)}$ are independent isotropic sub-Gaussian vectors with non-degenerate norms and $d^{(\ell-1)}\gtrsim N+\log(1/\delta)$. Then, under Gaussian initialization $W_0^{(\ell)}\sim\cN(0,2/d^{(\ell-1)})$ and sufficiently large output width $d^{(\ell)}$, 
\begin{align}
    \lambda_{\min}\!\left(\frac{1}{d^{(\ell)}}H^{(\ell)}(W_0^{(\ell)})\right)\geq \mu_H
\end{align}
with high probability. 
\end{proposition}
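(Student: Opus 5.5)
We will compute $H^{(\ell)}(W_0^{(\ell)})$ explicitly, reduce it to a random-features Gram matrix, and then use concentration in the width $d^{(\ell)}$.

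\textbf{Writing the Gram as a random-features Gram.} By the chain rule,
\begin{align*}
\nabla_{W^{(\ell)}}\goodness(\bh_i^{(\ell)})=\tfrac1L\,D_i\,\bigl(\nabla_{\bh}\goodness(\bh_i^{(\ell)})\bigr)\,(\bh_i^{(\ell-1)})^\top ,
\end{align*}
where $D_i$ is the Jacobian of $\phi\circ\mathrm{LN}$ at $\bz_i=W_0^{(\ell)}\bh_i^{(\ell-1)}$. Set $\bg_i:=D_i^\top\nabla_{\bh}\goodness(\bh_i^{(\ell)})$. Then
\begin{align*}
[H^{(\ell)}]_{ij}=\tfrac{1}{L^2}\langle \bg_i,\bg_j\rangle\,[K_{\mathrm{in}}^{(\ell-1)}]_{ij}.
\end{align*}
So $H^{(\ell)}$ is the Hadamard product of $K_{\mathrm{in}}^{(\ell-1)}$ with the output-side Gram $\Pi:=(\langle\bg_i,\bg_j\rangle)$. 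The $1/L^2$ scale is absorbed into $\mu_H$, as the paper does in the PL proof.

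\textbf{Expected Gram and the quadratic-feature condition.} I will take the prototypical goodness $\goodness(\bh)=\|\bh\|^2$. Near initialization $\bh_i^{(\ell)}\approx\bh_i^{(\ell-1)}$ plus an $O(1/L)$ branch, so $\nabla\goodness$ carries a term $\frac{2}{L}\phi(\mathrm{LN}(\bz_i))$, and the coordinates of $\bg_i$ become functions of the Gaussian pre-activations $\bz_i$. Since $W_0^{(\ell)}$ has i.i.d.\ $\cN(0,2/d^{(\ell-1)})$ rows, the $d^{(\ell)}$ coordinates of $\bg_i$ are i.i.d.\ across output units, given the inputs. Hence $\frac{1}{d^{(\ell)}}\Pi$ concentrates around a kernel $\bar\Pi_{ij}=\E_{w}[f(w^\top\bh_i)f(w^\top\bh_j)]$.

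For a single Gaussian row, $\mathrm{Cov}(w^\top\bh_i,w^\top\bh_j)=\frac{2}{d^{(\ell-1)}}[K_{\mathrm{in}}]_{ij}$. Expanding $\bar\Pi$ in Hermite polynomials gives a PSD series $\sum_k a_k^2\,(\frac{2}{d^{(\ell-1)}}K_{\mathrm{in}})^{\odot k}$. Its $k=1$ term, Hadamard-multiplied with $K_{\mathrm{in}}$, produces $\frac{c}{d^{(\ell-1)}}K_{\mathrm{in}}\odot K_{\mathrm{in}}$. By the Schur product theorem every other term is PSD and can be dropped. Choosing constants so that $c\ge 8$ (after LN normalization), the condition \eqref{eq:quadratic_feature_diversity} then gives
\begin{align*}
\lambda_{\min}\bigl(\bar H/d^{(\ell)}\bigr)\ge 2\mu_H .
\end{align*}
Matching the constant $8$ to the Hermite coefficient of $\phi\circ\mathrm{LN}$ needs care. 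I expect to have to absorb a nonlinear LN correction, justified by the fact that LN concentrates to a deterministic rescaling when $d^{(\ell)}$ is large.

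\textbf{Concentration.} Write
\begin{align*}
\tfrac{1}{d^{(\ell)}}H=\tfrac{1}{d^{(\ell)}}\textstyle\sum_{r=1}^{d^{(\ell)}}X_r,
\end{align*}
where each $X_r$ is an i.i.d.\ PSD rank-one-times-$K_{\mathrm{in}}$ matrix, bounded in norm using Assumption~\ref{ass:bounded} and Lemma~\ref{lem:derived_M_goodness}. Matrix Bernstein, or the entrywise Hoeffding-plus-union-bound route of Du et al., gives
\begin{align*}
\Bigl\|\tfrac{1}{d^{(\ell)}}H-\bar H/d^{(\ell)}\Bigr\|_{\mathrm{op}}\le\mu_H
\end{align*}
once $d^{(\ell)}\gtrsim N^2\log(N/\delta)/\mu_H^2$. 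Weyl's inequality then yields the claimed $\lambda_{\min}\ge\mu_H$.

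\textbf{Sufficient condition for the diversity assumption.} For independent isotropic sub-Gaussian $\bh_i$, $\frac{1}{d}K_{\mathrm{in}}\odot K_{\mathrm{in}}$ has diagonal entries $\Theta(d)$ and off-diagonal entries $O(1)$. Gershgorin's theorem combined with a Hanson–Wright bound on the off-diagonals handles the case $d\gtrsim N+\log(1/\delta)$.

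\textbf{Main obstacle.} The hardest step is the second one. The dependence of $\nabla\goodness(\bh_i^{(\ell)})$ on $W_0$ couples input and output sides, so the rows are not cleanly independent of $K_{\mathrm{in}}$-only quantities. I will first try conditioning on the previous-layer features and treating the residual identity part of $\nabla\goodness$ separately. That part gives $D_i^\top\bh_i^{(\ell-1)}$, which I will bound as a lower-order PSD contribution.
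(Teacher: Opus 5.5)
\textbf{There is a genuine gap, and it sits in your second step.} It is structural, not just a matter of tracking constants carefully.

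In the residual--LN block you chose, $\nabla_{\bh}\goodness(\bh_i^{(\ell)})=2\bh_i^{(\ell-1)}+\tfrac{2}{L}\phi(\mathrm{LN}(\bz_i))$. So $\boldsymbol{g}_i=\boldsymbol{a}_i+\boldsymbol{b}_i$, with identity part $\boldsymbol{a}_i=2D_i^\top\bh_i^{(\ell-1)}$ and branch part $\boldsymbol{b}_i=\tfrac{2}{L}D_i^\top\phi(\mathrm{LN}(\bz_i))$. Your Hermite/random-features computation covers only $\boldsymbol{b}_i$, and you plan to discard $\boldsymbol{a}_i$ as a ``lower-order PSD contribution.'' That fails for several reasons.
\begin{itemize}
\item $\boldsymbol{a}_i$ comes from the $O(1)$ part $2\bh_i^{(\ell-1)}$ of the goodness gradient. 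By your own accounting, the branch is the $O(1/L)$ part, so $\boldsymbol{a}_i$ is not lower order.
\item Even setting size aside, $\Pi=(\langle\boldsymbol{a}_i,\boldsymbol{a}_j\rangle)+(\langle\boldsymbol{b}_i,\boldsymbol{b}_j\rangle)+(\text{cross terms})$, and the cross terms are not PSD. Hence $\Pi\succeq(\langle\boldsymbol{b}_i,\boldsymbol{b}_j\rangle)$ does not hold in general.
\item $\boldsymbol{a}_i$ has no i.i.d.\ random-feature structure over output units. Its $r$-th coordinate is tied to the fixed input coordinate $h^{(\ell-1)}_{i,r}$. The residual connection also forces $d^{(\ell)}=d^{(\ell-1)}$, so you cannot send $d^{(\ell)}\to\infty$ at fixed $d^{(\ell-1)}$.
\item LN couples all output coordinates through its mean and scale, so the independence that matrix Bernstein needs is lost as well.
\item Even the branch term alone gives a $k=1$ Hermite contribution whose prefactor is set by the architecture: $1/L^2$ from the chain rule, $4/L^2$ from the branch, and norm rescalings because LN replaces the covariance $\tfrac{2}{d^{(\ell-1)}}K_{\mathrm{in}}^{(\ell-1)}$ by a correlation matrix. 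You cannot ``choose constants so that $c\ge 8$.''
\item You also cannot absorb $1/L^2$ into $\mu_H$. Here $\mu_H$ is fixed by the diversity hypothesis and reappears in the conclusion, so rescaling it proves a different statement.
\end{itemize}

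\textbf{The missing idea is the specialization the paper makes, which is exactly what the constant $8$ encodes.} Take $\goodness(\bh)=\|\bh\|^2$ with a linear (or fixed-mask) block $\bh_i^{(\ell)}=W_0^{(\ell)}\bh_i^{(\ell-1)}$, with no residual path and no LN. Then $\nabla_{W^{(\ell)}}\goodness=2\bh_i^{(\ell)}(\bh_i^{(\ell-1)})^\top$. This gives $H_0^{(\ell)}=\sum_{r=1}^{d^{(\ell)}}Y_r$ with $(Y_r)_{ij}=4[K_{\mathrm{in}}^{(\ell-1)}]_{ij}h_{i,r}^{(\ell)}h_{j,r}^{(\ell)}$, which are genuinely independent PSD summands indexed by the Gaussian rows. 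Since $\E[h_{i,r}^{(\ell)}h_{j,r}^{(\ell)}]=\tfrac{2}{d^{(\ell-1)}}[K_{\mathrm{in}}^{(\ell-1)}]_{ij}$, one gets $\tfrac{1}{d^{(\ell)}}\E H_0^{(\ell)}=\tfrac{8}{d^{(\ell-1)}}K_{\mathrm{in}}^{(\ell-1)}\odot K_{\mathrm{in}}^{(\ell-1)}$ with equality, where $8=2^2\cdot 2$.

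In your language this is the case $f(u)=2u$: a single Hermite term with $a_1^2=4$, nothing to drop, and no LN correction. The statement's free choice of a large $d^{(\ell)}$ with variance $2/d^{(\ell-1)}$ already signals a non-residual block. With that identity in hand, your Hadamard-product viewpoint and your concentration step (matrix Bernstein on the centered $Y_r/d^{(\ell)}$ at deviation $t=\mu_H$, then Weyl) coincide with the paper's proof.
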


\begin{proof}
Let $H_0^{(\ell)}:= H^{(\ell)}(W_0^{(\ell)})$ and let $\Phi_0$ be the matrix whose $i$-th column is the flattened goodness-gradient direction
\begin{align}
    \boldsymbol{\psi}_{i,0}^{(\ell)}
    :=
    \mathrm{vec}\!\left(
        \nabla_{W^{(\ell)}}\goodness\!\left(\bh_i^{(\ell)}\right)
    \right).
\end{align}
By the definition of the FFA Gram matrix in Section~\ref{sec:single_layer},
\begin{align}
    H_{0,ij}^{(\ell)}
    =
    \left\langle
        \boldsymbol{\psi}_{i,0}^{(\ell)},
        \boldsymbol{\psi}_{j,0}^{(\ell)}
    \right\rangle,
    \qquad
    H_0^{(\ell)}=\Phi_0^\top\Phi_0 .
    \label{eq:init_gram_column_form}
\end{align}
Thus the relevant matrix is an $N\times N$ column Gram matrix, not the parameter-space covariance $\sum_i \boldsymbol{\psi}_{i,0}^{(\ell)}(\boldsymbol{\psi}_{i,0}^{(\ell)})^\top$.

For the quadratic goodness $\goodness(\bh)=\|\bh\|^2$ and a linear or fixed-mask local block, the chain rule gives
\begin{align}
    \nabla_{W^{(\ell)}}\goodness(\bh_i^{(\ell)})
    =
    2\,\bh_i^{(\ell)}\bigl(\bh_i^{(\ell-1)}\bigr)^\top,
    \qquad
    \boldsymbol{\psi}_{i,0}^{(\ell)}
    =
    2\,\mathrm{vec}\!\left(
        \bh_i^{(\ell)}\bigl(\bh_i^{(\ell-1)}\bigr)^\top
    \right),
    \label{eq:quadratic_goodness_vec_gradient}
\end{align}
which is the Kronecker product $2\,\bh_i^{(\ell-1)}\otimes\bh_i^{(\ell)}$ under the column-major vectorization convention.
Now we show that the projection of random vector $\boldsymbol{\psi}_{i,0}^{(\ell)}$ on any unit vector $u=\mathrm{vec}(U)$ is sub-Gaussian. By the definition of the Kronecker product and the inner product,
\begin{align}
    \left\langle u,\boldsymbol{\psi}_{i,0}^{(\ell)}\right\rangle
    =
    2\,\left\langle U,\bh_i^{(\ell)}\bigl(\bh_i^{(\ell-1)}\bigr)^\top\right\rangle
    =
    2\,\left\langle U\bh_i^{(\ell-1)},\bh_i^{(\ell)}\right\rangle .
    \label{eq:psi_subgaussian_scalar}
\end{align}
The bounded-activation assumption controls the deterministic vector multiplying the Gaussian output. Indeed, since $u=\mathrm{vec}(U)$, $\|U\|_F=\|u\|$, and hence
\begin{align}
    \|U\bh_i^{(\ell-1)}\|
    \leq
    \|U\|_{\mathrm{op}}\|\bh_i^{(\ell-1)}\|
    \leq
    \|U\|_F B_h
    =
    B_h\|u\|.
    \label{eq:bounded_activation_projection}
\end{align}
By Cauchy--Schwarz's inequality, it holds that
\begin{align}
    \left\|\left\langle u,\boldsymbol{\psi}_{i,0}^{(\ell)}\right\rangle\right\|_{\psi_2}
    \leq
    C B_h^2\|u\|.
    \label{eq:psi_subgaussian_bound}
\end{align}
Implying $\left\langle u,\boldsymbol{\psi}_{i,0}^{(\ell)}\right\rangle$ is sub-Gaussian with parameter $O(B_h^2)$.

It remains to explain the matrix concentration step. Expose the Gaussian block row by row. In the quadratic fixed-mask specialization, \eqref{eq:quadratic_goodness_vec_gradient} implies
\begin{align}
    H_0^{(\ell)}
    =
    \sum_{r=1}^{d^{(\ell)}} Y_r,
    \qquad
    (Y_r)_{ij}
    =
    4\,
    \left\langle \bh_i^{(\ell-1)},\bh_j^{(\ell-1)}\right\rangle
    h_{i,r}^{(\ell)}h_{j,r}^{(\ell)},
    \label{eq:row_feature_decomposition}
\end{align}
where the matrices $Y_r$ are independent, self-adjoint, and positive semidefinite conditional on the previous-layer representations. 
Moreover, since $W_{0,r\cdot}^{(\ell)}\sim \cN(0,2/d^{(\ell-1)})$,
\begin{align}
    \E_{W_0}\!\left[h_{i,r}^{(\ell)}h_{j,r}^{(\ell)}\mid \{x_i\}_{i=1}^N\right]
    =
    \frac{2}{d^{(\ell-1)}}\left\langle x_i,x_j\right\rangle.
\end{align}
Therefore
\begin{align}
    \frac{1}{d^{(\ell)}}\E_{W_0}H_0^{(\ell)}
    =
    \frac{8}{d^{(\ell-1)}}
    \bigl(K_{\mathrm{in}}^{(\ell-1)}\odot K_{\mathrm{in}}^{(\ell-1)}\bigr),
    \label{eq:expected_gram_from_input_diversity}
\end{align}
and \eqref{eq:quadratic_feature_diversity} is exactly a previous-layer representation condition implying
\begin{align}
    \lambda_{\min}\!\left(
        \frac{1}{d^{(\ell)}}\E_{W_0}H_0^{(\ell)}
    \right)
    \geq 2\mu_H .
\end{align}

Apply the self-adjoint matrix Bernstein inequality~\cite[Theorem~1.6]{tropp2012user} to
\begin{align}
    X_r
    :=
    \frac{1}{d^{(\ell)}}\left(Y_r-\E Y_r\right),
    \qquad
    \frac{1}{d^{(\ell)}}\left(H_0^{(\ell)}-\E H_0^{(\ell)}\right)
    =
    \sum_{r=1}^{d^{(\ell)}}X_r .
\end{align}
The definition of $X_r$ is chosen so that it is exactly the $r$-th centered contribution to the normalized Gram matrix. If
\begin{align}
    A:= \frac{1}{d^{(\ell)}}\E H_0^{(\ell)},
    \qquad
    S:= \sum_{r=1}^{d^{(\ell)}}X_r
    =
    \frac{1}{d^{(\ell)}}\left(H_0^{(\ell)}-\E H_0^{(\ell)}\right),
\end{align}
then
\begin{align}
    \frac{1}{d^{(\ell)}}H_0^{(\ell)}=A+S .
\end{align}
Weyl's inequality gives
\begin{align}
    \lambda_{\min}(A+S)
    \geq
    \lambda_{\min}(A)-\|S\|_{\mathrm{op}}.
\end{align}
Therefore, the event that the normalized empirical Gram loses at least $t$ in its minimum eigenvalue is contained in the event that the centered fluctuation $S=\sum_r X_r$ has operator norm at least $t$.
On the bounded-activation/local-region event, there are constants $R_\ell$ and $\sigma_\ell^2$, controlled by $B_h$ and the operator norm of the previous-layer sample Gram, such that
\begin{align}
    \|X_r\|_{\mathrm{op}}\leq \frac{R_\ell}{d^{(\ell)}},
    \qquad
    \left\|\sum_{r=1}^{d^{(\ell)}}\E X_r^2\right\|_{\mathrm{op}}
    \leq
    \frac{\sigma_\ell^2}{d^{(\ell)}}.
\end{align}
For normalized sample Gram matrices these constants are $R_\ell=O(B_h^4)$ and $\sigma_\ell^2=O(B_h^8)$. Bernstein's inequality therefore gives, for every $t>0$,
\begin{align}
    \label{eq:gram_bernstein_tail}
    \Prob\!\left[
        \lambda_{\min}\!\left(\frac{1}{d^{(\ell)}}H_0^{(\ell)}\right)
        \leq
        \lambda_{\min}\!\left(\frac{1}{d^{(\ell)}}\E H_0^{(\ell)}\right)-t
    \right]
    &\leq
    \Prob\!\left[
        \left\|\frac{1}{d^{(\ell)}}\left(H_0^{(\ell)}-\E H_0^{(\ell)}\right)\right\|_{\mathrm{op}}
        \geq t
    \right] \\
    &\qquad\leq
    2N\exp\!\left(
        -\frac{d^{(\ell)} t^2/2}{\sigma_\ell^2+R_\ell t/3}
    \right). \nonumber
\end{align}
The factor $N$ is the dimension of the Gram matrix in \eqref{eq:init_gram_column_form}. Taking $t=\mu_H$ and using the input diversity condition through \eqref{eq:expected_gram_from_input_diversity} yields
\begin{align}
    \Prob\!\left[
        \lambda_{\min}\!\left(\frac{1}{d^{(\ell)}}H_0^{(\ell)}\right)
        \leq
        \mu_H
    \right]
    \leq
    2N\exp\!\left(
        -\frac{d^{(\ell)}\mu_H^2/2}{\sigma_\ell^2+R_\ell\mu_H/3}
    \right).
\end{align}
Thus $d^{(\ell)}\gtrsim (\sigma_\ell^2/\mu_H^2+R_\ell/\mu_H)\log(2N/\delta)$ gives failure probability at most $\delta$. The numerical choice of $\mu_H$ is therefore controlled by the quadratic-feature diversity margin in \eqref{eq:quadratic_feature_diversity}, rather than being an independent population-Gram assumption.
\end{proof}

\subsection{Verification of \Cref{ass:bp_jac_conc}}
\label{app:verification-bp_jac_conc}

\begin{proposition}[NTK realization of Assumption~\ref{ass:bp_jac_conc}]\label{prop:bp_jac_conc_ntk}
Consider the equal-width scaled ResNet of~\eqref{eq:resnet_arch} (so $d^{(k)}\equiv d^{(\ell-1)}$ for every $k\in[\ell+1,L]$), under the additional regularity conditions
\begin{enumerate}[label=\textup{(N\arabic*)}, leftmargin=3em, itemsep=-0.3em, topsep=0em]
    \item\label{ass:smooth_phi} \textbf{Smooth activation.} $\phi$ is twice differentiable with $\|\phi'\|_\infty\le C_\phi$ and $\|\phi''\|_\infty\le L_\phi$.
    \item\label{ass:ntk_init} \textbf{NTK initialization and bounded inputs.} $[W_0^{(k)}]_{ij}\stackrel{\mathrm{iid}}{\sim}\cN\!\bigl(0,\,2/d^{(k-1)}\bigr)$, and $\|\bx\|\le B_x$ for all input $\bx$.
    \item\label{ass:ntk_radius} \textbf{NTK neighborhood.} $\bW\in\cW_R$, i.e.\ $\sum_{k}\|W^{(k)}-W_0^{(k)}\|_F\le R$.
\end{enumerate}
Pick any anchor $\bx_0\in\supp\cD$ and set $\bar J^{(L:\ell+1)}:=J^{(L:\ell+1)}|_{\bx_0}(\bW_0)$. Then there exists $C=C(C_\phi,L_\phi,\bar\rho,B_x)$ such that, for every $\delta\in(0,1)$, with probability $\ge 1-\delta$ over the random initialization $\bW_0$,
\begin{align}\label{eq:tau_J_NTK}
    \sup_{\bx\in\supp\cD,\;\bW\in\cW_R}\bigl\|J^{(L:\ell+1)}|_{\bx}(\bW)-\bar J^{(L:\ell+1)}\bigr\|_{\mathrm{op}}\;\le\;C\,\frac{R+\sqrt{\log(N/\delta)}}{\sqrt{d^{(\ell-1)}}}.
\end{align}
Equivalently, Assumption~\ref{ass:bp_jac_conc} holds with $\tau_J=O\bigl(R/\sqrt{d^{(\ell-1)}}\bigr)$ for any fixed $R$ as $d^{(\ell-1)}\to\infty$.
\end{proposition}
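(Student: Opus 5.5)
The plan is to split the deviation with the triangle inequality into three pieces:
\begin{align*}
J^{(L:\ell+1)}|_{\bx}(\bW)-\bar J
= \bigl[J|_{\bx}(\bW)-J|_{\bx}(\bW_0)\bigr]
+ \bigl[J|_{\bx}(\bW_0)-\E_{\bW_0}J|_{\bx}(\bW_0)\bigr]
+ \bigl[\E_{\bW_0}J|_{\bx}(\bW_0)-J|_{\bx_0}(\bW_0)\bigr].
\end{align*}
The first piece is a parameter-drift term and will be $O(R/\sqrt{d})$. The second is a fluctuation-at-initialization term and will be $O(\sqrt{\log(N/\delta)/d})$. The third is a bias term from choosing the anchor. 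Throughout I write each layer Jacobian as
\begin{align*}
J^{(k)}=I+\tfrac1L \widetilde J^{(k)},\qquad \widetilde J^{(k)}=D_\phi^{(k)}\,D_{\mathrm{LN}}^{(k)}\,W^{(k)}.
\end{align*}
Here $D_\phi^{(k)}=\mathrm{diag}(\phi'(\cdot))$ and $D_{\mathrm{LN}}^{(k)}$ is the LN Jacobian. This uses the product-bound preliminary from the proof of Theorem~\ref{thm:bp_esd}: every partial product $J^{(a:b)}$ has operator norm at most $e^{\bar\rho}$. I will use the telescoping identity
\begin{align*}
\prod_k A_k-\prod_k B_k=\sum_j \Bigl(\prod_{k>j}A_k\Bigr)(A_j-B_j)\Bigl(\prod_{k<j}B_k\Bigr).
\end{align*}
It reduces every product difference to a sum of single-layer differences, each weighted by $1/L$ and multiplied by factors of norm at most $e^{2\bar\rho}$. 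The $L$ terms and the $1/L$ weight cancel, so every bound is depth-uniform.

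\textbf{Drift term.} For a single layer, $\widetilde J^{(k)}(\bW)-\widetilde J^{(k)}(\bW_0)$ splits into two parts. The first is $D(\cdot)(W^{(k)}-W_0^{(k)})$, whose operator norm is at most $\|W-W_0\|_{\mathrm{op}}$ times a constant. The second is the change of $D_\phi D_{\mathrm{LN}}$. Because $\|\phi''\|_\infty\le L_\phi$ by (N1) and LN is smooth away from zero variance, this change is Lipschitz in the pre-activation. The pre-activations move by $O(\|W-W_0\|_F)$ via the forward perturbation recursion of Lemma~\ref{lem:perturbation}. The LN normalization is what supplies the $1/\sqrt{d}$ factor. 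The pre-activation $\bz=W\bh$ has $\|\bz\|\asymp\sqrt d$ at NTK initialization, so the LN Jacobian scales like $\sqrt d/\|\bz\|\cdot(\ldots)$. A Frobenius perturbation of size $R$ therefore changes the normalized coordinates by $O(R/\sqrt d)$. Summing over $j$ through the telescoping identity gives a drift of $C R/\sqrt{d}$.

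\textbf{Fluctuation term.} Conditional on the earlier layers, each $\widetilde J^{(k)}(\bW_0)$ has a Gaussian factor $W_0^{(k)}$ with entry variance $2/d$. I would control $\|J|_{\bx}(\bW_0)-\E J|_{\bx}(\bW_0)\|_{\mathrm{op}}$ with a martingale/matrix-Bernstein argument over layers, exposing one layer at a time and using the $1/L$ scaling, as in the Gram-conditioning verification of Assumption~\ref{ass:gram} in Appendix~\ref{app:verification-gram}. Each layer's centered increment has operator norm $O(1/(L\sqrt d))$ with high probability. This yields a deviation of $O(\sqrt{\log(d/\delta)/d})$ for a fixed $\bx$. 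For uniformity over the support, I would take a union bound over the $N$ training points, which gives the $\log(N/\delta)$ factor. Extending to all of $\supp\cD$ then uses Lipschitz continuity of $\bx\mapsto J|_{\bx}(\bW_0)$, which follows from the same smoothness, together with a net.

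\textbf{Anchor term (main obstacle).} I expect the bias term $\E_{\bW_0}J|_{\bx}-\E_{\bW_0}J|_{\bx_0}$ to be the hardest step. In the infinite-width limit, the expected Jacobian depends on $\bx$ only through LN-normalized statistics. I would argue it is $c(\bx)I+O(1/\sqrt d)$, with $c$ a product of expected $\phi'$ moments. The reason is rotational invariance of Gaussian $W_0$: the mean of $D_\phi D_{\mathrm{LN}} W_0$ is asymptotically a scalar multiple of the identity plus a rank-one LN correction. The hard part is showing that $c(\bx)$ does not depend on $\bx$ up to $O(1/\sqrt d)$. After LN, every pre-activation has the same normalized norm, so Gaussian coordinates have identical marginal laws and $\E\phi'$ is input-independent. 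This is where LN is essential. If this input-independence fails, I would fall back to the claim with $\bar J$ defined as the expected Jacobian rather than an anchor. Collecting the three pieces gives the bound in \eqref{eq:tau_J_NTK}, and the stated $\tau_J$ follows for fixed $R$.
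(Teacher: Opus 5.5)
There is a genuine gap in your fluctuation step, and it makes the three-way split unworkable.

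\textbf{The fluctuation step fails.} The claim that each layer's centered increment has operator norm $O(1/(L\sqrt d))$ is false. Condition on $h:=h^{(k-1)}$ and write $\widetilde J^{(k)}(\bW_0)=D(z)W_0^{(k)}$, with $z=W_0^{(k)}h$ and $D(z)=\mathrm{diag}(\phi'(\mathrm{LN}(z)))\,\partial\mathrm{LN}(z)$. Split $W_0^{(k)}=z\hat h^\top/\|h\|+W_0^{(k)}(I-\hat h\hat h^\top)$ with $\hat h=h/\|h\|$; the second term is a centered Gaussian matrix independent of $z$.
\begin{itemize}
\item For a unit $v\perp h$, the centered increment applied to $v$ is $\tfrac1L D(z)W_0^{(k)}v$, and $W_0^{(k)}v\sim\mathcal{N}(0,\tfrac2d I)$ is independent of $D(z)$. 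Its norm is therefore about $\tfrac1L\sqrt{2/d}\,\|D(z)\|_F=\Theta(1/L)$.
\item Put simply, a $d\times d$ matrix with entry variance $O(1/d)$ has operator norm $\Theta(1)$, not $O(1/\sqrt d)$.
\item The analogy with the Gram-conditioning argument does not transfer. There the matrix is $N\times N$ with $N$ fixed, and one averages $d^{(\ell)}$ independent summands.
\end{itemize}
Hence $J|_{\bx}(\bW_0)-\E J|_{\bx}(\bW_0)$ does not shrink with width. For $L=2$, $\ell=1$ it equals $\tfrac12(\widetilde J^{(2)}-\E\widetilde J^{(2)})$, which is of order one. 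So your second and third pieces are each $\Theta_d(1)$.

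Any smallness of $J|_{\bx}(\bW_0)-J|_{\bx_0}(\bW_0)$ can only come from these two fluctuations nearly cancelling. That requires the same $\bW_0$ and $D(z_{\bx})\approx D(z_{\bx_0})$, i.e.\ representations that are close on the $\sqrt d$ scale. This is an input-Lipschitz effect, not concentration, and inserting the deterministic mean discards exactly that coupling.

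The paper uses only two pieces, $J|_{\bx}(\bW)-J|_{\bx}(\bW_0)$ and $J|_{\bx}(\bW_0)-J|_{\bx_0}(\bW_0)$, so it keeps the coupling. However, its bound on the second piece (zero mean, entry variance $O(1/d)$, matrix Bernstein, hence $\sqrt{\log(NL/\delta)/d}$) makes the same variance-versus-operator-norm slip. It does not supply the missing estimate.

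\textbf{Your picture of the mean is also off.} The same splitting gives $\E[\widetilde J^{(k)}\mid h]=\E[D(z)z]\,h^\top/\|h\|^2$. This is rank one, with no $\E\phi'$ multiple of the identity. Moreover $\partial\mathrm{LN}(z)z=(\varepsilon/s^2)\,\mathrm{LN}(z)$, with $s$ the LN denominator, so this mean vanishes for $\varepsilon=0$. By the tower property, $\E J|_{\bx}(\bW_0)=I$ for every input. The anchor bias is the easy part, not the main obstacle.

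\textbf{The drift step fails as well, and the statement cannot hold as written.} This problem is shared with the paper's term (I): LN does not produce a factor $1/\sqrt d$. By your own scaling, $\|\partial\mathrm{LN}(z)\|_{\mathrm{op}}\approx\sqrt d/\|z\|=\Theta(1)$. So $D(z)(W^{(k)}-W_0^{(k)})$ is of the order of $\|W^{(k)}-W_0^{(k)}\|_{\mathrm{op}}$, which can use the whole budget $R$.

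Concretely, let $\bW$ agree with $\bW_0$ except $W^{(L)}=W_0^{(L)}+R\,ab^\top$, with unit $a,b$ and $b\perp h^{(L-1)}_{\bx_0}$. Then nothing below layer $L$ changes, and $z^{(L)}$ and $D^{(L)}$ are unchanged at $\bx_0$. Therefore
\[ J^{(L:\ell+1)}|_{\bx_0}(\bW)-\bar J^{(L:\ell+1)}=\tfrac{R}{L}\,D^{(L)}a\,b^\top J^{(L-1:\ell+1)}|_{\bx_0}(\bW_0). \]
Take $a$ a top right singular vector of $D^{(L)}$. The operator norm is then at least a $\bar\rho$-dependent constant times $R\|D^{(L)}\|_{\mathrm{op}}/L$. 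For a non-degenerate $\phi$ such as GELU this is of order $R/L$, independently of $d$.

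Neither $C$ nor $\bar\rho$ grows with $d$. So \eqref{eq:tau_J_NTK} fails for large $d$ at fixed $L$ and $R$ under (N2)--(N3) as written (Kaiming scale, Frobenius radius in the same units). No completion of your plan, or of the paper's, proves the proposition as stated. The $R/\sqrt d$ rate familiar from NTK analyses refers to a radius measured in a $1/\sqrt d$-rescaled parameterization, which the hypotheses do not impose. Any bound routed through $\E_{\bW_0}J$ can at best decay in depth, like $\sqrt{L-\ell}/L$, not in width.
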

\begin{proof}[Proof]
Write $J:=J^{(L:\ell+1)}$. Triangle-decompose the deviation:
\begin{align}\label{eq:tau_J_decomp}
    \bigl\|J|_{\bx}(\bW)-J|_{\bx_0}(\bW_0)\bigr\|_{\mathrm{op}}
    \;\le\;\underbrace{\bigl\|J|_{\bx}(\bW)-J|_{\bx}(\bW_0)\bigr\|_{\mathrm{op}}}_{\text{(I): parameter perturbation}}
    \;+\;\underbrace{\bigl\|J|_{\bx}(\bW_0)-J|_{\bx_0}(\bW_0)\bigr\|_{\mathrm{op}}}_{\text{(II): sample variation at init}}.
\end{align}

\textbf{Term (I).} The standard per-layer ResNet-NTK linearization estimate~\cite{allenzhu2019convergence,du2019gradient}, under~\ref{ass:smooth_phi}--\ref{ass:ntk_init} and an $\epsilon$-net of $\supp\cD$, yields
\begin{align}
    \sup_{\bx\in\supp\cD}\bigl\|\widetilde J^{(k)}|_{\bx}(\bW)-\widetilde J^{(k)}|_{\bx}(\bW_0)\bigr\|_{\mathrm{op}}\;\le\;C_1 L_\phi B_x\,\frac{R}{\sqrt{d^{(k-1)}}}
\end{align}
uniformly in $k\in[\ell+1,L]$ with probability $\ge 1-\delta/2$. The residual scaling $\frac{1}{L}\widetilde J^{(k)}$ in~\eqref{eq:resnet_arch} cancels the layer count when propagating through the product:
\begin{align}
    \mathrm{(I)} \;\le\;e^{2\bar\rho}\cdot\frac{1}{L}\sum_{k=\ell+1}^{L} \bigl\|\widetilde J^{(k)}|_{\bx}(\bW)-\widetilde J^{(k)}|_{\bx}(\bW_0)\bigr\|_{\mathrm{op}}\;\le\;\frac{C_2 R}{\sqrt{d^{(\ell-1)}}},
\end{align}
where $C_2$ absorbs $C_1, L_\phi, B_x$ and the operator-norm factor $e^{2\bar\rho}$.

\textbf{Term (II).} At fixed $\bW_0$, the per-layer residual Jacobian admits the chain-rule form $\widetilde J^{(k)}|_{\bx}(\bW_0)=\diag\!\bigl(\phi'(W_0^{(k)}\,\mathrm{LN}(\bh^{(k-1)}|_{\bx}))\bigr)\,W_0^{(k)}\,\partial\,\mathrm{LN}\bigl(\bh^{(k-1)}|_{\bx}\bigr)$. The LN normalization places $\mathrm{LN}(\bh^{(k-1)}|_{\bx})$ on a fixed-radius sphere, so by the rotational symmetry of $W_0^{(k)}$ the centred difference
\begin{align}
    \widetilde J^{(k)}|_{\bx}(\bW_0)-\widetilde J^{(k)}|_{\bx_0}(\bW_0)
\end{align}
has zero $\bW_0$-mean, and each entry is a finite-degree polynomial in the Gaussian $W_0^{(k)}$ with bounded coefficients (by~\ref{ass:smooth_phi}), giving sub-exponential entry variance $O(1/d^{(k-1)})$. Matrix Bernstein and a union bound over $\supp\cD\times[L]$ yield
\begin{align}
    \sup_{\bx\in\supp\cD,\,k}\bigl\|\widetilde J^{(k)}|_{\bx}(\bW_0)-\widetilde J^{(k)}|_{\bx_0}(\bW_0)\bigr\|_{\mathrm{op}}\;\le\;C_3\,\frac{\sqrt{\log(N L/\delta)}}{\sqrt{d^{(k-1)}}}
\end{align}
with probability $\ge 1-\delta/2$. Propagation through the product as in (I) gives $\mathrm{(II)}\le C_4\sqrt{\log(N/\delta)}/\sqrt{d^{(\ell-1)}}$.

Combining (I) and (II) by a union bound gives~\eqref{eq:tau_J_NTK}.
\end{proof}

\begin{remark}[ReLU activation]\label{rem:relu_jac_conc}
For ReLU activation $\phi(z)=\max(z,0)$, $\phi''$ is supported on a Lebesgue-null set and~\ref{ass:smooth_phi} fails, so~\eqref{eq:tau_J_NTK} does not follow directly from the linearization argument above. Two routes are available: (i) replace ReLU by a smooth surrogate (e.g.\ a softplus $\phi_\beta(z)=\beta^{-1}\log(1+e^{\beta z})$) in the analysis and pass the bound to $\beta\to\infty$; (ii) work directly with the random ReLU activation pattern matrix and apply Du et al.~\cite{du2019gradient}, yielding $\tau_J=O\bigl(R/\sqrt{d^{(\ell-1)}}\bigr)$ at the cost of an additional union bound over the activation pattern. Either route preserves the $\sqrt{d^{(\ell-1)}}$ rate.
\end{remark}

%% file: appendix/D_extended_experiments.tex
\section{Extended Experimental Details and Results}
\label{app:extended_experiments}

This appendix presents additional reproduction details and experimental results that support the theoretical analysis in the main text.

\subsection{Compute resources and execution time}
\label{app:compute_resources}

\paragraph{Hardware.}
All experiments are conducted on NVIDIA RTX PRO 6000 GPUs with 96~GB RAM. 

\paragraph{Memory requirements.}
For the ResNet experiments on CIFAR-10/100, memory footprint ranges from 8--16~GB per GPU depending on depth and batch size. The Tiny-ImageNet runs at $64 \times 64$ resolution require 16--24~GB. Language model experiments scale as follows: tiny and small scales require 6--8~GB per GPU; medium and large scales require 12--18~GB; xlarge scales with per-layer untied heads (LCE mode) peak at 40--44~GB.

\paragraph{Training time.}
Toy problem (Figure~\ref{fig:exp2}, vanilla FFA, $T=200{,}000$ iterations): $\sim$5 minutes per depth setting.
ResNet experiments (CIFAR-10, 50 epochs, single GPU): $\sim$4--8 hours per method per depth, depending on ResNet size (R18: $\sim$4 hours; R108: $\sim$20 hours).
CIFAR-100 experiments (50 epochs per method per depth): $\sim$6--12 hours per configuration.
Tiny-ImageNet experiments at $64 \times 64$ (25 epochs): $\sim$8--15 hours per method per depth.
Language model pre-training (Table~\ref{tab:chinchilla}, Chinchilla-style budgets): tiny scale ($0.14$B tokens) requires $\sim$1 hour on 1 GPU; small scale ($0.32$B tokens) requires $\sim$2.5 hours on 1 GPU; medium and large scales ($0.60$B and $1.02$B tokens) require $\sim$4--6 hours on 2 GPUs; xlarge scale ($2.48$B tokens) with LCE requires $\sim$12--16 hours on 4 GPUs. BP and FFA/NCE runs on the same scales are 2.5--3.0$\times$s.

\subsection{Signal Effective Rank Trajectories}
\label{app:matched_rank_trajectories}

This subsection provides an empirical verification of representation collapse, to be specific, the complete finite-depth rank diagnostic on CIFAR-10 trajectories.
The aim is to distinguish forward representation diversity, measured by $\erank(\Sigma)$, from learning-signal diversity, measured by $\erank(\Gamma)$, while avoiding comparisons across independently trained checkpoints.
The error-signal diagnostic empirically tests the finite-depth signature underlying Theorem~\ref{thm:elc_scaling}: BP retains richer error directions than FFA, whose local signals can approach the rank-one floor after representation collapse.

\begin{figure*}[t]
\centering
\includegraphics[width=0.95\textwidth]{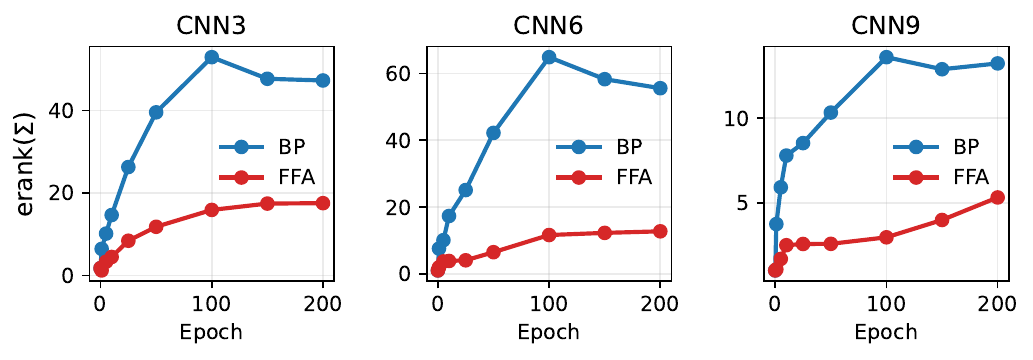}\\[-4pt]
\includegraphics[width=0.95\textwidth]{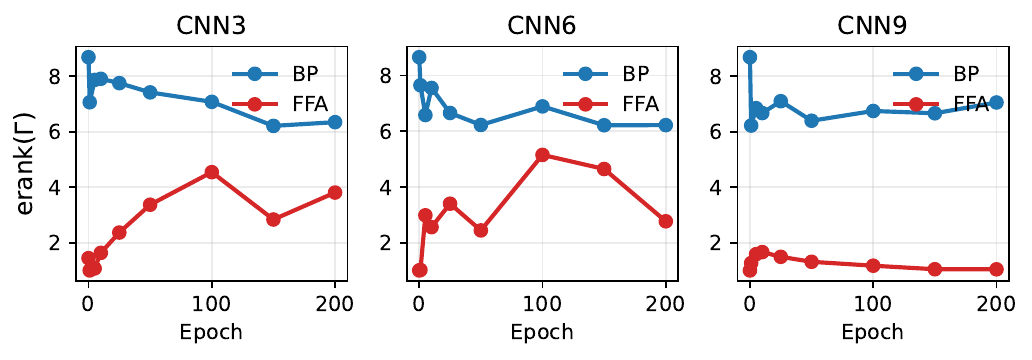}
\caption{Training-time last-block rank diagnostics on CNN@CIFAR-10. 
Top: representation effective rank $\erank(\Sigma)$. 
Bottom: error-signal effective rank $\erank(\Gamma)$.
The $\Sigma$ probe contains one row-normalized representation per image.}
\label{fig:app_matched_rank_trajectories}
\end{figure*}

\paragraph{{Protocol.}}
We train CNN3, CNN6, and CNN9 for 200 epochs with seed 0, Adam step size $10^{-3}$, batch size 128, no data augmentation, and a ten-fold learning-rate reduction after epoch 100. BP uses the task cross-entropy and BatchNorm backbone; FFA uses independent per-block optimizers, detached inter-block activations, and its local positive/negative goodness loss.

At each probe, the final convolutional block is evaluated in evaluation mode on the same class-balanced test subset of 100 images from each of the ten CIFAR-10 classes, hence 1000 raw input images.
For the forward diagnostic, let $d$ denote the dimension of the flattened final-block representation and let $Z\in\mathbb{R}^{1000\times d}$ collect the corresponding image representations row-wise. Each row of $Z$ is normalized to norm $\sqrt{d}$ before forming $\Sigma=ZZ^{\top}/d$.

\paragraph{{Results.}}
At every measured epoch and for all three depths, FFA has a lower last-block $\erank(\Gamma)$ than BP.
Its error-signal rank remains near the one-direction floor after early training, most sharply for CNN9, whereas BP remains between approximately six and nine effective directions. 
Thus the lower FFA error-signal diversity occurs alongside lower non-centered representation diversity.
The paired curves are empirical evidence compatible with the finite-depth mechanism behind Theorem~\ref{thm:elc_scaling}: FFA sustains fewer distinct local error directions, and its forward representations are less diverse on this probe.

\subsection{FFA for classification}
\label{app:ffa_classification}

\paragraph{Why FFA classification is non-trivial.}
The Forward--Forward Algorithm~\cite{hinton2022forward} and most of its descendants do not produce class logits directly: each layer optimizes a local goodness against a positive/negative contrast, and no end-to-end gradient reaches a final softmax classifier head.
This raises a question for benchmarking: how do we report classification accuracy in a form comparable to BP, which terminates in a softmax cross-entropy classifier?
Two protocols dominate the FFA literature.

\textbf{(1) Goodness-based label enumeration}~\cite{hinton2022forward}.
At inference one constructs $K$ copies of the input, one per candidate label, runs each through the network with that label injected, and predicts the label whose summed per-block goodness is largest.
This is the algorithm-internal classifier for label-injection FFA variants (Vanilla FFA, SCFF, SymBa, LayerCollab, Trifecta), and we report its accuracy as \emph{native} classification accuracy where applicable.
It has two drawbacks for cross-method comparison: it is $K\times$ more expensive than a single forward pass, and it is undefined for non-label-injection local-learning methods such as LCE, SFF, and Distance-Forward, which instead ship their own per-block supervised heads.

\textbf{(2) Linear probe / readout head}~\cite{hinton2022forward}.
A linear classifier is trained on the network's hidden features---either on a frozen trunk after training (a \emph{post-hoc linear probe}) or jointly with the native objective using stop-gradient (a \emph{jointly-trained detached readout}).
This protocol applies uniformly to BP, label-injection FFA, and per-block supervised local methods, providing a single comparable axis on which to read out trunk representation quality.

\paragraph{Detached-readout protocol used throughout this appendix.}
Where stated, our \emph{detached readout} attaches a linear classifier head jointly to the network's stage outputs and trains it alongside the native objective at every epoch.
The stage outputs are reduced by global average pooling, concatenated into a single feature vector, passed through optional dropout, and projected to class logits by one linear layer.
The readout's gradients are stopped before reaching the backbone (\verb|h.detach()| on each stage output), so the native trunk sees only its own algorithm's learning rule; the readout simply reports how class-discriminative that trunk is at each epoch.
This differs from a post-hoc linear probe in two practical ways: the readout is co-trained throughout the run rather than fitted on a frozen network, and it sees a multi-stage concatenation rather than a single feature.
Both choices give the readout the strongest reasonable chance of recovering class-discriminative information from a layer-locally trained trunk, putting BP and FFA-family methods on a single comparable axis: how class-readable is the representation the trunk produces.
We refer to this protocol as \emph{detached readout} in subsequent subsections, and to the algorithm-internal classifier as \emph{native} accuracy.

\subsection{Controlled Locality Intervention on CNN12}
\label{app:cnn12_locality_intervention}

This subsection gives the exact protocol underlying Table~\ref{tab:locality_intervention}. It is a grouped-goodness-FFA intervention on CIFAR-10: all grouped-FFA conditions use the same CNN12 convolutional backbone, true-label positive overlay, random-wrong-label negative overlay, local softplus goodness loss, Adam optimizer, learning-rate schedule, batch size, and epoch budget. The parameter varied by design is the partition through which the local goodness gradient is permitted to travel.

\paragraph{Fixed CNN12 backbone and inference rule.}
Write the $\ell$-th backbone map as $f^{(\ell)}$ and its activation as $\bh^{(\ell)}$, so that
\begin{equation}
  \bh^{(\ell)} = f^{(\ell)}\bigl(\bh^{(\ell-1)}\bigr),
  \qquad \ell\in[12].
  \label{eq:cnn12_layer_map}
\end{equation}
Each map is a $3\mathbin{\times}3$ convolution followed by ReLU and, where prescribed, pooling; the channel sequence is
$3\to32\to32\to64\to64\to128\to128\to256\to256\to256\to256\to256\to256$.
Max pooling follows layers $2$, $4$, and $6$, and adaptive average pooling to $4\mathbin{\times}4$ follows layer $12$. The grouped-goodness runs use neither BatchNorm nor channel LayerNorm in this trunk.

Let $m_{\mathrm{blk}}\in\{1,2,3,4,6,12\}$ denote the number of convolutional layers per block and $B_{\mathrm{blk}}=12/m_{\mathrm{blk}}$ the number of blocks. For $b\in[B_{\mathrm{blk}}]$, define the contiguous backbone submap
\begin{equation}
  F^{(b)}
  := f^{(b m_{\mathrm{blk}})}\circ f^{(b m_{\mathrm{blk}}-1)}\circ\cdots\circ
  f^{((b-1)m_{\mathrm{blk}}+1)}.
  \label{eq:cnn12_block_map}
\end{equation}
At inference, the block-boundary activations satisfy $\widehat{\bh}^{[0]}=\bh^{(0)}$ and
\begin{equation}
  \widehat{\bh}^{[b]} = F^{(b)}\bigl(\widehat{\bh}^{[b-1]}\bigr),
  \qquad b\in[B_{\mathrm{blk}}],
  \label{eq:cnn12_inference_blocks}
\end{equation}
so $\widehat{\bh}^{[B_{\mathrm{blk}}]}=f^{(12)}\circ\cdots\circ f^{(1)}(\bh^{(0)})=\bh^{(12)}$ for every partition. Classification is the native label-overlay rule: all ten candidate labels are overlaid on the input, each candidate is passed through the same 12-layer backbone, the goodness scores of the block heads are summed, and the label with the largest sum is returned. Thus, the deployed backbone has identical depth, widths, and forward convolutional maps in every grouped-FFA condition.

\paragraph{Blockwise FFA update.}
For a positive/negative overlay pair, block $b$ receives the stopped boundary activation and computes
\begin{equation}
  \widetilde{\bh}^{[b]}
  = F^{(b)}\bigl(\operatorname{sg}\left[\widehat{\bh}^{[b-1]}\right]\bigr),
  \label{eq:cnn12_stop_gradient}
\end{equation}
where $\operatorname{sg}$ is the stop-gradient operator. Its local head evaluates the positive and negative goodnesses on $\widetilde{\bh}^{[b]}$, and the parameters of $F^{(b)}$ and of that head are updated using the resulting block loss $\cL_{\mathrm{FFA}}^{(b)}$. Hence gradients traverse the $m_{\mathrm{blk}}$ layers within $F^{(b)}$, but do not cross a block boundary. The $12\mathbin{\times}1$ condition has horizon one, whereas $1\mathbin{\times}12$ permits one goodness signal to assign credit through all 12 convolutional layers.

\paragraph{Training, evaluation, and interpretation.}
Each grouped-FFA condition is trained for 200 epochs with batch size 128 and one Adam optimizer per block at learning rate $10^{-3}$; gradients are clipped to norm one and the learning rate is multiplied by $0.1$ after epoch 100.
The common CIFAR-10 preprocessing uses the standard dataset normalization and no data augmentation.
At the final epoch, mean layerwise $\erank(\Gamma)$ is computed from 1,000 class-balanced test images and positive/negative goodness signals.
The BP reference instead uses end-to-end task cross-entropy and is reported only as a distinct reference.

The intervention fixes the backbone shape and the local-goodness recipe while changing the training-time grouping of credit assignment, i.e., locality.
The results therefore support the causal conclusion that relaxing strict layer locality within this fixed grouped-FFA design can mitigate low-diversity learning signals and recover substantial accuracy.

Finally, we should note that changing the block partition lengthens the gradient horizon, but it also changes the number of block-specific goodness heads and optimizers. We therefore interpret this experiment as evidence that relaxing strict layerwise locality through grouped FFA can improve performance, rather than as an estimate of the isolated effect of gradient horizon alone.

\subsection{Toy Multi-Layer Convergence Setup}
\label{app:exp2_toy_setup}

Figure~\ref{fig:exp2} is generated by running vanilla FFA $T=200{,}000$ iterations.
The toy problem uses two Gaussian classes in input dimension $d_0=50$ with sample size $N=100$: positive samples are drawn as $x^+ = \mu + \xi$ and negative samples as $x^- = -\mu + \xi$, where $\mu = (1,0,\ldots,0)$ and $\xi \sim \cN(0, I_{d_0})$.
For each depth $L \in \{2,4,8,16\}$, the network width is fixed at $d=500$, each block is a bias-free linear layer followed by LayerNorm (without affine parameters) and ReLU, and weights are initialized with Kaiming normal initialization.
Training uses the FFA loss with goodness $G(\bh)=\|\bh\|_2^2$, per-layer threshold $\theta^{(\ell)} = d$, and decreasing step size $\eta_t = 5/(50+t)$.

\subsection{Architecture generalization: FFA vs.\ BP on CNN and Vision Transformer.}
To assess whether FFA's theoretical properties extend beyond fully-connected networks, we compare BP and Vanilla FFA on two modern architectures: a 3-layer CNN (32/64/128 filters) and a 2-layer Vision Transformer (4-head attention, 64-dim embeddings), both on full CIFAR-10 (50,000 samples, 10 classes, 50 epochs). FFA uses the label-overlay protocol~\cite{hinton2022forward} with standard goodness $G(\bh) = \|\bh\|^2$, dynamic threshold $\theta$, and gradient clipping.

\begin{table}[ht!]
\centering
\caption{Test accuracy on modern architectures.}
\label{tab:cnn_vit}
\small
\begin{tabular}{l ccc}
\toprule
Architecture & BP & Vanilla FFA & Gap \\
\midrule
CNN (3 conv layers) & 76.03\% & 73.73\% & \textbf{2.30\%} \\
ViT (2 transformer layers) & 72.71\% & 64.77\% & 7.94\% \\
\bottomrule
\end{tabular}
\end{table}

Table~\ref{tab:cnn_vit} shows that on CNN, FFA closes the gap to within 2.3\% of BP. On the ViT training, which is more complex, the gap between BP and FFA achieves increases significantly to 7.94pp. 
The larger ViT gap may reflect the difficulty of propagating label-overlay signal through attention mechanisms that globally mix spatial positions.

\subsection{Unified CIFAR-10 Benchmark Across Local Learning Algorithms}
\label{app:cifar10_cnn_bench}

We conduct a horizontal CIFAR-10 benchmark on three shared CNN backbones (Table~\ref{tab:cifar10_archs}).  All benchmark entries share an \emph{outer training budget}: Adam with learning rate $10^{-3}$, batch size 128, 200 epochs, no augmentation, and a $\times0.1$ learning-rate decay at epoch 100.  This is not a claim of identical computation or a common normalization intervention: algorithms retain their native trunk/head normalization and readout protocols.  In particular, BP and several native CNN methods use BatchNorm, FFA-style heads use LayerNorm, and fixed-feature or probe-based methods have method-specific evaluation stages.

\begin{table}[ht!]
\centering
\caption{CNN backbone architectures used in the unified benchmark.}
\label{tab:cifar10_archs}
\small
\begin{tabular}{lcccc}
\toprule
Arch & Conv blocks & Channel growth & Spatial scales & Params (BP) \\
\midrule
CNN3 & 3 & $3{\to}32{\to}64{\to}128$ & $32^2{\to}16^2{\to}8^2$ & 0.62M \\
CNN6 & 6 & $3{\to}32{\to}32{\to}64{\to}64{\to}128{\to}128$ & $32^2{\to}16^2{\to}8^2$ & 0.81M \\
CNN9 & 9 & $32{\to}64{\to}128{\to}256$ (VGG-style) & $32^2{\to}16^2{\to}8^2{\to}4^2$ & 2.81M \\
\bottomrule
\end{tabular}
\end{table}

Table~\ref{tab:cifar10_cnn_bench} reports one row per algorithm and one column per backbone depth, so that depth trajectories are read along horizontal lines. 
All benchmark rows share the outer budget: Adam with learning rate $10^{-3}$, batch size 128, 200 epochs, no augmentation, and a $0.1\times$ learning-rate decay at epoch 100. Algorithms retain their native normalization and readout protocols. 
Detailed algorithmic definitions and locality discussion appear in Section~\ref{app:baseline_taxonomy}.
The \textbf{Category} column groups algorithms by their local-learning mechanism:
\begin{itemize}[leftmargin=1.8em,itemsep=2pt,topsep=4pt]
  \item \textbf{Baseline.} Standard end-to-end backpropagation.
\end{itemize}

\begin{table*}[t]
  \centering
  \small
  \caption{CIFAR-10 CNN benchmark across local learning algorithms. Entries are best test accuracy (\%, mean $\pm$ std over seeds 0/1/2).}
  \label{tab:cifar10_cnn_bench}
  \begin{threeparttable}
    \input{appendix/CIFAR10-CNN_bench}
    \begin{tablenotes}[flushleft]
      \footnotesize
      \item N/A denotes an algorithm whose native design degenerates. PC $K$ is the number of hidden-state prediction/inference updates per outer parameter update; $K=0$ is not reported because it removes the PC inference loop.
    \end{tablenotes}
  \end{threeparttable}
\end{table*}

\smallskip\noindent\textit{Gradient-isolated local updates (per-block or per-module):} In FFA and its variants, no loss from a deeper local unit differentiates an earlier unit.
\begin{itemize}[leftmargin=1.8em,itemsep=2pt,topsep=2pt]
  \item \textbf{FFA.} Hinton's original Forward-Forward Algorithm~\cite{hinton2022forward}: two-pass (positive/negative) sigmoid contrastive goodness objective applied independently at each layer; no gradient crosses block boundaries.
  \item \textbf{FFA-contrastive.} FFA variants such as SymBa~\cite{lee2023symba} and SCFF~\cite{chen2024selfcontrastive} that alter the contrastive loss surface (symmetric margin loss, self-contrastive formulation) while retaining the positive/negative two-pass structure; all members still rely on sampled negatives.
  \item \textbf{FFA-extended.} FFA variants such as Trifecta~\cite{dooms2023trifecta}, Scodellaro CNN-FFA~\cite{scodellaro2025cnn}, and Distance-Forward~\cite{wu2024distance} that augment the base algorithm with richer per-layer auxiliary machinery---threshold schedules, label-embedding heads, distance-to-prototype objectives---whose native recipes are tied to specific backbone designs and are stripped here by the fair-comparison contract.
  \item \textbf{Local-CE.} Methods such as SFF~\cite{krutsylo2025scalable}, Greedy Layerwise~\cite{belilovsky2019greedy}, and N{\o}kland $L_{\text{pred}}$~\cite{nokland2019local} that replace the contrastive FFA objective with a per-block full-softmax cross-entropy classification head; no random negatives are drawn.
  \item \textbf{Local-sim.} Per-block similarity regression as in N{\o}kland $L_{\text{sim}}$~\cite{nokland2019local} toward the class-match indicator $\mathbf{1}[y_i{=}y_j]$, with no supervised classification head; isolates whether similarity matching alone suffices, independent of depth.
  \item \textbf{Local-combined.} Combines a per-block CE head ($L_{\text{pred}}$) and similarity regression ($L_{\text{sim}}$) within each layer (N{\o}kland $L_{\text{pred}+\text{sim}}$~\cite{nokland2019local}).
  \item \textbf{InfoNCE-local.} Greedy InfoMax~\cite{lowe2019greedy} trains gradient-isolated module groups with a local InfoNCE objective that predicts spatially shifted patch representations. Its locality unit is a module, rather than an individual convolution.
  \item \textbf{Dendritic-local.} Dendritic Localized Learning~\cite{lv2025dendritic} uses detached block activations and separate block/head optimizers; its next-state prediction target is detached.
\end{itemize}

\smallskip\noindent\textit{Auxiliary-network supervision:} 
Each local learning signal is constructed through an auxiliary network, either private to a block or used to propagate targets through learned inverses.
\begin{itemize}[leftmargin=1.8em,itemsep=2pt,topsep=2pt]
  \item \textbf{Aux-net.} Each layer is supervised via an auxiliary network whose depth follows a schedule $\{L{-}\ell{-}1\}_\ell$ that grows with backbone depth, explicitly designed to scale its capacity with the number of layers, as in AugLocal~\cite{ma2024auglocal}.
  \item \textbf{Target-Prop.} Difference Target Propagation~\cite{lee2015difference} trains each layer to invert the next-layer mapping via a learned reconstruction network. Its parameter update is local, but its target is recursively derived from the top classifier error through these inverse maps.
\end{itemize}

\smallskip\noindent\textit{Cross-layer coupling:} A local parameter update may consume detached signals produced at another layer, but no end-to-end gradient is propagated.
\begin{itemize}[leftmargin=1.8em,itemsep=2pt,topsep=2pt]
  \item \textbf{Dual-path feedback.} Counter-Current Learning~\cite{kao2024countercurrent} uses a dual path that carries label-driven feedback across layers while retaining parameter-local updates. 
  It is parameter-local but label-feedback-coupled. Its row uses only the repaired dual-path runs; the historic single-path fallback runs remain excluded.
  \item \textbf{Multi-layer-coord.} Methods such as Layer Collaboration~\cite{lorberbom2024layer} explicitly couple adjacent layers through an accumulated collaboration offset $\gamma_{<t}$, providing inter-layer coordination without end-to-end gradients.
  \item \textbf{Random-features.} Closed-form ridge regression on fixed random convolutional projections as in Forward Projection~\cite{oshea2025forwardprojection}; no iterative gradient descent is performed at any layer.
\end{itemize}

\smallskip\noindent\textit{Energy-based model training:} The inference stage can be formulated as an optimization problem that minimizes an energy function of the hidden states.
An inner loop of $K$ iterations is adopted to iteratively refine the hidden states before the outer parameter update.
\begin{itemize}[leftmargin=1.8em,itemsep=2pt,topsep=2pt]
  \item \textbf{Predictive Coding.} Fixed-prediction PC~\cite{rao1999predictive,whittington2017approximation,millidge2022predictive} alternates local state inference and parameter updates; $K\in\{1,5,20\}$ is the number of synchronous hidden-state inference steps performed before each outer update.
\end{itemize}

\paragraph{Scope and N/A entries.}
\emph{Greedy InfoMax} \cite{lowe2019greedy} is originally deployed with $M{=}3$ \emph{gradient-isolated modules}, each of which is itself a \emph{ResNet-50 residual block group of tens of convolutional layers} \cite[Sec.~4.1, App.~A.1]{lowe2019greedy}, and its InfoNCE objective predicts $k$ future rows of a $7{\times}7$ patch grid on $64{\times}64$ inputs. Under the 3-layer contract the combined receptive field saturates the $32{\times}32$ image after the second maxpool, collapsing the spatial patch hierarchy that InfoNCE relies on; we therefore mark Greedy InfoMax N/A on CNN3. Under the 6-layer contract the network preserves three spatial scales ($32^2 \to 16^2 \to 8^2$), which suffices to support a two-module InfoNCE variant that predicts shifted $4{\times}4$ patches at the $h_3$ ($16{\times}16$) and $h_5$ ($8{\times}8$) feature maps; this is the CNN6 entry for GIM. 

\emph{Layer Collaboration} \cite{lorberbom2024layer} is ported from its original MLP design by computing each block's goodness on the globally-average-pooled activation vector $v_\ell = \mathrm{GAP}(h_\ell)$ and accumulating the collaboration offset $\gamma_{<t} = \sum_{t'<t} G_{t'}$ as a detached scalar; each (block, FFA-head) pair has its own Adam optimizer, so no gradient crosses block boundaries.
\emph{AugLocal}~\cite{ma2024auglocal} is tagged ``degenerate'' on CNN3 because its linearly decreasing auxiliary-network depth schedule collapses to $\{2,1,0\}$ at $L=3$; on CNN6 the schedule is $\{5,4,3,2,1,0\}$, the paper's intended regime, so the tag is cleared.
\emph{Counter-Current Learning}~\cite{kao2024countercurrent} is now reported only from the repaired dual-path implementation: all three seeds completed for CNN3, CNN6, and CNN9 under the shared outer budget. The historic \texttt{single\_path} fallback remains excluded because it is end-to-end CE training rather than Counter-Current Learning.
\emph{Forward Projection}~\cite{oshea2025forwardprojection} is tagged ``closed\_form'' because the algorithm has no iterative training stage; the reported number is the one-shot ridge regression solution on random features.

\subsection{Task-Difficulty Scaling on Deep ResNets: BP vs.\ Local Learning}
\label{app:resnet_readout_difficulty}

The preceding paragraphs and Section~\ref{app:cifar10_cnn_bench} probe the BP--local-learning gap along two axes: \emph{algorithm family} at fixed task (CIFAR-10, CNN3/6/9), and \emph{architecture} at fixed dataset (MLP, CNN, ViT on CIFAR-10).
This subsection adds a third axis---\emph{task difficulty}---by sweeping four datasets of increasing complexity (MNIST, CIFAR-10, CIFAR-100, Tiny-ImageNet) on a deep ResNet ladder (ResNet-\{18,24,56,108\}).
The goal is to test whether the BP advantage observed at moderate depth is task-dependent and how depth and the training recipe redistribute the gap as task complexity grows.

\subsubsection{Setup and protocol}
\label{ssec:resnet_setup_protocol}

\paragraph{Detached-readout protocol.}
We use the detached-readout protocol described in Section~\ref{app:ffa_classification}, configured for the ResNet backbone as follows.
The readout concatenates all five stage outputs (the stem and the four residual stages: $h^{(0)},\ldots,h^{(4)}$) after global average pooling into a $1024$-dimensional feature vector ($64{+}64{+}128{+}256{+}512$), applies optional dropout, and projects to class logits by a single linear layer.
The readout's gradients are stopped before reaching the backbone, so the native trunk sees only its own algorithm's learning rule.
We report both the detached-readout test accuracy and the algorithm-native classification accuracy, where the latter is well-defined; readout-versus-native differences are analyzed in Section~\ref{ssec:resnet_native_vs_readout}.
The reported metric is the best detached-readout test accuracy across the $200$ training epochs; native (algorithm-internal) test accuracy is reported alongside it where relevant.

\paragraph{Per-method training recipe.}
We adopt a \emph{per-method} training protocol rather than imposing a uniform optimizer.
For BP, we train the ResNet backbone introduced by He et al.~\cite{he2016deep} using a conventional CIFAR-style recipe: SGD with Nesterov momentum~$0.9$, weight decay~$5{\times}10^{-4}$, initial learning rate~$0.1$, and a multistep schedule decaying by $\times 0.1$ at epochs $\{100,150\}$.
For each layer-local method (LCE, SFF, Distance-Forward (DF), and the label-injection FFA family) we instantiate one independent Adam optimizer per residual block at $\text{lr}{=}10^{-3}$ and apply a single $\times 0.1$ decay at epoch $100$, following the optimizer choice in the original FFA, LCE, SFF, and DF works.
We use ``LCE'' for the per-block local cross-entropy method of N{\o}kland \& Eidnes~\cite{nokland2019local}; this is the same family as the ``Local-CE'' methods of Section~\ref{app:cifar10_cnn_bench} and the per-layer cross-entropy heads in the language-modeling experiments of Section~\ref{sec:experiments}, the unifying idea being that the FFA contrastive loss is replaced by a per-block full-softmax cross-entropy supervision rather than tied to negative sampling.
We do \emph{not} claim these recipes are globally optimal; they are widely reported defaults under which each method has been evaluated, and our goal is to compare each method against a sensible default from its own literature.
Imposing a single optimizer across paradigms would systematically handicap whichever paradigm differs from the imposed default---SGD penalizes layer-local Adam users, Adam penalizes end-to-end ResNet---which we view as a less informative comparison than per-method canonical defaults.
The MNIST results (Section~\ref{ssec:resnet_main_results}), where the four BP-competitive methods cluster within $0.06$~pp at every depth, provide a calibration that the protocol is not so biased as to produce systematic gaps at task saturation; this rules out catastrophic recipe bias but does not exclude smaller hard-task-only biases at the magnitudes observed on CIFAR-100 / Tiny-ImageNet.
The detached readout itself uses AdamW($\text{lr}{=}10^{-3}$) with weight decay $10^{-4}$ in the baseline recipe and $10^{-3}$ in the hardened recipe (with readout dropout $0.2$ and label smoothing $0.1$ added in the latter).

\paragraph{Architectures.}
The four ResNets share a single CIFAR-style topology: a $3{\times}3$ stride-$1$ stem (no max-pool), four residual stages with channel widths $\{64,128,256,512\}$ and downsampling strides $\{1,2,2,2\}$, and BasicBlock units (two $3{\times}3$ convolutions with batch normalization and ReLU, projection shortcut when channels or stride change).
The variants differ only in the number of BasicBlocks per stage: ResNet-18 uses $[2,2,2,2]$ (the standard CIFAR adaptation of \texttt{torchvision.models.resnet18}), while ResNet-$\{24,56,108\}$ use $[3,3,3,2]$, $[7,7,7,6]$, $[13,13,13,14]$ respectively. Writing $n_s$ for the number of BasicBlocks in stage $s\in\{1,2,3,4\}$, the total convolutional + linear depth is $L=2\sum_s n_s+2 \in \{18,24,56,108\}$ ($+1$ for the stem convolution and $+1$ for the final linear classifier).
Label-injection FFA variants additionally take a $4$-channel input by concatenating a learned $1$-channel label embedding spatially aligned with the input (Trifecta-style label injection~\cite{hinton2022forward}); the concatenated input passes through the same backbone topology.

\paragraph{Datasets and recipes.}
The four datasets cover a coarse difficulty spectrum: MNIST ($10$ classes, grayscale upsampled to $32{\times}32$ RGB), CIFAR-10 ($10$ classes, $32{\times}32$), CIFAR-100 ($100$ classes, $32{\times}32$), and Tiny-ImageNet ($200$ classes, native $64{\times}64$).
The \emph{baseline} training recipe uses each dataset's standard mean/std normalization, $\text{RandomCrop}(32, \text{padding}{=}4) + \text{HorizontalFlip}$ on CIFAR variants and a $\text{RandomResizedCrop}$+flip on Tiny-ImageNet (downsampled to $32{\times}32$ in the baseline pipeline so that all four datasets share a single ResNet input size).
On the two harder datasets we additionally report a \emph{hardened} recipe that simultaneously (i) applies \texttt{RandAugment}($n{=}2,m{=}9$) and \texttt{RandomErasing}($p{=}0.25$), (ii) raises BP weight decay to $10^{-3}$, label smoothing to $0.1$, readout dropout to $0.2$, readout weight decay to $10^{-3}$, (iii) swaps BP's multistep schedule for cosine annealing, and---on Tiny-ImageNet only---(iv) restores the native $64{\times}64$ input resolution that was downsampled in the baseline pipeline.
Item (iv) is a recipe-level choice rather than a regularisation manipulation; we flag it explicitly because it contributes to the larger absolute lift observed on Tiny-ImageNet relative to CIFAR-100.
Layer-local methods retain their per-block step decay under the hardened recipe (their schedule is not naturally swappable to cosine without also re-tuning each block's optimizer); they do receive the augmentation, label-smoothing, weight-decay, and readout-dropout changes shared with BP.
This is one of several per-method asymmetries inherent in the canonical-recipe protocol; we discuss its implications under the gain decomposition (Section~\ref{ssec:resnet_hardening_gain}).

\paragraph{Algorithm coverage and run accounting.}
We train nine native algorithms split into two families.
\textbf{BP-competitive locals}---LCE~\cite{nokland2019local}, SFF~\cite{krutsylo2025scalable}, and DF---use per-block supervised heads (full-softmax CE for LCE and SFF, distance-to-prototype for DF) and run on all four datasets and both recipes.
\textbf{Label-injection FFA family}---Vanilla FFA~\cite{hinton2022forward}, SCFF~\cite{chen2024selfcontrastive}, SymBa, LayerCollab~\cite{lorberbom2024layer}, and Trifecta---inject the label spatially and contrast positive against negative goodness; we report these only on MNIST and CIFAR-10, where they train stably across all four ResNet depths under their published recipes.
$72$ MNIST/CIFAR-10 runs ($9{\times}2{\times}4$), $16$~CIFAR-100 baseline runs ($4{\times}1{\times}4$), $16$~CIFAR-100 hardened runs, $16$~Tiny-ImageNet baseline runs, $16$~Tiny-ImageNet hardened runs; $136$ total, all finished with W\&B status \texttt{ok}.
All runs use $200$ epochs, batch size $128$, and a single seed ($s{=}0$); compute budget did not permit additional seeds.
We treat any per-cell difference smaller than ${\sim}2$~pp as not statistically separable from single-seed noise throughout this section, and we report a final-vs-best-epoch robustness check at the end of the main results (Section~\ref{ssec:resnet_main_results}) confirming that all qualitative conclusions hold under either metric.

\subsubsection{Main results across task difficulty}
\label{ssec:resnet_main_results}

\begin{table}[ht!]
\centering
\caption{Detached-readout accuracy (\%) of BP-competitive methods on ResNet-\{18,24,56,108\} across four task-difficulty regimes. ``base.''/``hard.''\ denote the baseline / hardened training recipe. $\Delta$ is the gap from BP to the best local method on the row. Single seed throughout.}
\label{tab:resnet_readout_main}
\small
\setlength{\tabcolsep}{4.5pt}
\begin{tabular}{l l ccccc}
\toprule
Dataset & Depth & BP & LCE & SFF & DF & $\Delta_{\text{BP}-\text{best}}$ \\
\midrule
\multirow{4}{*}{MNIST}
 & R18  & 99.66 & 99.62 & 99.60 & 99.64 & $+0.02$ \\
 & R24  & 99.71 & 99.65 & 99.69 & 99.68 & $+0.02$ \\
 & R56  & 99.67 & 99.70 & 99.70 & 99.67 & $-0.03$ \\
 & R108 & 99.68 & 99.66 & 99.70 & 99.64 & $-0.02$ \\
\midrule
\multirow{4}{*}{CIFAR-10}
 & R18  & 95.15 & 89.06 & 87.20 & 87.16 & $+6.09$ \\
 & R24  & 95.42 & 89.03 & 87.09 & 87.19 & $+6.39$ \\
 & R56  & 95.68 & 89.59 & 88.96 & 89.53 & $+6.09$ \\
 & R108 & 95.40 & 90.02 & 89.88 & 89.75 & $+5.38$ \\
\midrule
\multirow{4}{*}{CIFAR-100 (base.)}
 & R18  & 77.74 & 64.64 & 62.98 & 62.56 & $+13.10$ \\
 & R24  & 78.02 & 64.38 & 63.12 & 62.70 & $+13.64$ \\
 & R56  & 77.91 & 65.27 & 63.55 & 63.96 & $+12.64$ \\
 & R108 & 78.19 & 67.11 & 63.94 & 65.27 & $+11.08$ \\
\midrule
\multirow{4}{*}{CIFAR-100 (hard.)}
 & R18  & 79.79 & 68.48 & 68.00 & 67.84 & $+11.31$ \\
 & R24  & 80.40 & 68.92 & 68.19 & 69.47 & $+10.93$ \\
 & R56  & 79.74 & 70.89 & 69.84 & 72.00 & $+7.74$ \\
 & R108 & 80.19 & 71.48 & 70.59 & 72.43 & $+7.76$ \\
\midrule
\multirow{4}{*}{Tiny-ImageNet (base.)}
 & R18  & 56.05 & 44.49 & 44.56 & 45.37 & $+10.68$ \\
 & R24  & 56.64 & 45.62 & 44.91 & 45.97 & $+10.67$ \\
 & R56  & 56.38 & 46.51 & 47.44 & 48.18 & $+8.20$ \\
 & R108 & 57.63 & 48.34 & 48.39 & 49.48 & $+8.15$ \\
\midrule
\multirow{4}{*}{Tiny-ImageNet (hard.)}
 & R18  & 64.69 & 51.26 & 51.77 & 51.52 & $+12.92$ \\
 & R24  & 65.10 & 52.02 & 52.77 & 52.64 & $+12.33$ \\
 & R56  & 66.35 & 55.35 & 55.05 & 55.47 & $+10.88$ \\
 & R108 & 67.24 & 55.31 & 55.28 & 55.93 & $+11.31$ \\
\bottomrule
\end{tabular}
\end{table}

\paragraph{Saturation regime: MNIST.}
At MNIST scale, BP, LCE, SFF, and DF all sit in $99.60$--$99.71\%$ across every depth (Table~\ref{tab:resnet_readout_main}); the BP--local gap is within measurement noise ($\lvert\Delta\rvert \leq 0.06$~pp at every depth, with BP \emph{behind} the best local at R56 and R108) and shows no monotone depth trend, simply because BP itself has saturated the task.
This row provides a saturation-regime calibration for the canonical-recipe protocol: the protocol's per-method asymmetries (SGD with $5{\times}10^{-4}$ weight decay for BP versus weight-decay-free per-block Adam for the local methods) do not produce a measurable gap when the task ceiling is reached, ruling out catastrophic recipe bias.
Whether smaller, hard-task-only recipe biases exist at the magnitudes observed below cannot be tested from MNIST alone.

\paragraph{Mild-difficulty regime: CIFAR-10---a clean split between local families.}
CIFAR-10 cleanly separates the two local-learning families.
BP-competitive locals stay close to BP: BP $95.15$--$95.68\%$ versus LCE $89.0$--$90.0\%$, SFF $87.1$--$89.9\%$, DF $87.2$--$89.8\%$ across ResNet-18--108, giving a BP--best-local $\Delta$ that sits in a tight $5.38$--$6.39$~pp band across all four depths (we treat the within-band variation as not separable from single-seed noise).
The label-injection FFA family, however, drops to $29$--$42\%$: SymBa $40.80$--$42.03\%$, Trifecta $38.06$--$41.41\%$, SCFF $31.97$--$33.66\%$, Vanilla FFA $31.14$--$34.94\%$, LayerCollab $29.29$--$32.45\%$.
The BP--FFA-family gap on CIFAR-10 therefore exceeds $50$~pp at every depth and does not narrow with depth.
The two local-learning families therefore behave very differently on CIFAR-10: BP-competitive locals trail BP by a stable ${\sim}6$~pp, while label-injection FFA methods trail BP by more than $50$~pp at every depth. Conclusions about ``MNIST/CIFAR-10 BP--local gaps'' should always specify which family is meant; conflating the two obscures a $\sim\!10\times$ difference in the size of the gap.

\paragraph{Hard regime: CIFAR-100 and Tiny-ImageNet.}
On CIFAR-100 and Tiny-ImageNet, the absolute BP advantage widens substantially.
Under the baseline recipe, the R108 BP--best-local gap is $11.08$~pp on CIFAR-100 and $8.15$~pp on Tiny-ImageNet; under the hardened recipe, it is $7.76$~pp and $11.31$~pp, respectively---both strictly larger than on MNIST/CIFAR-10.
The best local-learning result observed at R108 is DF at $72.43\%$ on CIFAR-100-hard and $55.93\%$ on Tiny-ImageNet-hard, both well below BP ($80.19\%$ and $67.24\%$).
This widening is the deep-ResNet counterpart of the CIFAR-10 CNN-benchmark trend in Section~\ref{app:cifar10_cnn_bench}: as task difficulty raises the effective minimum representation capacity, layer-local objectives---which see only locally-supervised signal---fall further behind end-to-end gradients.
Notice that Tiny-ImageNet behaves differently from CIFAR-100 under hardening: at R108 the gap \emph{grows} from $8.15$~pp (baseline) to $11.31$~pp (hardened), whereas on CIFAR-100 it shrinks from $11.08$ to $7.76$~pp.
We hypothesize that item~(iv) of the hardened recipe contributes to this asymmetry: only on Tiny-ImageNet does the hardened recipe restore the native $64{\times}64$ input resolution, and this resolution lift may amplify BP's end-to-end gradient access to fine-grained spatial detail more than it amplifies the per-block local objectives.
The hypothesis is suggested by the contrast with CIFAR-100 (where input remains $32{\times}32$ in both recipes) but \emph{is not isolated by an ablation}: we did not run Tiny-ImageNet baseline at $64{\times}64$ or hardened at $32{\times}32$, so resolution is not separable from the augmentation/weight-decay/label-smoothing axes of the hardened recipe.

\begin{figure}[ht!]
\centering
\includegraphics[width=0.95\linewidth]{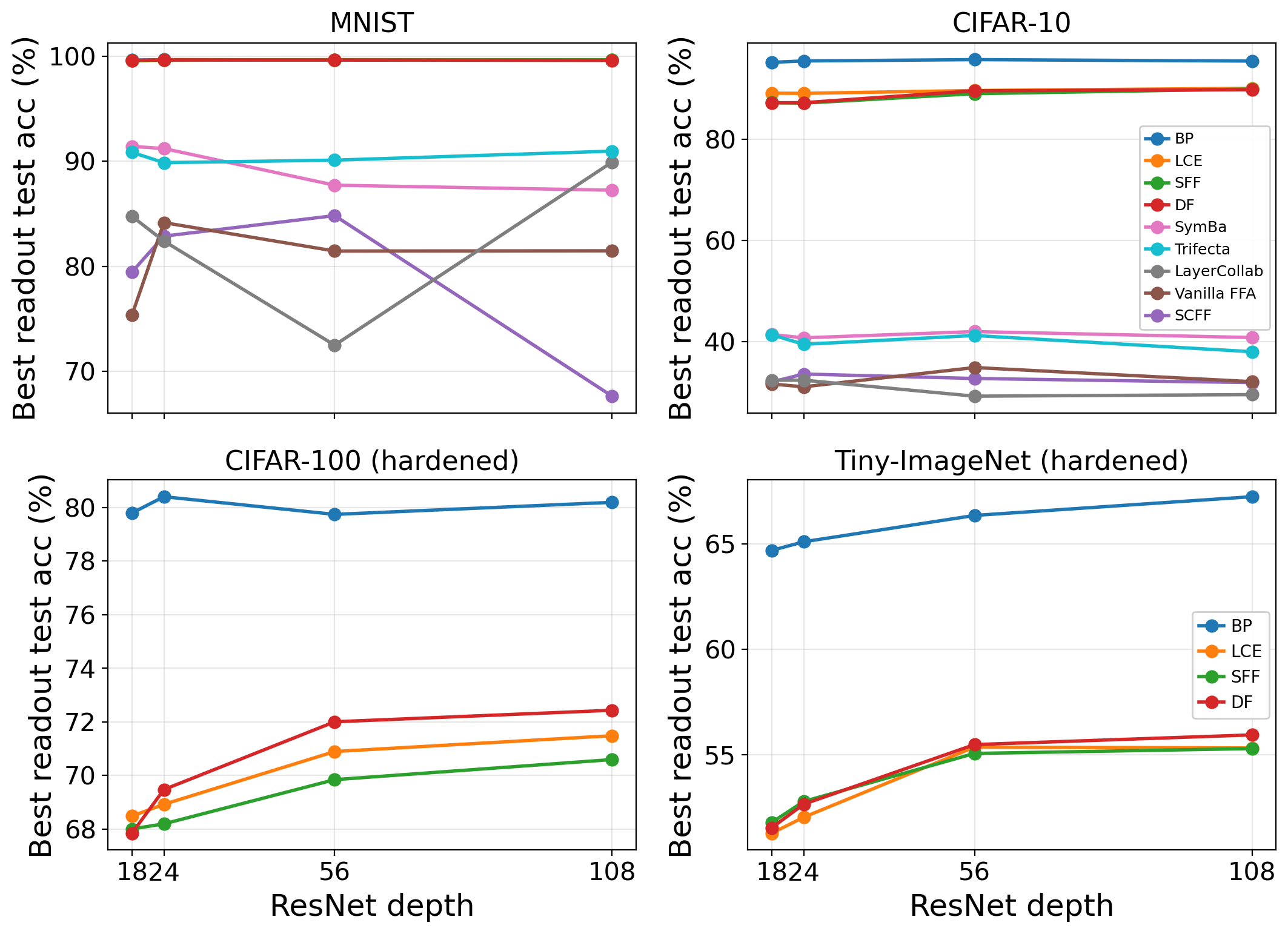}
\caption{Detached-readout best test accuracy versus ResNet depth across four task-difficulty regimes (MNIST and CIFAR-10 use the baseline recipe; CIFAR-100 and Tiny-ImageNet use the hardened recipe).
The BP--local-learning gap is essentially zero on MNIST (all four BP-competitive methods saturate), $5$--$6$~pp on CIFAR-10 for BP-competitive locals (and ${>}50$~pp for the label-injection FFA family; see Table~\ref{tab:resnet_readout_ffa_family}), and grows to $8$--$13$~pp on CIFAR-100/Tiny-ImageNet---where added depth and the hardened recipe partially reshape, but do not close, the gap.}
\label{fig:resnet_readout_difficulty}
\end{figure}

\paragraph{Depth narrows the gap on CIFAR-100; Tiny-ImageNet is largely depth-flat.}
Within each hard-regime block of Table~\ref{tab:resnet_readout_main}, the BP--DF gap shrinks with depth on CIFAR-100 (baseline $15.18 \to 12.92$~pp from R18 to R108; hardened $11.95 \to 7.76$~pp), but stays roughly flat on Tiny-ImageNet (baseline $10.68 \to 8.15$~pp; hardened $12.92 \to 11.31$~pp).
The CIFAR-100 hardened narrowing of ${\sim}4$~pp from R18 to R108 is the largest single depth effect we observe, and it is qualitatively in the direction predicted by the kernel-contraction picture (Theorem~\ref{thm:kernel_contraction}): if the dataset-scaled critical depth $m_\Sigma^* = \Theta(\log N / |\log\rho|)$ grows with task complexity, then for hard datasets the four-rung ResNet ladder $L \in \{18,\dots,108\}$ may sit below $m_\Sigma^*$, and added depth still recovers usable representation.
We emphasise that the kernel-contraction theorem is derived for fully-connected networks; extending it to residual networks (whose iterated kernel map carries the additive identity from skip connections) is beyond the present scope, and we do not estimate $m_\Sigma^*$ for the ResNet kernel.
We therefore read this paragraph as a directional consistency check, not a quantitative theory test.
On MNIST and CIFAR-10 the gap is essentially flat in $L$ because BP has already saturated and there is no headroom to recover.

\paragraph{Final-vs-best-epoch robustness.}
The reported metric \texttt{readout\_test\_acc\_best} is the maximum test accuracy across the $200$ training epochs, a known overestimator under selection on the test set.
We re-compute every BP--best-local gap in Table~\ref{tab:resnet_readout_main} using each run's final-epoch readout test accuracy; the resulting gap shifts are small and almost always in the same direction (final-epoch gap exceeds best-epoch gap by $0.02$ to $0.94$~pp on CIFAR variants, $0.05$ to $1.24$~pp on Tiny-ImageNet baseline, $0.15$ to $0.40$~pp on Tiny-ImageNet hardened, with a single $-0.28$~pp shift at Tiny-ImageNet hardened R18).
Crucially, the \emph{sign} of every gap is preserved at every cell, and the qualitative orderings (BP ahead by $\sim\!6$~pp on CIFAR-10, $\sim\!8$--$15$~pp on CIFAR-100 / Tiny-ImageNet) are identical under either metric.
The conclusions of this section are therefore robust to the best-vs-final-epoch metric choice.

\begin{figure}[ht!]
\centering
\includegraphics[width=0.82\linewidth]{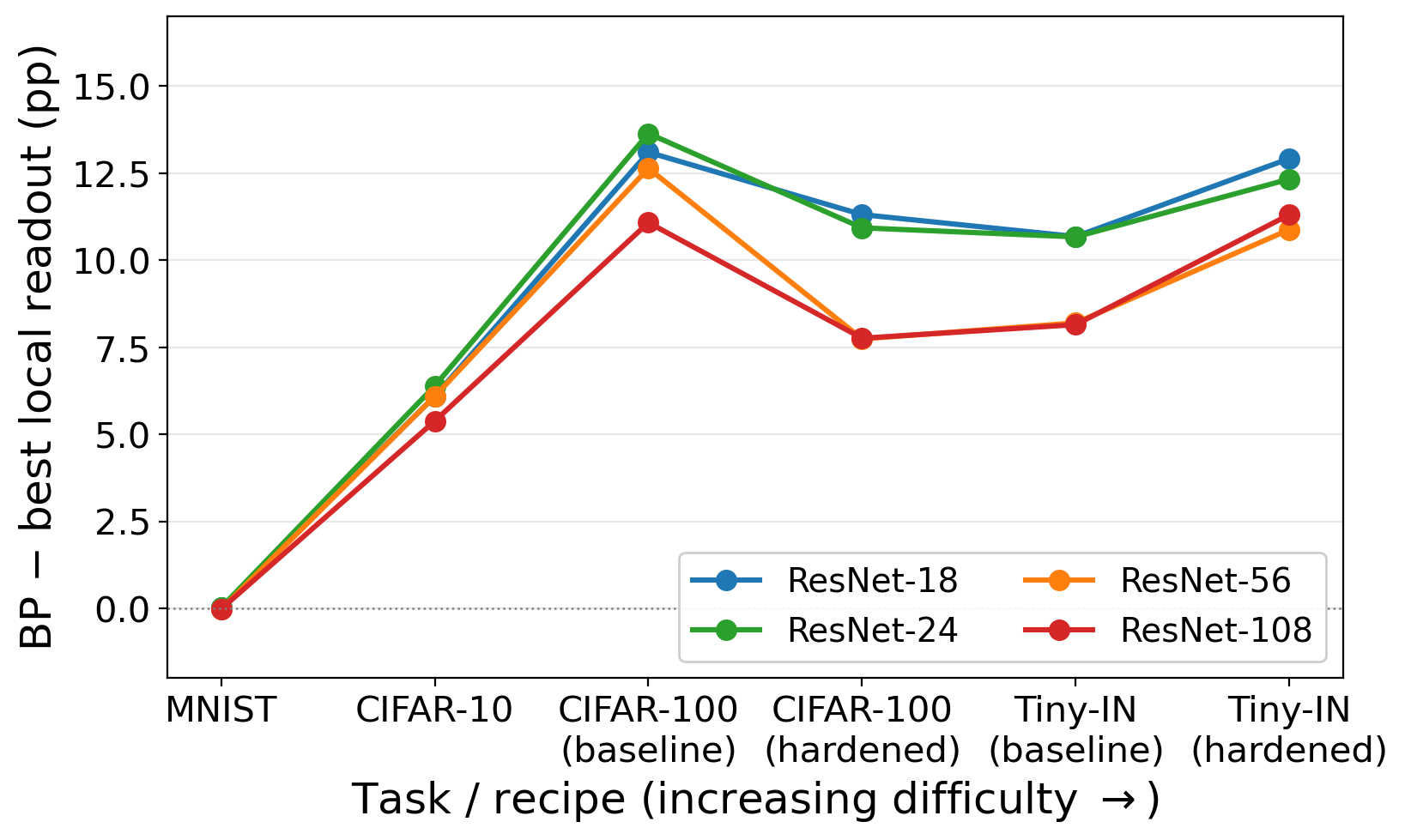}
\caption{Headline view of the data in Table~\ref{tab:resnet_readout_main}: BP minus best BP-competitive-local detached-readout accuracy (pp) plotted across the six (dataset, recipe) regimes ordered by approximate difficulty.
One line per ResNet depth.
The gap is essentially zero on MNIST (BP saturates), $5$--$6$~pp on CIFAR-10, and rises into the $7$--$15$~pp band on CIFAR-100 / Tiny-ImageNet across both recipes.
At every regime, the within-depth spread is $\leq 3$~pp---small relative to the difficulty-driven trend.
Single-seed estimates throughout.}
\label{fig:resnet_gap_vs_difficulty}
\end{figure}

\subsubsection{Native head versus detached readout}
\label{ssec:resnet_native_vs_readout}

A second readable signal in this dataset is how each algorithm's \emph{native} test accuracy compares to its detached readout accuracy on the same trunk.
The native classifier is whatever head ships with the algorithm: a single linear head on top of the last feature for BP, a per-block supervised head averaged across all blocks for SFF, the last-block prediction for LCE and DF, and a goodness-aggregation rule for the label-injection family.
The detached readout, in contrast, is a fixed all-stage concatenation linear probe trained jointly with the trunk.
A systematic difference between the two reveals whether the trunk's representation is more or less class-readable than the algorithm's own classifier consumes.

\begin{table}[ht!]
\centering
\caption{Native test accuracy versus detached-readout accuracy (\%) on CIFAR-100 and Tiny-ImageNet under the hardened recipe. ``$\Delta_{r-n}$''\ is readout minus native; positive means the detached readout extracts more class information from the same trunk than the algorithm's native head does. The pattern is algorithm-specific: BP and DF show $|\Delta_{r-n}|<1$~pp; LCE's readout exceeds its native head by up to $+1.0$~pp; SFF's native head consistently exceeds its readout by up to $-3.20$~pp because SFF's native logits aggregate \emph{all} per-block heads.}
\label{tab:native_vs_readout}
\small
\setlength{\tabcolsep}{4pt}

\footnotesize
\setlength{\tabcolsep}{3.2pt}
\renewcommand{\arraystretch}{1.05}

\begin{tabular}{l l ccc ccc ccc ccc}
\toprule
& & \multicolumn{3}{c}{BP} & \multicolumn{3}{c}{LCE} & \multicolumn{3}{c}{SFF} & \multicolumn{3}{c}{DF} \\
\cmidrule(lr){3-5}\cmidrule(lr){6-8}\cmidrule(lr){9-11}\cmidrule(lr){12-14}
Dataset & Depth & native & read. & $\Delta_{r-n}$ & native & read. & $\Delta_{r-n}$ & native & read. & $\Delta_{r-n}$ & native & read. & $\Delta_{r-n}$ \\
\midrule
\multirow{4}{*}{C-100 (hard.)}
 & R18  & 79.85 & 79.79 & $-0.06$ & 67.98 & 68.48 & $+0.50$ & 69.61 & 68.00 & $-1.61$ & 67.31 & 67.84 & $+0.53$ \\
 & R24  & 80.38 & 80.40 & $+0.02$ & 68.80 & 68.92 & $+0.12$ & 70.07 & 68.19 & $-1.88$ & 69.03 & 69.47 & $+0.44$ \\
 & R56  & 79.77 & 79.74 & $-0.03$ & 70.16 & 70.89 & $+0.73$ & 71.88 & 69.84 & $-2.04$ & 71.25 & 72.00 & $+0.75$ \\
 & R108 & 79.91 & 80.19 & $+0.28$ & 71.02 & 71.48 & $+0.46$ & 72.81 & 70.59 & $-2.22$ & 72.56 & 72.43 & $-0.13$ \\
\midrule
\multirow{4}{*}{T-IN (hard.)}
 & R18  & 65.06 & 64.69 & $-0.37$ & 50.55 & 51.26 & $+0.71$ & 52.67 & 51.77 & $-0.90$ & 51.08 & 51.52 & $+0.44$ \\
 & R24  & 65.46 & 65.10 & $-0.36$ & 51.01 & 52.02 & $+1.01$ & 53.13 & 52.77 & $-0.36$ & 52.48 & 52.64 & $+0.16$ \\
 & R56  & 66.60 & 66.35 & $-0.25$ & 54.40 & 55.35 & $+0.95$ & 56.53 & 55.05 & $-1.48$ & 54.84 & 55.47 & $+0.63$ \\
 & R108 & 67.50 & 67.24 & $-0.26$ & 54.92 & 55.31 & $+0.39$ & 58.48 & 55.28 & $-3.20$ & 55.50 & 55.93 & $+0.43$ \\
\bottomrule
\end{tabular}
\end{table}

Table~\ref{tab:native_vs_readout} reveals an algorithm-specific representation geometry that the readout column alone obscures.
BP places essentially all of its linearly-readable class information in the penultimate feature: $|\Delta_{r-n}| \leq 0.37$~pp at every cell, so the all-stage probe finds nothing beyond what BP's own last-feature classifier already uses.
LCE and DF leave a small but consistent class signal in earlier residual stages that their last-block native heads discard; the all-stage readout recovers it ($+0.4$ to $+1.0$~pp).
SFF distributes class information across all per-block CE heads---its native logit is a depth-wise average---so a single linear projection over concatenated hidden features cannot match the multi-head average ($\Delta_{r-n}$ down to $-3.20$~pp at Tiny-ImageNet R108).
The geometric reading complements the main BP-vs-local thesis: gap differences in Table~\ref{tab:resnet_readout_main} reflect not only how much class information the trunk encodes, but also how that information is spatially distributed across stages, and the all-stage readout protocol is a slightly conservative probe for distributed representations like SFF's while being slightly liberal for last-feature-concentrated representations like BP's.

\subsubsection{Label-injection FFA family: per-algorithm characterisation and failure modes}
\label{ssec:resnet_ffa_family}

\begin{table}[ht!]
\centering
\caption{Label-injection FFA family detached-readout accuracy (\%) on the same ResNet ladder. These methods are excluded from CIFAR-100 / Tiny-ImageNet, where they failed to train stably across depths under their published recipes.}
\label{tab:resnet_readout_ffa_family}
\small
\setlength{\tabcolsep}{5pt}
\begin{tabular}{l l ccccc}
\toprule
Dataset & Depth & SCFF & Vanilla FFA & SymBa & LayerCollab & Trifecta \\
\midrule
\multirow{4}{*}{MNIST}
 & R18  & 79.44 & 75.32 & 91.43 & 84.76 & 90.87 \\
 & R24  & 82.87 & 84.16 & 91.23 & 82.40 & 89.88 \\
 & R56  & 84.83 & 81.45 & 87.73 & \textbf{72.47} & 90.12 \\
 & R108 & \textbf{67.60} & 81.47 & 87.25 & 89.94 & 90.98 \\
\midrule
\multirow{4}{*}{CIFAR-10}
 & R18  & 32.09 & 31.69 & 41.46 & 32.45 & 41.41 \\
 & R24  & 33.66 & 31.14 & 40.80 & 32.45 & 39.52 \\
 & R56  & 32.78 & 34.94 & 42.03 & 29.29 & 41.26 \\
 & R108 & 31.97 & 32.16 & 40.86 & 29.60 & 38.06 \\
\bottomrule
\end{tabular}
\end{table}

The five label-injection methods sit on the same backbone with the same per-block Adam recipe and differ only in their goodness aggregation and negative-sample strategy.
We characterise them in three groups.
\emph{SymBa and Trifecta} are the strongest of the family on MNIST ($87$--$92\%$) and on CIFAR-10 ($38$--$42\%$), and both are roughly depth-flat across the ResNet ladder.
A plausible mechanism is that SymBa's $\alpha$-scaled symmetric margin and Trifecta's odd/even-layer update schedule both keep the per-block goodness scale bounded as depth grows, which the simpler sigmoid-contrastive form does not; we do not measure block-wise goodness magnitudes here so we report this as a hypothesis rather than a finding.
\emph{Vanilla FFA and SCFF} use the standard sigmoid contrastive form with two distinct negative strategies (random wrong label / image-permutation derangement).
On MNIST both sit at $75$--$85\%$ except for the SCFF R108 outlier discussed below; on CIFAR-10 both collapse to $31$--$35\%$ at every depth, well below SymBa/Trifecta.
The CIFAR-10 deficit is consistent with the archived CIFAR-10 goodness ablations from the repr-gap branch---norm-style goodness has higher Lipschitz smoothness than projection-style goodness, which limits the per-block convergence rate---but separating goodness function from negative strategy in the present ResNet runs would require a controlled ablation we did not perform.
\emph{LayerCollab} adds a detached collaboration offset $\gamma_{<t} = \sum_{t'<t} G_{t'}$ to each layer's goodness; on MNIST it is unstable across depths ($72.47\%$ at R56, $89.94\%$ at R108---a $17.47$~pp swing under fixed seed and recipe), and on CIFAR-10 it sits at $29.29$--$32.45\%$, slightly below Vanilla FFA.
\emph{Single-cell anomalies.}
Two single cells deviate sharply from their depth-neighbours: LayerCollab MNIST R56 $=72.47\%$ (versus $84.76$/$82.40$/$89.94$ elsewhere) and SCFF MNIST R108 $=67.60\%$ (versus $79.44$/$82.87$/$84.83$).
Both runs completed normally (W\&B status \texttt{ok}); we report them as observed.
With a single seed we cannot distinguish recipe-specific instability of FFA-family contrastive losses from seed-specific outliers, and a multi-seed re-run would be required to characterise the source.

\subsubsection{Hardening recipe: gain decomposition}
\label{ssec:resnet_hardening_gain}

\begin{table}[ht!]
\centering
\caption{Detached-readout accuracy gain (pp) from baseline to hardened recipe, per (depth, algorithm). Last column averages across the four BP-competitive algorithms.}
\label{tab:resnet_hardening_gains}
\small
\setlength{\tabcolsep}{5pt}
\begin{tabular}{l l ccccc}
\toprule
Dataset & Depth & BP & LCE & SFF & DF & avg \\
\midrule
\multirow{4}{*}{CIFAR-100}
 & R18  & $+2.05$ & $+3.84$ & $+5.02$ & $+5.28$ & $+4.05$ \\
 & R24  & $+2.38$ & $+4.54$ & $+5.07$ & $+6.77$ & $+4.69$ \\
 & R56  & $+1.83$ & $+5.62$ & $+6.29$ & $+8.04$ & $+5.45$ \\
 & R108 & $+2.00$ & $+4.37$ & $+6.65$ & $+7.16$ & $+5.05$ \\
\midrule
\multirow{4}{*}{Tiny-ImageNet}
 & R18  & $+8.64$ & $+6.77$ & $+7.21$ & $+6.15$ & $+7.19$ \\
 & R24  & $+8.46$ & $+6.40$ & $+7.86$ & $+6.67$ & $+7.35$ \\
 & R56  & $+9.97$ & $+8.84$ & $+7.61$ & $+7.29$ & $+8.43$ \\
 & R108 & $+9.61$ & $+6.97$ & $+6.89$ & $+6.45$ & $+7.48$ \\
\bottomrule
\end{tabular}
\end{table}

The robust observation across Table~\ref{tab:resnet_hardening_gains} is that BP's hardening gain shifts position between datasets: smallest on CIFAR-100 ($+1.83$ to $+2.38$~pp---essentially depth-flat) and largest on Tiny-ImageNet ($+8.46$ to $+9.97$~pp).
The three layer-local methods sit at $+3.84$ to $+8.04$~pp on CIFAR-100 and $+6.15$ to $+8.84$~pp on Tiny-ImageNet, all positive but with per-cell variation comparable to single-seed noise.
We do not read the relative ordering of the three local methods within each dataset as significant: per-cell differences of $0.5$--$2$~pp under one seed cannot reliably rank them.
The dataset-level signal we do read is BP's gain ranking: on CIFAR-100 BP gains the least (consistent with BP's baseline being close to its own near-ceiling), while on Tiny-ImageNet BP gains the most.
The most parsimonious explanation is item~(iv) of the hardened recipe---only on Tiny-ImageNet does the hardened recipe restore the native $64{\times}64$ resolution---but as flagged in Section~\ref{ssec:resnet_main_results} we did not run the resolution ablation that would isolate this from augmentation, weight decay, and label smoothing.
A weaker-but-confirmed reading: the average $+4.81$~pp CIFAR-100 hardening gain is consistent with regularisation acting more on the lower-baseline local methods than on the already-near-ceiling BP, whereas the larger $+7.61$~pp average Tiny-ImageNet hardening gain mixes regularisation with the spatial-resolution lift in a way our experiments do not separate.

\subsubsection{Summary and connection to other sections}
\label{ssec:resnet_summary}

The BP--local-learning gap behaves \emph{monotonically in task difficulty} on deep ResNets under the per-method canonical-recipe protocol: near zero on MNIST, $5$--$6$~pp on CIFAR-10 for BP-competitive locals (and $>50$~pp for the label-injection FFA family), and $8$--$13$~pp on CIFAR-100/Tiny-ImageNet at R108.
Two intervention axes can shrink---but do not close---the gap on hard tasks: (i) increasing ResNet depth, where the largest single effect we observe is the ${\sim}4$~pp narrowing on CIFAR-100 hardened from R18 to R108, qualitatively in the direction predicted by the kernel-contraction picture (Theorem~\ref{thm:kernel_contraction}; we treat this as a directional consistency check rather than a quantitative theory test, and Tiny-ImageNet's depth-flat gap shows the effect is not universal); and (ii) the hardened training recipe, whose gain decomposition in Table~\ref{tab:resnet_hardening_gains} reveals dataset-dependent BP-versus-local ordering that we cannot fully separate from the Tiny-ImageNet $32 \to 64$ resolution lift.
The MNIST/CIFAR-10 ``small gap'' of the BP-competitive local family is therefore not evidence that local learning matches BP; it is the visible-light region of a saturation effect that disappears once the task is hard enough to expose the optimization gap.
The pattern dovetails with the LM scaling-law results of Section~\ref{sec:experiments}---local methods win or tie at low-difficulty / shallow-depth corners, and the gap opens as either axis is pushed---and complements the algorithm-axis result of Section~\ref{app:cifar10_cnn_bench}, where the gap to BP at fixed task (CIFAR-10) varies sharply with the choice of local-learning rule.
Two limitations of the present study are inherent to the protocol and deserve emphasis: \emph{(i)~single seed.}
All $136$ runs use seed $0$; we report numbers as observed without per-seed variance.
The MNIST cluster within $0.06$~pp at every depth provides a calibration that the protocol is not high-variance at saturation, but we cannot statistically discriminate the small CIFAR-10 trends with depth ($+6.39 \to +5.38$~pp) without additional seeds.
\emph{(ii)~per-method recipe asymmetries.}
BP uses SGD with weight decay $5{\times}10^{-4}$ (baseline) or $10^{-3}$ (hardened); the layer-local methods use weight-decay-free per-block Adam under both recipes, with manual step decay rather than cosine annealing.
We adopt these asymmetries deliberately as the canonical-per-method choice rather than imposing a uniform optimizer (see Section~\ref{ssec:resnet_setup_protocol}), and the MNIST result confirms they do not produce a systematic BP advantage on saturated tasks.
A future study with matched optimizers would isolate whether the residual hard-task gap reflects pure optimization-paradigm difference (end-to-end gradient access) or recipe asymmetry; under the present protocol, the two are not separately identified.

\subsection{Language-model pre-training}
\label{app:llm_pretraining}

Table~\ref{tab:chinchilla} uses decoder-only Transformers trained on the OpenWebText subset: GPT-2 BPE tokenization with vocabulary size $50{,}257$, $50$M training tokens, and $0.5$M validation tokens.
All runs use context length $256$, dropout $0.1$, batch size $32$, and therefore $32 \times 256 = 8{,}192$ tokens per optimization step.
The five model scales are
\[
(L, d, h) \in \{(2,128,4), (4,256,4), (6,384,6), (8,512,8), (12,768,12)\},
\]
corresponding to the tiny, small, medium, large, and xlarge rows of Table~\ref{tab:chinchilla}.
The ``Params'' and ``Budget'' columns in that table refer to the backbone parameter count and the Chinchilla-style target token budget $20 \times \text{params}$, giving $0.14$B, $0.32$B, $0.60$B, $1.02$B, and $2.48$B tokens respectively.
Because the OWT subset contains only $50$M training tokens, these budgets correspond to repeated passes through the same stream, from about $2.8\times$ passes at tiny to $49.6\times$ passes at xlarge.

The optimization schedule is shared across methods.
We use AdamW with peak learning rate $6 \times 10^{-4}$, cosine decay to $6 \times 10^{-5}$, warmup for $\min(400, \text{max\_iters}/20)$ steps, and gradient clipping at norm $1.0$.
Validation perplexity is evaluated every $5\%$ of the training horizon (at least every $100$ steps) on $10$ random validation batches, and the table reports the best validation perplexity observed during training.
For BP and FFA, perplexity is computed with the shared tied \texttt{lm\_head} after the final layer norm; for LCE, perplexity is computed directly from the last block's untied CE head, matching the objective used during training.

The three methods differ only in how the local training signal is constructed.
BP performs standard end-to-end next-token cross-entropy on the full model.
The LCE column corresponds to the \texttt{ffa\_lce\_untied} mode: each block is trained by a detached full-softmax cross-entropy loss with its own untied vocabulary head, again using per-block AdamW plus a separate embedding optimizer.
The table's parameter and budget columns still count only the shared Transformer backbone, so the LCE runs incur a substantially larger effective parameter count because of the extra per-layer untied heads (for example, the xlarge LCE run has $586.9$M total parameters versus $123.7$M backbone parameters).

%% file: appendix/CIFAR10-CNN_bench.tex
\begin{tabular}{@{}llccc@{}}
\toprule
Algorithm & Category & CNN3 & CNN6 & CNN9 \\
\midrule
  Standard BP & Baseline & 79.26 $\pm$ 0.24 & 85.30 $\pm$ 0.15 & 86.12 $\pm$ 0.28 \\
  \midrule
  Vanilla FFA \cite{hinton2022forward} & FFA & 47.98 $\pm$ 1.71 & 52.31 $\pm$ 1.29 & 54.05 $\pm$ 1.99 \\
  SymBa \cite{lee2023symba} & FFA-contrastive & 60.96 $\pm$ 1.65 & 61.93 $\pm$ 0.90 & 63.69 $\pm$ 0.50 \\
  SCFF \cite{chen2024selfcontrastive} & FFA-contrastive & 50.04 $\pm$ 1.95 & 50.75 $\pm$ 1.99 & 53.47 $\pm$ 1.85 \\
  Trifecta \cite{dooms2023trifecta} & FFA-extended & 57.51 $\pm$ 0.57 & 60.52 $\pm$ 0.34 & 62.42 $\pm$ 1.05 \\
  Scodellaro CNN-FFA \cite{scodellaro2025cnn} & FFA-extended & 38.41 $\pm$ 0.65 & 43.12 $\pm$ 0.97 & 43.58 $\pm$ 2.38 \\
  Distance-Forward \cite{wu2024distance} & FFA-extended & 74.88 $\pm$ 0.32 & 75.40 $\pm$ 0.19 & 75.27 $\pm$ 0.67 \\
  SFF \cite{krutsylo2025scalable} & Local-CE & 80.46 $\pm$ 0.45 & 78.49 $\pm$ 0.27 & 77.20 $\pm$ 0.14 \\
  Greedy Layerwise \cite{belilovsky2019greedy} & Local-CE & 78.45 $\pm$ 0.32 & 77.95 $\pm$ 0.10 & 77.69 $\pm$ 0.56 \\
  N{\o}kland $L_{\mathrm{pred}}$ \cite{nokland2019local} & Local-CE & 82.45 $\pm$ 0.32 & 82.73 $\pm$ 0.08 & 82.74 $\pm$ 0.56 \\
  N{\o}kland $L_{\mathrm{sim}}$ \cite{nokland2019local} & Local-sim & 73.06 $\pm$ 0.35 & 72.61 $\pm$ 0.53 & 71.85 $\pm$ 0.84 \\
  N{\o}kland $L_{\mathrm{pred}+\mathrm{sim}}$ \cite{nokland2019local} & Local-combined & 82.82 $\pm$ 0.10 & 82.51 $\pm$ 0.26 & 82.75 $\pm$ 0.23 \\
  Greedy InfoMax \cite{lowe2019greedy} & InfoNCE-local & N/A & 45.81 $\pm$ 0.64 & 45.65 $\pm$ 0.07 \\
  Dendritic Localized \cite{lv2025dendritic} & Dendritic-local & 82.37 $\pm$ 0.13 & 83.05 $\pm$ 0.35 & 82.56 $\pm$ 0.39 \\
  \midrule
  AugLocal \cite{ma2024auglocal} & Aux-net & 80.68 $\pm$ 0.37 & 84.79 $\pm$ 0.40 & 86.69 $\pm$ 0.23 \\
  Difference Target Prop \cite{lee2015difference} & Target-Prop & 75.54 $\pm$ 0.16 & 66.84 $\pm$ 2.32 & 56.94 $\pm$ 2.66 \\
  \midrule
  Counter-Current \cite{kao2024countercurrent} & Dual-path feedback & 54.09 $\pm$ 0.33 & 55.57 $\pm$ 0.64 & 55.64 $\pm$ 0.78 \\
  Layer Collaboration \cite{lorberbom2024layer} & Multi-layer-coord & 50.41 $\pm$ 2.69 & 58.34 $\pm$ 1.51 & 60.84 $\pm$ 1.83 \\
  \midrule
  Forward Projection \cite{oshea2025forwardprojection} & Random-features & 58.96 $\pm$ 0.77 & 55.33 $\pm$ 0.71 & 54.70 $\pm$ 0.85 \\
  \midrule
  PC ($K=1$) & Energy-based & 66.24 $\pm$ 1.32 & 59.72 $\pm$ 1.35 & 52.35 $\pm$ 1.14 \\
  PC ($K=5$) & Energy-based & 78.82 $\pm$ 0.28 & 81.74 $\pm$ 0.05 & 64.27 $\pm$ 1.90 \\
  PC ($K=20$) & Energy-based & 79.03 $\pm$ 0.28 & 83.67 $\pm$ 0.40 & 83.01 $\pm$ 0.21 \\
\bottomrule
\end{tabular}

%% file: appendix/F_taxonomy.tex
\section{Taxonomy of Local Learning Baselines}
\label{app:baseline_taxonomy}

This appendix provides a systematic comparison of local learning methods relevant to the price of locality framework. 

\paragraph{FFA-NCE (Hinton's original~\cite{hinton2022forward}).}
Layer-local contrastive loss with $K \ll |V|$ random negatives. 

\paragraph{FFA-LCE.}
Each layer $\ell$ maintains an untied classification head $U^{(\ell)} \in \mathbb{R}^{V \times d}$ and minimizes full-softmax cross-entropy.

\paragraph{Greedy Layerwise CE~\cite{belilovsky2019greedy}.}
Layers are trained \emph{sequentially}: layer $j$ is optimized with an auxiliary classifier $C_{\gamma_j}$ (cross-entropy loss), then frozen before training layer $j{+}1$. 

\paragraph{Trifecta / SymBa~\cite{lee2023symba,krutsylo2025scalable}.}
Symmetric contrastive loss $\mathcal{L}_{\text{SymBa}} = \log(1 + e^{-\alpha(G_{\text{pos}} - G_{\text{neg}})})$ with BatchNorm and overlapping local updates (OLU).

\paragraph{AugLocal~\cite{ma2024auglocal}.}
Ma et al.\ equip each layer $\ell$ with a depth-decaying augmented auxiliary network whose depth $k_\ell = k_{\max}(1 - (\ell-1)/(L-1))$ decreases linearly with layer index.
This lets shallow layers receive rich, multi-layer supervised signals while deep layers reduce to single linear heads, balancing local signal quality against auxiliary parameter cost.

\paragraph{$N$-wise interlocking backpropagation~\cite{gomez2022interlocking}.}
Gomez et al.\ partition the network into $A$ modules (one per accelerator) and allow gradients to propagate back across exactly $N{-}1$ module boundaries before being stopped by a detach.
Setting $N{=}1$ recovers pure local learning (identical to the greedy CE baseline with $k{=}1$ auxiliary heads); $N{=}A$ recovers standard end-to-end BP.
For intermediate $N$, the step time per batch is $2N$ (vs.\ $2A$ for end-to-end), so increasing $N$ trades parallelism for accuracy.

\paragraph{Local Error Signals / predsim~\cite{nokland2019local}.}
N{\o}kland \& Eidnes systematically compare three local objectives: (i)~$\mathcal{L}_{\text{pred}}$: per-layer cross-entropy (functionally identical to FFA-LCE's mechanism and to the $k{=}1$ case of~\cite{belilovsky2019greedy}), (ii)~$\mathcal{L}_{\text{sim}}$: similarity matching between activation inner products and label similarity matrices, and (iii)~$\mathcal{L}_{\text{predsim}} = (1{-}\beta)\mathcal{L}_{\text{pred}} + \beta\mathcal{L}_{\text{sim}}$ with $\beta{=}0.99$. 

\paragraph{Local Supervision (LS).}
Each layer has an independent softmax classifier; gradients do not propagate to preceding layers. 

\paragraph{Direct Feedback Alignment (DFA)~\cite{lillicrap2016random}.}
Fixed random matrices $B_\ell$ project the output-layer error directly to each layer: $\Delta W_\ell \propto B_\ell \boldsymbol{\delta}_L$. Unlike purely local methods, DFA has access to the \emph{global} error signal $\boldsymbol{\delta}_L$, but the random projection degrades signal quality with depth. 

\paragraph{Forward Projection (FP).}
Each layer solves a one-shot pseudo-inverse: $W_\ell = Y_\ell H_\ell^+$. No iterative optimization, no inter-layer communication. 

\paragraph{FFA-MSP.}
To diversify per-layer target horizons and resist the rank-$1$ kernel collapse of Theorem~\ref{thm:kernel_contraction}, layer $\ell$ predicts the token $K_\ell$ steps ahead via a per-layer projection $P^{(\ell)}\in\R^{d\times d}$ (initialised to $I$).
Here $\bx=(x_1,\ldots,x_T)$ denotes the token sequence, $\bx_t=(x_1,\ldots,x_t)$ is the causal context up to position $t$, $\be_{v}\in\R^{d}$ is the token embedding of $v\in\ccalV$, and $K_{\max}\in\naturals$ is a fixed horizon cap.
The layer-$\ell$ goodness is
\begin{align}\label{eq:msp_goodness}
    \goodness^{(\ell)}_{\mathrm{MSP}}\!\bigl(\bh^{(\ell)}(\bx_t),\,\bx\bigr)
    = \bigl(P^{(\ell)}\bh^{(\ell)}(\bx_t)\bigr)^{\!\top}\be_{x_{t+K_\ell}},
    \qquad K_\ell = \min(2^{\ell},\,K_{\max}),
\end{align}
substituted into the contrastive loss~\eqref{eq:ffa_loss}. Exponentially growing horizons force layer $\ell$ to encode information at a distinct temporal scale. Despite this structural advantage, MSP suffers a \emph{training--evaluation mismatch}: the evaluation probe targets $x_{t+1}$ while layer $\ell$ was optimized for $x_{t+K_\ell}$.

\paragraph{Predictive Coding (PC)~\cite{rao1999predictive,whittington2017approximation,millidge2022predictive}.}
PC alternates inference of hidden states with local parameter updates. 
Collect all layer weights into $\bW$ and all inferred states into $\bh$. The optimized output variable and sample loss of the generic EBM formulation are identified with the predictive-coding classifier output and cross-entropy loss by $y^{\prime} := c(\bh^{(L)})$. 
Its energy and population objective are
\begin{align}
    \ccalE(\bh,\bW;x,y)
    &:=
    \ell\bigl(c(\bh^{(L)}),y\bigr)
    +\frac{\beta}{2}\sum_{\ell=1}^{L}\frac{1}{d^{(\ell)}}\bigl\|\be^{(\ell)}\bigr\|^2,
\end{align}
where  $d^{(\ell)}$ is the layer dimension and $\beta>0$ weights prediction error, and the prediction error at layer $\ell$ is
\begin{align}
    \be^{(\ell)}
    :=
    \bh^{(\ell)}-f^{(\ell)}\bigl(\bh^{(\ell-1)};W^{(\ell)}\bigr).
\end{align}
With these notations, the predictive-coding population objective is
\begin{align}
    \cL_{\mathrm{PC}}(\bW)
    :=
    \mbE_{x,y}\left[\min_{\bh}\ccalE(\bh,\bW;x,y)\right].
\end{align}
During the training, an inner loop of $K$ inference steps is performed to minimize the energy $\ccalE(\bh,\bW;x,y)$ with respect to the hidden states $\bh$.
Once the hidden states $\be_K^{(\ell)}$ has been inferred, the block update is local:
\begin{align}
    \Delta W^{(\ell)}_{\mathrm{PC}}
    \;\propto\;
    \beta\,\mathrm{D}_{W^{(\ell)}}f^{(\ell)}
    (\widehat{\bh}^{(\ell-1)};W^{(\ell)})^\top\be_K^{(\ell)}.
\end{align}
At the relaxation fixed point, $\beta\be_\infty^{(\ell)}=-\nabla_{\widehat{\bh}^{(\ell)}}\ell_{\mathrm{CE}}$, so this update converges to the corresponding BP parameter update~\cite{whittington2017approximation,millidge2022predictive}. This limit relies on freezing predictions and Jacobians and clamping the negative CE output gradient; a free-output energy ablation does not establish it.

%% file: appendix/G_acknowledgement.tex
\section{Broader Impact}
\label{sec:broader_impact}

This paper provides a theoretical analysis of the Forward-Forward Algorithm (FFA) and explains why local learning rules can underperform backpropagation as depth and scale increase. One potential positive impact of this work is that it can help guide the design of more reliable and better-understood alternatives to backpropagation in next generation AI acceleration hardware. At the same time, our main conclusion is cautionary rather than enabling: the theory identifies structural limitations of current FFA-style methods, including an irreducible optimization floor and representational collapse. In that sense, the paper is more useful for clarifying the limits of local learning than for directly expanding the capabilities of deployed machine learning systems. We therefore do not identify any specific societal risks that arise uniquely from the theoretical results in this work beyond the general risks already associated with machine learning research and deployment.

\paragraph{LLM Use Statement.}
\label{sec:LLM-use}
LLMs were used only as research assistants for implementation-related tasks, including helping draft experimental code, generating preliminary proof drafts, and preparing data-visualization scripts and figures. Any proof drafts produced with LLM assistance were subsequently rewritten by the authors to ensure mathematical correctness and readability. LLMs were not part of the core scientific methodology: the theoretical framing, theorem statements, proofs, experimental design, result verification, and final scientific claims were determined and checked by the authors.